\newtheorem{theorem}{Theorem}
\newtheorem{lemma}{Lemma}
\newtheorem{proposition}{Proposition}
\newcommand{\norm}[1]{\left\lVert#1\right\rVert}
\newcommand{\eg}{\emph{e.g.}}
\newcommand{\ie}{\emph{i.e.}}
\newcommand*\mystrut[1]{\vrule width0pt height0pt depth#1\relax}
\renewcommand{\captionlabelfont}{\scriptsize}
\begin{document}

\twocolumn[

\aistatstitle{Learning with Hyperspherical Uniformity}
\vspace{-1.8mm}
\aistatsauthor{\small Weiyang Liu\textsuperscript{1,2,*}~~Rongmei Lin\textsuperscript{3,*}~~Zhen Liu\textsuperscript{4,*}~~Li Xiong\textsuperscript{3}~~Bernhard Schölkopf\textsuperscript{2}~~Adrian Weller\textsuperscript{1,5}}
\vspace{1.4mm}
\aistatsaddress{\small \textsuperscript{1}University of Cambridge~~~\textsuperscript{2}MPI-IS Tübingen~~~\textsuperscript{3}Emory University~~~\textsuperscript{4}Université de Montréal~~~\textsuperscript{5}Alan Turing Institute}]

\begin{abstract}
\vspace{-1.6mm}
  Due to the over-parameterization nature, neural networks are a powerful tool for nonlinear function approximation. In order to achieve good generalization on unseen data, a suitable inductive bias is of great importance for neural networks. One of the most straightforward ways is to regularize the neural network with some additional objectives. $\ell_2$ regularization serves as a standard regularization for neural networks. Despite its popularity, it essentially regularizes one dimension of the individual neuron, which is not strong enough to control the capacity of highly over-parameterized neural networks. Motivated by this, hyperspherical uniformity is proposed as a novel family of relational regularizations that impact the interaction among neurons. We consider several geometrically distinct ways to achieve hyperspherical uniformity. The effectiveness of hyperspherical uniformity is justified by theoretical insights and empirical evaluations.
\end{abstract}

\vspace{-2.9mm}
\section{Introduction}
\vspace{-1.6mm}

As one of the most effective ways to control the capacity of over-parameterized neural networks, regularization serves an important role to prevent overfitting and improve generalization. Current prevailing weight regularizations can be divided into two major categories: \emph{Individual regularization} and \emph{relational regularization}. Taking $n$ $d$-dimensional neurons $\bm{w}_1,\cdots,\bm{w}_n\in\mathbb{R}^d$ from one layer of the neural network as an example, we typically can have the following regularization:
\begin{equation*}
\footnotesize
   \mathcal{L}_{\text{reg}} = \underbrace{\lambda_{\text{I}}\cdot\sum\nolimits_{i=1}^n h(\bm{w}_i)}_{\text{Individual Regularization}} +  \underbrace{ \mystrut{1.3ex}\lambda_{\textnormal{R}}\cdot g(\bm{w}_1,\cdots,\bm{w}_n)}_{\text{Relational Regularization}}
\end{equation*}
where $h$ defines a function that independently describes some properties of the individual weight $\bm{w}_i$ (\eg, $\ell_p$ norm), and $g$ is a function that characterizes the interaction between different weights $\bm{w}_1,\cdots,\bm{w}_n$ (\eg, orthogonality). Individual regularization (\eg, weight decay) is typically used by default in neural networks, while relational regularization tends to be overlooked. Individual regularization essentially regularizes only one dimension of the weights (from the view of spherical coordinate systems~\cite{blumenson1960derivation}), which is not strong enough for highly over-parameterized neural networks. In contrast, relational regularization can be viewed as regularizing $d-1$ dimensions of the weights, and encodes stronger inductive bias with relational information. Individual and relational regularizations usually serve complimentary roles to each other and can be used simultaneously. In this paper, we study the importance of a suitable relational regularization for over-parameterized models (\eg, neural networks) and explore a novel family of the relational regularizations -- \emph{hyperspherical uniformity}.

Hyperspherical uniformity characterizes the diversity of vectors on a unit hypersphere. Different from orthogonality where perpendicular vectors are defined to be diverse, hyperspherical uniformity encourages vectors to be spaced apart with as large an angle as possible such that these vectors can be uniformly distributed over the hypersphere. In order to promote hyperspherical uniformity with an explicit regularization, we formulate several distinct learning objectives that are conceptually appealing and geometrically interpretable. Specifically, we consider minimum hyperspherical energy~(MHE)~\cite{liu2018learning}, maximum hyperspherical separation (MHS), maximum hyperspherical polarization~(MHP), minimum hyperspherical covering~(MHC), and maximum Gram determinant~(MGD). Different regularization objectives yield distinct geometric interpretations and optimization dynamics. Moreover, we draw inspiration from statistical uniformity testing on the hypersphere and provide a novel and unified view on understanding these learning objectives.

The motivation to encourage hyperspherical uniformity lies in three aspects. First, we argue that hyperspherical uniformity leads to better optimization and generalization. \cite{xie2016diverse} proves that optimizing one-hidden-layer neural networks with hyperspherically uniform neurons has no spurious local minima. \cite{liu2018learning,Lin20CoMHE} empirically show that promoting hyperspherical uniformity of neurons can effectively improve the generalization of neural networks.  Interestingly, hyperspherical uniformity implicitly regularizes the neurons to be close to the initialization, partially implementing Occam’s razor to keep neural networks as simple as possible (see Section~\ref{theoretical_insights}). Second, hyperspherical uniformity can remove neuron redundancy and encourage the neurons to be diverse on the hypersphere. In the light of \cite{shang2016understanding} that shows deeply learned neurons are highly redundant, hyperspherical uniformity can serve as a useful regularization to remove such redundancy. Third, hyperspherical uniformity has a clear geometric interpretation and theoretical merits. There exists a close connection between hyperspherical uniformity and orthogonality. The effectiveness of hyperspherical uniformity can also be justified from multiple theoretical viewpoints. Our contribution can be summarized as follows:

\vspace{-2.9mm}
\begin{itemize}[leftmargin=*]
\setlength\itemsep{0.04em}
    \item We introduce a general property -- hyperspherical uniformity as a regularizer for neural networks.
    \item To achieve hyperspherical uniformity, we propose several well-performing approaches (MHC, MHS, MHP, MGD) with distinct geometric interpretations.
    \item We establish a connection between hyperspherical uniformity and orthogonality, showing that hyperspherical uniformity is a more general property.
    \item We provide some insights and discussions on the geometric and spectral properties and regularization effects of hyperspherical uniformity.
    \item We apply hyperspherical uniformity to a number of applications and demonstrate superior performance over existing regularizations such as orthogonality.
\end{itemize}

\vspace{-3.8mm}
\section{Related Work}
\vspace{-2.4mm}

\textbf{Relational regularizations}. There are quite a number of relational regularizations that have been used in neural networks, such as orthogonality regularization~\cite{bansal2018can,liu2017hyper,rodriguez2016regularizing,huang2018orthogonal,choromanski2018initialization}, unitary constraint~\cite{jing2017tunable,wisdom2016full,arjovsky2016unitary},  decorrelation~\cite{rodriguez2016regularizing,cogswell2015reducing,xie2017uncorrelation}, spectral regularization~\cite{yoshida2017spectral}, low-rank regularization~\cite{tai2015convolutional}, angular constraint~\cite{xie2017learning,li2017improving}, etc. Most of these relational regularizations are either directly based on orthogonality or based on some notions related to orthogonality (\eg, correlation). Quite differently, hyperspherical uniformity encourages neurons to be uniformly distributed over the hypersphere.

\vspace{-0.6mm}

\textbf{Hyperspherical learning}.\! \cite{liu2016large,liu2017hyper,liu2018decoupled,LiuNIPS19,liu2017sphereface,wang2018cosface,wang2018additive,deng2019arcface,davidson2018hyperspherical,park2019sphere,Chen20AVH,Liu2020OPT} propose to learn representations on hypersphere and show that angular information in neural networks, in contrast to magnitude, preserves the key semantics and is very crucial to generalization. \cite{liu2018learning} regularizes the diversity of neurons on hypersphere by minimizing their pairwise energy. \cite{Lin20CoMHE} explores how projections can help to better minimize such energy.

\section{Connection between Hyperspherical Uniformity and Orthogonality}
\vspace{-2mm}

Before we discuss specific methods to achieve hyperspherical uniformity, we first reveal an interesting connection between hyperspherical uniformity and orthogonality with the following theoretical statement:

\vspace{1.6mm}
\begin{theorem}\label{ortho_uniform}
For $\epsilon,\delta>0$ and $d\geq \max(4,\delta^{-2}\epsilon^{-1})$, every orthonormal basis $\bm{V}=\{\bm{v}_1,\cdots,\bm{v}_d\}$ in $\mathbb{R}^d$ is $\epsilon$-$\delta$-uniform distributed on the unit hypersphere $\mathbb{S}(\mathbb{R}^d)$, i.e., for every Borel set $\bm{A}\subseteq\mathbb{S}(\mathbb{R}^d)$, we have
\begin{equation}
\footnotesize
    \mathbb{P}\bigg(\bigg{|}\frac{\textnormal{Card}(\bm{V}\cap\bm{R}(\bm{A}))}{\textnormal{Card}(\bm{V})}-u(\bm{A})\bigg{|}\leq\delta\bigg)\geq 1-\epsilon
\end{equation}
where $\textnormal{Card}(\cdot)$ denotes the cardinality of a set, $\bm{R}(\bm{A})$ is a random rotation of $\bm{A}$, and $u$ denotes the uniform probability measure over $\mathbb{S}(\mathbb{R}^d)$.
\end{theorem}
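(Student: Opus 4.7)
The plan is to apply Chebyshev's inequality to the empirical average $X=\tfrac{1}{d}\sum_{i=1}^{d}\mathbf{1}_{\bm{A}}(\bm{R}^{-1}\bm{v}_i)$, after first reducing the problem, via bi-invariance of the Haar measure on $O(d)$, to the case of a uniformly random orthonormal basis. For any fixed ONB $\bm{V}$ and Haar-random $\bm{R}$, the tuple $(\bm{R}^{-1}\bm{v}_1,\dots,\bm{R}^{-1}\bm{v}_d)$ is itself a uniformly random ONB, and $\bm{v}_i\in\bm{R}(\bm{A})$ iff $\bm{R}^{-1}\bm{v}_i\in\bm{A}$. Writing $p=u(\bm{A})$, the claim becomes $\mathbb{P}(|X-p|>\delta)\le \epsilon$. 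Since $d\ge\delta^{-2}\epsilon^{-1}$, it suffices by Chebyshev to establish $\mathrm{Var}(X)\le 1/d$, which is the second-moment estimate I target.

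The mean is immediate from Haar invariance: each $\bm{R}^{-1}\bm{v}_i$ is marginally uniform on $\mathbb{S}(\mathbb{R}^d)$, so $\mathbb{E}[X]=p$. For the variance, set $Y_i=\mathbf{1}_{\bm{A}}(\bm{R}^{-1}\bm{v}_i)$. Because the Haar law of an ONB is invariant under permutations of its basis vectors, all off-diagonal covariances coincide, and
\[
\mathrm{Var}(X)=\frac{p(1-p)}{d}+\frac{d-1}{d}\,\mathrm{Cov}(Y_1,Y_2).
\]
The first term is the independent-sampling benchmark and is at most $1/(4d)$; the entire argument reduces to a bound, uniform in the Borel set $\bm{A}$, on the pair correlation $\mathrm{Cov}(Y_1,Y_2)$.

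That pair correlation is the main obstacle. The direct attack uses the conditional structure of a Haar ONB: given $\bm{R}^{-1}\bm{v}_1=x$, the vector $\bm{R}^{-1}\bm{v}_2$ is uniform on the great $(d{-}2)$-sphere inside $x^{\perp}$, a distribution whose total-variation distance from $u$ on $\mathbb{S}(\mathbb{R}^d)$ scales like $1/\sqrt{d}$. A cleaner and tighter route is to exploit the fact that every orthonormal basis is a spherical $2$-design---an immediate consequence of the completeness relation $\sum_i \bm{v}_i\bm{v}_i^{\top}=\bm{I}$. Expanding $\mathbf{1}_{\bm{A}}-p=\sum_{k\ge 1}\phi_k$ in spherical harmonics, the degree-$1$ mode contributes the same as in the i.i.d.\ case, the degree-$2$ mode contributes exactly zero by the $2$-design property, and the higher-degree modes each contribute $O(1/d)\,\|\phi_k\|_2^{2}$ via a Weingarten-calculus identity on $O(d)$. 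Summing over $k$ using Parseval, $\sum_{k\ge 1}\|\phi_k\|_2^{2}=p(1-p)\le 1/4$, yields $|\mathrm{Cov}(Y_1,Y_2)|\le C/d$ with $C$ small enough that $\mathrm{Var}(X)\le 1/d$, and Chebyshev then closes the proof. The side condition $d\ge 4$ enters precisely to keep the Weingarten denominators in the higher-harmonic estimate bounded away from zero.
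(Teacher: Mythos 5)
There is nothing in the paper to compare against line by line: Theorem~\ref{ortho_uniform} is not proved in the paper or its appendix, but imported from \cite{goldstein2014any} with pointers to concentration of measure and Raz's lemma. Your proposal therefore supplies a proof the paper omits, and the route you pick — reduce to a Haar-random orthonormal basis, compute $\mathbb{E}[X]=u(\bm{A})$, decompose the variance by exchangeability, and close with Chebyshev using $d\geq\delta^{-2}\epsilon^{-1}$ — is sound and does yield exactly the stated quantitative form. The pair-correlation estimate you need is true and provable along the harmonic-analysis lines you indicate: conditionally on $u_1=x$, the second basis vector is uniform on the great subsphere $x^{\perp}\cap\mathbb{S}^{d-1}$, and by Funk--Hecke this conditional averaging acts on degree-$k$ spherical harmonics as multiplication by $C_k^{(d-2)/2}(0)/C_k^{(d-2)/2}(1)$, which vanishes for odd $k$, equals $-\frac{1}{d-1}$ for $k=2$, and is at most $\frac{3}{d^2-1}$ in absolute value for even $k\geq4$. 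With Parseval this gives $\mathrm{Cov}(Y_1,Y_2)\leq\frac{3\,p(1-p)}{d^2-1}$, hence $\mathrm{Var}(X)\leq\frac{p(1-p)}{d}\big(1+\frac{3}{d+1}\big)<\frac{1}{d}$, so the constants are not delicate and the hypothesis $d\geq4$ is not even needed for this argument (it presumably reflects the sharper statement in the cited source, not your mechanism).

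Two assertions in your sketch must be repaired before this counts as a proof. First, the aside that the conditional law of $u_2$ given $u_1$ is within total-variation distance $O(1/\sqrt{d})$ of uniform is false: that law is supported on a great subsphere of $u$-measure zero, so its TV distance from $u$ is exactly $1$; since the bound must hold uniformly over all Borel $\bm{A}$, TV-closeness cannot be the mechanism, and the ``direct attack'' as stated does not work — only the spectral route does. Second, an orthonormal basis is not a spherical $2$-design: $\sum_i\bm{v}_i\neq\bm{0}$ in general, so degree-$1$ moments do not match the sphere; what the completeness relation $\sum_i\bm{v}_i\bm{v}_i^{\top}=\bm{I}$ gives is precisely that the degree-$2$ \emph{harmonic} (traceless quadratic) component averages to zero over any basis, which is the fact your argument actually uses. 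Finally, the key eigenvalue (``Weingarten'') estimate is asserted rather than derived; it is standard, but it is the heart of the proof and needs to be written out with its constant, as above.
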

\vspace{-0.1mm}

Theorem~\ref{ortho_uniform} can be obtained from \cite{goldstein2014any} and is related to concentration of measure~\cite{milman2009asymptotic} and Raz's lemma~\cite{raz1999exponential}. It shows that uniformly sampling a basis from the orthogonal group in high dimensions gives an approximate uniform distribution on $\mathbb{S}(\mathbb{R}^d)$, implying that any orthogonal basis is approximately uniformly distributed over the hypersphere in high dimensions. This result bridges hyperspherical uniformity and orthogonality, implying hyperspherical uniformity is more general.

\setlength{\columnsep}{8pt}
\begin{wrapfigure}{r}{0.181\textwidth}
  \begin{center}
  \advance\leftskip+1mm
  \renewcommand{\captionlabelfont}{\scriptsize}
    \vspace{-0.27in}  
    \includegraphics[width=0.179\textwidth]{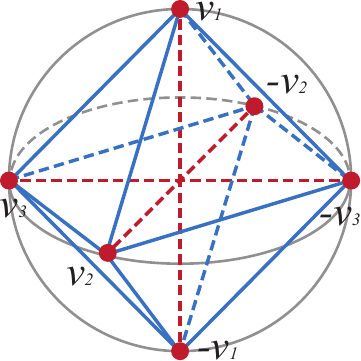}
    \vspace{-0.29in} 
    \caption{\scriptsize Cross-polytope.}\label{crossp}
    \vspace{-0.2in} 
  \end{center}
\end{wrapfigure}

More interestingly, enforcing hyperspherical uniformity for $2d+2$ vectors in $\mathbb{S}^d$ leads to a cross-polytope~\cite{yudin1992minimum}. Fig.~\ref{crossp} presents a 3-dimensional cross-polytope in $\mathbb{S}^2$ (with 6 vectors). We assume that there exists a unit-vector set with $d+1$ vectors in total: $\{\bm{v}_1,\cdots,\bm{v}_{d+1}\in\mathbb{S}^{d}\}$. Then we construct a new set with $2d+2$ vectors by adding all vectors with opposite direction to the original set: $\{\bm{v}_1,\cdots,\bm{v}_{d+1},-\bm{v}_1,\cdots,-\bm{v}_{d+1}\in\mathbb{S}^{d}\}$. Promoting hyperspherical uniformity for these $2d+2$ vectors (in the new set) is actually equivalent to promoting pairwise orthogonality among the original $d+1$ vectors (in the original set). This equivalence builds a strong connection between hyperspherical uniformity and orthogonality. We will discuss this further in Section~\ref{optimality}.

\vspace{-2mm}
\section{Towards Hyperspherical Uniformity}
\vspace{-1.5mm}
We design multiple learning objectives to achieve hyperspherical uniformity and discuss their close connections. MHE was first proposed in our previous work~\cite{liu2018learning}, while the others are part of the contributions in this work.

\vspace{-2mm}
\subsection{Minimum Hyperspherical Energy}
\vspace{-1.5mm}

Inspired by the Thomson problem~\cite{thomson1904xxiv} where one seeks to find an equilibrium state with minimum potential energy that distributes $N$ electrons on a unit sphere as evenly as possible, MHE~\cite{liu2018learning,Lin20CoMHE} encourages hyperspherical diversity and defines the following learning objective for $n$ $d$-dimensional vectors (\eg, neurons)  $\bm{W}_n=\{\bm{w}_1,\cdots,\bm{w}_n\in\mathbb{R}^{d}\}$ in the same layer:
\begin{equation}\label{energy}
\footnotesize
    \min_{\{\hat{\bm{w}}_1,\cdots,\hat{\bm{w}}_n\in\mathbb{S}^{d-1}\}}\big{\{} E_s(\hat{\bm{W}_n}):=\sum_{i=1}^{n}\sum_{j=1,j\neq i}^{n}
    K_s(\hat{\bm{w}}_i,\hat{\bm{w}}_j)\big{\}}
    %\frac{1}{\rho(\hat{\bm{w}}_i,\hat{\bm{w}}_j)^s}\big{\}}
\end{equation}
where $\hat{\bm{w}}_i:=\frac{\bm{w}_i}{\|\bm{w}_i\|}$ is the $i$-th vector projected onto the unit hypersphere $\mathbb{S}^{d-1}=\{\hat{\bm{w}}\in\mathbb{R}^{d}|\norm{\hat{\bm{w}}}=1\}$. $K_s(\cdot,\cdot)$ models the interaction between two vectors, and we will typically consider the following Riesz $s$-kernel function: 
\begin{equation}
\footnotesize
   K_s(\hat{\bm{w}}_i,\hat{\bm{w}}_j) =\left\{
{\begin{array}{*{20}{l}}
{\rho(\hat{\bm{w}}_i,\hat{\bm{w}}_j)^{-s},\ \ \ s>0}\\
{\log(\rho(\hat{\bm{w}}_i,\hat{\bm{w}}_j)^{-1}),\ \ \ s=0}\\
{-\rho(\hat{\bm{w}}_i,\hat{\bm{w}}_j)^{-s},\ \ \ s<0}
\end{array}} \right.
\end{equation}
where $\rho(\cdot,\cdot)$ is defined to measure the geodesic similarity on the unit hypersphere. In general, we can use either  $\rho(\hat{\bm{w}}_i,\hat{\bm{w}}_j)=\|\hat{\bm{w}}_i-\hat{\bm{w}}_j\|_2$ (\ie, standard Riesz $s$-kernel) or hyperspherical geodesic distance (\ie, angular distance) $\rho(\hat{\bm{w}}_i,\hat{\bm{w}}_j)=\arccos(\hat{\bm{w}}_i^\top\hat{\bm{w}}_j)$. Moreover, it is  known that the minimizer of this pairwise energy sum asymptotically corresponds to the uniform distribution on the hypersphere~\cite{liu2018learning,kuijlaars1998asymptotics,borodachov2019discrete}. Since the objective of MHE is non-convex and differentiable, we typically optimize it with gradient descent. Different $s$ usually yields slightly different regularization effects and optimization dynamics. MHE has been extensively studied and shown to be effective in many applications~\cite{liu2018learning}.

\vspace{-2mm}
\subsection{Maximum Hyperspherical Separation}
\vspace{-1.5mm}

MHS adopts a maximum geodesic separation criterion by maximizing the \emph{separation distance} (\ie, the smallest pairwise distance in a vector set, denoted as $\vartheta(\hat{\bm{W}}_n)$) which is equal to the smallest geodesic distance between any two vectors in the set $\hat{\bm{W}}_n=\{\hat{\bm{w}}_1,\cdots,\hat{\bm{w}}_n\in\mathbb{S}^{d-1}\}$:
\begin{equation}\label{sep_dis}
\footnotesize
    \max_{\{\hat{\bm{w}}_1,\cdots,\hat{\bm{w}}_n\in\mathbb{S}^{d-1}\}}\big{\{}\vartheta(\hat{\bm{W}}_n):=\min_{i\neq j} \rho(\hat{\bm{w}}_i,\hat{\bm{w}}_j)\big{\}}
\end{equation}
which is formulated as a max-min optimization problem. It is originally called Tammes problem~\cite{tammes1930origin} (or sphere packing problem) where one packs a given number of circles on the surface of a sphere such that the minimum distance between circles can be maximized. Similar to MHE, $\rho(\cdot,\cdot)$ can be either Euclidean distance or geodesic distance on the unit hypersphere. Based on the following proposition, we can obtain that MHS is in fact a limiting case of MHE when $s\rightarrow\infty$.

\vspace{1.4mm}
\begin{proposition}\label{mhs}
Let $n\in\mathbb{N},n\geq2$ be fixed and $(\mathbb{S}^{d-1},\rho)$ be a compact metric space. Then we have that
\begin{equation}
\footnotesize
    \lim_{s\rightarrow\infty}\big(\varepsilon_s(\mathbb{S}^{d-1},n)\big)^{\frac{1}{s}}=\frac{1}{\delta_n^{\rho}(\mathbb{S}^{d-1})}
\end{equation}
where we define $\varepsilon_s(\mathbb{S}^{d-1},n):=\min_{\hat{\bm{W}}_n\subset\mathbb{S}^{d-1}}E_s(\hat{\bm{W}}_n)$ and $\delta^{\rho}_n(\mathbb{S}^{d-1}):=\max_{\hat{\bm{W}}_n\subset\mathbb{S}^{d-1}}\vartheta(\hat{\bm{W}}_n)$.
\end{proposition}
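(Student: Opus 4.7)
The plan is to sandwich $(\varepsilon_s(\mathbb{S}^{d-1},n))^{1/s}$ between two quantities that both converge to $1/\delta_n^{\rho}(\mathbb{S}^{d-1})$ as $s\to\infty$. Throughout, write $\delta:=\delta_n^{\rho}(\mathbb{S}^{d-1})$ for brevity, and note that by compactness of $\mathbb{S}^{d-1}$ together with continuity of $\vartheta$ and $E_s$ on the (compact) configuration space $(\mathbb{S}^{d-1})^n$ minus the diagonal, both the maximal separation and (for $s>0$) the minimal energy are attained; fix a maximiser $\hat{\bm{W}}_n^\star$ of $\vartheta$ and a minimiser $\hat{\bm{W}}_n^{(s)}$ of $E_s$.

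First I would establish the upper bound. Since every pairwise distance in $\hat{\bm{W}}_n^\star$ is at least $\delta$, each Riesz summand satisfies $\rho(\hat{\bm{w}}_i^\star,\hat{\bm{w}}_j^\star)^{-s}\le \delta^{-s}$, hence
\begin{equation*}
\footnotesize
\varepsilon_s(\mathbb{S}^{d-1},n)\;\le\;E_s(\hat{\bm{W}}_n^\star)\;\le\;n(n-1)\,\delta^{-s}.
\end{equation*}
Taking $s$-th roots gives $(\varepsilon_s)^{1/s}\le (n(n-1))^{1/s}\,\delta^{-1}$, and the prefactor tends to $1$, so $\limsup_{s\to\infty}(\varepsilon_s)^{1/s}\le \delta^{-1}$.

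For the lower bound, in any configuration the minimum pairwise distance is attained on some pair $(i^\star,j^\star)$, so the symmetric double sum contains at least the two summands $\rho(\hat{\bm{w}}_{i^\star},\hat{\bm{w}}_{j^\star})^{-s}$ twice, yielding $E_s(\hat{\bm{W}}_n)\ge 2\,\vartheta(\hat{\bm{W}}_n)^{-s}$. Applying this to $\hat{\bm{W}}_n^{(s)}$ and using $\vartheta(\hat{\bm{W}}_n^{(s)})\le\delta$ (definition of $\delta$ as the maximum of $\vartheta$), we obtain
\begin{equation*}
\footnotesize
\varepsilon_s(\mathbb{S}^{d-1},n)\;\ge\;2\,\vartheta(\hat{\bm{W}}_n^{(s)})^{-s}\;\ge\;2\,\delta^{-s},
\end{equation*}
hence $(\varepsilon_s)^{1/s}\ge 2^{1/s}\,\delta^{-1}$, which tends to $\delta^{-1}$ as $s\to\infty$, giving $\liminf_{s\to\infty}(\varepsilon_s)^{1/s}\ge\delta^{-1}$.

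Combining the two bounds yields the claimed limit. There is no real obstacle here: the whole argument only uses the elementary fact that for fixed $n$ the $\ell^s$ norm of a vector of positive numbers converges as $s\to\infty$ to its $\ell^\infty$ norm (i.e.\ its maximum), applied to the vector of inverse pairwise distances; the role of $n(n-1)$ and $2$ is absorbed when taking $s$-th roots. The only point worth being careful about is to justify the existence of the extremisers (to pick concrete configurations for each bound) and to verify that the metric/geodesic distance $\rho$ on $\mathbb{S}^{d-1}$ is strictly positive off the diagonal, so that the Riesz kernel is well defined and the preceding inequalities are valid — both of which follow from $(\mathbb{S}^{d-1},\rho)$ being a compact metric space as assumed.
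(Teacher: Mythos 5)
Your proposal is correct and follows essentially the same route as the paper: the upper bound comes from evaluating the energy of a best-packing (maximal-separation) configuration, giving $\varepsilon_s\le n(n-1)\delta^{-s}$, and the lower bound comes from bounding the energy of the $s$-energy minimizer below by the contribution of its closest pair together with $\vartheta(\hat{\bm{W}}_n^{(s)})\le\delta$. The sandwich then closes exactly as in the paper's proof.
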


Compared to MHE that has a global regularization effect, MHS focuses more on the local separation since it only takes the minimal geodesic distance into consideration. Specifically, MHS only updates two vectors with smallest angular distance at each iteration, while MHE updates all vectors in each iteration. When the vectors are relatively diverse on the hypersphere, MHS tends to have stronger regularization effects than MHE. This is because the MHS gradient resulted from the two closest vectors has only one direction component and will not be cancelled out. In contrast, the MHE gradient that comes from all the vectors has many direction components and may be largely cancelled out. More intuitively, MHS only activates the repulsive force from the closest vectors at a time while MHE simultaneously activates the repulsive force from all pairwise vectors.

Optimizing MHS is straightforward and efficient. We first need to use a ranking operator to rank all the pairwise distances and obtain the vectors with the smallest distance (\ie, maximal similarity). Then we can simply maximize the minimal distance by updating these two closest vectors via gradient ascent.

\vspace{-1.9mm}
\subsection{Maximum Hyperspherical Polarization}
\vspace{-1.4mm}

MHP arises from a practical problem: if $K_s(\bm{v},\hat{\bm{w}}_i)$ denotes the amount of a substance received at $\bm{v}$ due to an injector of the substance located at $\hat{\bm{w}}_i$, what is the smallest number of injectors and their corresponding optimal locations on the hypersphere so that a prescribed minimal amount of the substance can reach every point on the hypersphere? Specifically, MHP maximizes the following $s$-polarization $P_s(\hat{\bm{W}}_n)$ of a $n$-vector set $\hat{\bm{W}}_n=\{\hat{\bm{w}}_1,\cdots,\hat{\bm{w}}_n\in\mathbb{S}^{d-1}\}$:
\begin{equation}
\footnotesize
    \max_{\{\hat{\bm{w}}_1,\cdots,\hat{\bm{w}}_n\in\mathbb{S}^{d-1}\}}\big{\{}P_s(\hat{\bm{W}}_n):=\min_{\bm{v}\in\mathbb{S}^{d-1}}\sum_{i=1}^nK_s(\bm{v},\hat{\bm{w}}_i)\big{\}}
\end{equation}
which is a max-min problem and amounts to identifying the optimal location of ``poles'' for the potential function. Similar to MHE, the potential can be modeled by Riesz $s$-kernel (\eg, $K(\bm{v},\hat{\bm{w}}_i)=\rho(\bm{v},\hat{\bm{w}}_i)^{-s}$). We show an intrinsic relationship between MHE and MHP:

\vspace{1.4mm}
\begin{proposition}\label{mhp}
For every $n\in\mathbb{N}, n\geq 2$ and a compact metric space $(\mathbb{S}^{d-1},\rho)$, we have that
\begin{equation}
\footnotesize
    \mathcal{P}_s(\mathbb{S}^{d-1},n)\geq\frac{\varepsilon_s(\mathbb{S}^{d-1},n+1)}{n+1}\geq\frac{\varepsilon_s(\mathbb{S}^{d-1},n)}{n-1}
\end{equation}
where $\mathcal{P}_s(\mathbb{S}^{d-1},n):=\max_{\hat{\bm{W}}_n\subset\mathbb{S}^{d-1}}P_s(\hat{\bm{W}}_n)$.
\end{proposition}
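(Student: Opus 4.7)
The plan is to prove both inequalities by using the leave-one-out subsets of an MHE-optimal $(n+1)$-point configuration as a common bridge. Concretely, I would fix a minimizer $\hat{\bm{W}}^*_{n+1}=\{\hat{\bm{w}}_1,\dots,\hat{\bm{w}}_{n+1}\}\subset\mathbb{S}^{d-1}$ attaining $\varepsilon_s(\mathbb{S}^{d-1},n+1)$, which exists because $(\mathbb{S}^{d-1})^{n+1}$ is compact and $E_s$ is lower semicontinuous. For each $k$, I would study the leave-one-out potential
\begin{equation*}
U_k(v):=\sum_{i\neq k}K_s(v,\hat{\bm{w}}_i),
\end{equation*}
so that $P_s(\hat{\bm{W}}^*_{n+1}\setminus\{\hat{\bm{w}}_k\})=\min_{v\in\mathbb{S}^{d-1}}U_k(v)$ by definition.

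For the first inequality, the crucial step is showing that this minimum is attained at $v=\hat{\bm{w}}_k$. I would prove this by a global perturbation argument: since $\hat{\bm{W}}^*_{n+1}$ globally minimizes $E_s$, replacing $\hat{\bm{w}}_k$ by any $v\in\mathbb{S}^{d-1}$ cannot decrease the energy, and by symmetry $K_s(a,b)=K_s(b,a)$ the change in $E_s$ is exactly $2(U_k(v)-U_k(\hat{\bm{w}}_k))$, hence nonnegative. Thus $P_s(\hat{\bm{W}}^*_{n+1}\setminus\{\hat{\bm{w}}_k\})=U_k(\hat{\bm{w}}_k)$, \ie\ the polarization of each leave-one-out subset equals the local energy contribution of the removed point. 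Summing over $k$ recovers the total energy,
\begin{equation*}
\sum_{k=1}^{n+1}P_s(\hat{\bm{W}}^*_{n+1}\setminus\{\hat{\bm{w}}_k\})=\sum_{k=1}^{n+1}\sum_{i\neq k}K_s(\hat{\bm{w}}_k,\hat{\bm{w}}_i)=E_s(\hat{\bm{W}}^*_{n+1})=\varepsilon_s(\mathbb{S}^{d-1},n+1),
\end{equation*}
so an averaging step produces some $k^*$ with $P_s(\hat{\bm{W}}^*_{n+1}\setminus\{\hat{\bm{w}}_{k^*}\})\ge\varepsilon_s(\mathbb{S}^{d-1},n+1)/(n+1)$, and maximizing over $n$-point configurations yields the desired lower bound on $\mathcal{P}_s(\mathbb{S}^{d-1},n)$.

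For the second inequality I would run a standard double-counting on the same leave-one-out sets. By definition of the infimum, $E_s(\hat{\bm{W}}^*_{n+1}\setminus\{\hat{\bm{w}}_k\})\ge\varepsilon_s(\mathbb{S}^{d-1},n)$, so summing over $k$ gives $(n+1)\varepsilon_s(\mathbb{S}^{d-1},n)\le\sum_k E_s(\hat{\bm{W}}^*_{n+1}\setminus\{\hat{\bm{w}}_k\})$. On the other hand, each ordered pair $(i,j)$ with $i\neq j$ survives in exactly $n-1$ of these leave-one-outs (those indexed by $k\notin\{i,j\}$), so the same sum equals $(n-1)E_s(\hat{\bm{W}}^*_{n+1})=(n-1)\varepsilon_s(\mathbb{S}^{d-1},n+1)$. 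Rearranging yields $\varepsilon_s(\mathbb{S}^{d-1},n+1)/(n+1)\ge\varepsilon_s(\mathbb{S}^{d-1},n)/(n-1)$.

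The main obstacle is the first inequality, specifically recognising the global (not merely first-order) minimality of $U_k$ at $\hat{\bm{w}}_k$---this is what pins the polarization of each leave-one-out subset down to the local energy contribution and makes the subsequent averaging work. Minor technicalities such as the singularity of $K_s$ at coinciding points for $s\ge 0$ (which only strengthens the minimality inequality, since the energy becomes $+\infty$) and the symmetry of $K_s$ across the three kernel regimes are handled uniformly through the metric $\rho$.
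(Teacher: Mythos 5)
Your proof is correct and follows essentially the same route as the paper's: perturb one point of an energy-minimizing $(n+1)$-point configuration to show each leave-one-out polarization equals the removed point's energy contribution, then average over the removed index; your double-counting argument also correctly supplies the second inequality, which the paper leaves as ``easy to verify.'' The only difference is that you work with an exact minimizer (justified by compactness and lower semicontinuity) where the paper uses an $\epsilon$-approximate minimizer and lets $\epsilon\rightarrow 0$, thereby sidestepping the existence question entirely.
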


Next, we show an interesting example where the maximal $s$-polarization problem can be easily solved.

\vspace{1.4mm}
\begin{proposition}\label{mhp_relax}
For the case of $s=-2$ and $n\geq 2$, a vector configuration on $\mathbb{S}^{d-1}$: $\hat{\bm{W}}=\{\hat{\bm{w}}_1,\cdots,\hat{\bm{w}}_n\}$ is an optimal solution for the maximal $s$-polarization problem if and only if $\sum_{i=1}^n\hat{\bm{w}}_i=\bm{0}$.
\end{proposition}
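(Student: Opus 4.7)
The plan is to reduce the $s=-2$ polarization to a simple linear optimization on the sphere by exploiting the fact that, under the Euclidean choice of $\rho$, the kernel $K_{-2}(\bm{v},\hat{\bm{w}}_i)=-\rho(\bm{v},\hat{\bm{w}}_i)^{2}$ is affine in $\bm{v}$.

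First, I would expand $K_{-2}$ using $\|\bm{v}\|=\|\hat{\bm{w}}_i\|=1$:
\begin{equation*}
\footnotesize
K_{-2}(\bm{v},\hat{\bm{w}}_i)=-\|\bm{v}-\hat{\bm{w}}_i\|_2^{2}=-2+2\,\bm{v}^{\top}\hat{\bm{w}}_i.
\end{equation*}
Summing over $i=1,\dots,n$ and setting $\bm{s}(\hat{\bm{W}}_n):=\sum_{i=1}^{n}\hat{\bm{w}}_i$ gives
\begin{equation*}
\footnotesize
\sum_{i=1}^{n}K_{-2}(\bm{v},\hat{\bm{w}}_i)=-2n+2\,\bm{v}^{\top}\bm{s}(\hat{\bm{W}}_n),
\end{equation*}
which is affine (in fact linear plus a constant) in $\bm{v}$.

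Second, I would compute the inner minimum. Minimising a linear functional $\bm{v}\mapsto\bm{v}^{\top}\bm{s}$ over $\mathbb{S}^{d-1}$ is elementary: by Cauchy–Schwarz the minimum is $-\|\bm{s}\|$, attained at $\bm{v}=-\bm{s}/\|\bm{s}\|$ when $\bm{s}\neq\bm{0}$, and equals $0$ for every $\bm{v}$ when $\bm{s}=\bm{0}$. Hence
\begin{equation*}
\footnotesize
P_{-2}(\hat{\bm{W}}_n)=-2n-2\,\|\bm{s}(\hat{\bm{W}}_n)\|.
\end{equation*}

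Third, I would maximise this quantity over configurations $\hat{\bm{W}}_n\subset\mathbb{S}^{d-1}$. Since $-2n$ is constant, maximising $P_{-2}$ is equivalent to minimising $\|\bm{s}(\hat{\bm{W}}_n)\|$, and the trivial lower bound $\|\bm{s}(\hat{\bm{W}}_n)\|\geq 0$ is attained iff $\sum_{i=1}^{n}\hat{\bm{w}}_i=\bm{0}$. This yields both directions of the iff: optimality forces $\bm{s}=\bm{0}$, and any configuration with $\bm{s}=\bm{0}$ achieves $P_{-2}=-2n$, the maximum. Attainability for $n\geq2$ and $d\geq 2$ is immediate: e.g. pair antipodal points, or distribute a regular configuration.

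I do not expect any real obstacle here; the only subtlety is a convention check that $\rho$ is the chord (Euclidean) distance, as in the ``standard Riesz $s$-kernel'' specified in the text — the argument relies on $\rho(\bm{v},\hat{\bm{w}}_i)^{2}=2-2\bm{v}^{\top}\hat{\bm{w}}_i$, which is what makes the $s=-2$ case collapse to a linear problem. Were one to use the geodesic $\arccos$ distance, the kernel would no longer be affine in $\bm{v}$ and the clean characterisation would fail, so it is worth stating the convention explicitly at the start of the proof.
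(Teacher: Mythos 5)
Your proposal is correct and follows essentially the same route as the paper's proof: expand $-\|\bm{v}-\hat{\bm{w}}_i\|^2=2\bm{v}^\top\hat{\bm{w}}_i-2$ so that $P_{-2}$ collapses to $-2n-2\|\sum_i\hat{\bm{w}}_i\|$, whence optimality is equivalent to $\sum_i\hat{\bm{w}}_i=\bm{0}$. Your explicit remark that the argument hinges on the chordal (Euclidean) choice of $\rho$ is a worthwhile clarification that the paper leaves implicit.
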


Proposition~\ref{mhp_relax} inspires us to propose a relaxed MHP (R-MHP) regularization objective:  $\min_{\hat{\bm{W}}_n}\norm{\sum_{i=1}^n\hat{\bm{w}}_i}$. Although it only corresponds to a specific maximal $(-2)$-polarization solution, R-MHP can still serve as an interesting relaxed variant of the original MHP regularization for neurons. Geometrically, R-MHP can be viewed as constraining the mass center of all the points to the origin. Note that, R-MHP is a necessary condition to achieve hyperspherical uniformity, and therefore is unable to guarantee hyperspherical uniformity. Moreover, there exists a trivial solution for R-MHP to achieve the optimum, where every two vectors are paired to have the opposite directions. Typically the data fitting loss in training can prevent the neurons from falling into such a trivial solution. Alternatively we can make the number of neurons to be odd to eliminate the existence of such a trivial solution, since the neurons can no longer be paired in this case.

To optimize the standard MHP, we propose to unroll the inner minimization with a few gradient descent steps and then embed it back to the outer maximization, which is conceptually similar to \cite{dai2018coupled,finn2017model,monga2019algorithm,andrychowicz2016learning}.

\vspace{-2mm}
\subsection{Minimum Hyperspherical Covering}
\vspace{-1.5mm}
MHC minimizes the following \emph{covering radius} $\alpha(\hat{\bm{W}}_n)$ (also known as \emph{mesh norm}) of a $n$-vector set $\hat{\bm{W}}_n$:
\begin{equation}\label{eq_mhc}
\footnotesize
    \min_{\{\hat{\bm{w}}_1,\cdots,\hat{\bm{w}}_n\in\mathbb{S}^{d-1}\}}\big{\{}\alpha(\hat{\bm{W}}_n):=\max_{\bm{v}\in\mathbb{S}^{d-1}}\min_{1\geq i\geq n}\rho(\bm{v},\hat{\bm{w}}_i)\big{\}}
\end{equation}
where the covering radius $\alpha(\hat{\bm{W}}_n)$ denotes the maximum geodesic distance from a vector in $\mathbb{S}^{d-1}$ to the nearest vector in $\hat{\bm{W}}_n$, and can also be viewed as the geodesic radius of the largest hyperspherical cap that contains no points from $\hat{\bm{W}}_n$. Covering radius has interesting applications in numerical integration on the sphere~\cite{sloan2004extremal}. We further show that MHC is the limiting case of the maximal $s$-polarization (\ie, MHP) as $s\rightarrow\infty$.

\vspace{1.5mm}
\begin{proposition}\label{mhc}
For a compact metric space $(\mathbb{S}^{d-1},\rho)$, we can have the following equation:
\begin{equation}
\footnotesize
    \lim_{s\rightarrow\infty}\big(\mathcal{P}_s(\mathbb{S}^{d-1},n)\big)^{\frac{1}{s}}=\frac{1}{\eta_n^\rho(\mathbb{S}^{d-1})}
\end{equation}
where $\eta_n^\rho(\mathbb{S}^{d-1}):=\min_{\hat{\bm{W}}_n\subset\mathbb{S}^{d-1}}\alpha(\hat{\bm{W}}_n)$ is the minimal $n$-point covering radius in $\mathbb{S}^{d-1}$.
\end{proposition}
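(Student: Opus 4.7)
The plan is to reduce Proposition~\ref{mhc} to the elementary ``$\ell_s\to\ell_\infty$'' principle applied to the Riesz kernels. As $s\to\infty$, every sum of the form $\sum_i \rho(\bm{v},\hat{\bm{w}}_i)^{-s}$ is driven by its largest term; since $t\mapsto t^{-s}$ is decreasing for $s>0$, that term corresponds to the $\hat{\bm{w}}_i$ closest to $\bm{v}$. Consequently the inner minimization in the polarization $P_s(\hat{\bm{W}}_n)$ behaves, up to a multiplicative factor between $1$ and $n$, like the inner minimization defining the covering radius $\alpha(\hat{\bm{W}}_n)$, and after taking $s$-th roots the extra factor tends to $1$. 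This is the polarization analogue of Proposition~\ref{mhs}, where MHS was recovered from MHE by the same $s\to\infty$ collapse.

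Concretely, I would first write down the pointwise two-sided envelope that follows from the trivial inequality $a_{\max}\leq\sum_i a_i\leq n\,a_{\max}$ with $a_i=\rho(\bm{v},\hat{\bm{w}}_i)^{-s}$: for every configuration $\hat{\bm{W}}_n\subset\mathbb{S}^{d-1}$ and every $\bm{v}\in\mathbb{S}^{d-1}\setminus\hat{\bm{W}}_n$,
\[
\bigl(\min_{1\leq i\leq n}\rho(\bm{v},\hat{\bm{w}}_i)\bigr)^{-s}\;\leq\;\sum_{i=1}^n \rho(\bm{v},\hat{\bm{w}}_i)^{-s}\;\leq\;n\bigl(\min_{1\leq i\leq n}\rho(\bm{v},\hat{\bm{w}}_i)\bigr)^{-s}.
\]
Next, I would take the infimum over $\bm{v}$. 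Because $t\mapsto t^{-s}$ is strictly decreasing, $\min_{\bm{v}}(\min_i\rho(\bm{v},\hat{\bm{w}}_i))^{-s} = (\max_{\bm{v}}\min_i\rho(\bm{v},\hat{\bm{w}}_i))^{-s} = \alpha(\hat{\bm{W}}_n)^{-s}$, so the envelope promotes to
\[
\alpha(\hat{\bm{W}}_n)^{-s}\;\leq\;P_s(\hat{\bm{W}}_n)\;\leq\;n\,\alpha(\hat{\bm{W}}_n)^{-s}.
\]
Taking $s$-th roots and then the maximum over $\hat{\bm{W}}_n$, and using the definition $\eta_n^\rho(\mathbb{S}^{d-1})=\min_{\hat{\bm{W}}_n}\alpha(\hat{\bm{W}}_n)$, delivers
\[
\frac{1}{\eta_n^\rho(\mathbb{S}^{d-1})}\;\leq\;\bigl(\mathcal{P}_s(\mathbb{S}^{d-1},n)\bigr)^{1/s}\;\leq\;\frac{n^{1/s}}{\eta_n^\rho(\mathbb{S}^{d-1})}.
\]
Letting $s\to\infty$ collapses $n^{1/s}$ to $1$, and the sandwich theorem closes the argument.

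There is no genuine hard step; the only mild technicalities concern attainment of the extrema and the behaviour at $\bm{v}=\hat{\bm{w}}_i$. Compactness of $(\mathbb{S}^{d-1},\rho)$ and continuity of $\rho$ guarantee that the maxima defining $\alpha$ and $\eta_n^\rho$, and the minima defining $P_s$, are actually attained, so every $\min$ and $\max$ above is legitimate. The points $\bm{v}=\hat{\bm{w}}_i$ give $P_s=+\infty$ formally but are never infimizers, so restricting to $\bm{v}\notin\hat{\bm{W}}_n$ is harmless. The only non-routine step is identifying $\min_{\bm{v}}(\min_i\rho(\bm{v},\hat{\bm{w}}_i))^{-s}$, by definition of $\alpha$, as precisely $\alpha(\hat{\bm{W}}_n)^{-s}$; after that the proof is purely a squeeze.
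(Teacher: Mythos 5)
Your proof is correct and follows essentially the same route as the paper: sandwich $P_s(\hat{\bm{W}}_n)$ between $\alpha(\hat{\bm{W}}_n)^{-s}$ and $n\,\alpha(\hat{\bm{W}}_n)^{-s}$, pass to the extremal configuration to get $1/\eta_n^\rho \leq (\mathcal{P}_s)^{1/s} \leq n^{1/s}/\eta_n^\rho$, and let $s\to\infty$. The paper phrases this as separate $\liminf$/$\limsup$ bounds rather than a single squeeze, but the content is identical (and your write-up even fixes a couple of typographical slips in the paper's upper-bound display).
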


Proposition~\ref{mhc} shows that MHC is a special case of MHP. However, it is highly nontrivial to effectively optimize the MHC objective since it involves two inner optimizations. One simple and straightforward way is to alternatively optimize this min-max-min problem. Alternatively, we also propose to use a smooth minimum operator is to approximate and relax the most inner minimization in order to make it differentiable, such that we can then use the unrolling technique to solve the relaxed optimization. Specifically, we can relax Eq.~\ref{eq_mhc} to the following simpler problem:
\begin{equation}\label{eq_mhc2}
\footnotesize
    \min_{\{\hat{\bm{w}}_1,\cdots,\hat{\bm{w}}_n\in\mathbb{S}^{d-1}\}}\max_{\bm{v}\in\mathbb{S}^{d-1}}-\frac{1}{\gamma}\log \sum_{i=1}^n \exp\big({-\gamma \rho(\bm{v},\hat{\bm{w}}_i)}\big)
\end{equation}
whose inner maximization can be unrolled with a few gradient ascent steps, similar to MHP. Larger $\gamma$ gives better approximation to the minimum operator.

\vspace{-2mm}
\subsection{Maximum Gram Determinant}
\vspace{-1.5mm}
Inspired by numerical integration~\cite{sloan2004extremal} and interpolation~\cite{karvonen2020kernel}, hyperspherical uniformity can be achieved by maximizing the determinant of the kernel Gram matrix of the $n$-vector set $\hat{\bm{W}}_n=\{\hat{\bm{w}}_1,\cdots,\hat{\bm{w}}_n\in\mathbb{S}^{d-1}\}$:
\begin{equation}
\footnotesize
    \max_{\{\hat{\bm{w}}_1,\cdots,\hat{\bm{w}}_n\in\mathbb{S}^{d-1}\}}\log\det\bigg(\bm{G}:=\big(K(\hat{\bm{w}}_i,\hat{\bm{w}}_j)\big)_{i,j=1}^n\bigg)
\end{equation}
where $\det(\bm{G})$ denotes the determinant of the kernel gram matrix $\bm{G}\in\mathbb{R}^{n\times n}$, and $K(\bm{u},\bm{v})$ can be expanded with $\sum_{i=1}^\infty\psi_i(\bm{u})\psi_i(\bm{v})$. $\{\psi_i\}_{i=1}^\infty$ is an orthogonal basis of $\mathcal{H}_K(\mathbb{S}^{d-1})$, the reproducing kernel Hilbert space of $K$. By approximating the kernel with the first $n$ terms, then we can have that $\tilde{\bm{G}}=\bm{\Psi}_n\bm{\Psi}_n^\top\approx\bm{G}$ in which
\begin{equation}
\footnotesize
    \bm{\Psi}_n=\begin{bmatrix}
    \psi_1(\hat{\bm{w}}_1)&\cdots&\psi_n(\hat{\bm{w}}_1)\\
    \vdots & \ddots & \vdots\\
    \psi_1(\hat{\bm{w}}_n)&\cdots&\psi_n(\hat{\bm{w}}_n)\\
    \end{bmatrix}.
\end{equation}
Therefore, maximizing $\det(\bm{G})$ can be approximated by maximizing $|\det(\bm{\Psi_n})|$ which is also known as \emph{extremal systems}~\cite{sloan2004extremal} or \emph{Fekete points}~\cite{marzo2010equidistribution,berman2011fekete,karvonen2020kernel}. Note that, the construction of Fekete points is independent of the choice of basis. We can therefore use Gaussian kernel:
\begin{equation}
\footnotesize
    K(\bm{u},\bm{v})=\exp\big( -\sum_{i=1}^d\epsilon^2(u_i-v_i)^2 \big)
\end{equation}
where $\epsilon>0$ is a scale parameter. It is also possible to use some other kernel functions, but we stick to the Gaussian kernel for simplicity. Geometrically, the kernel Gram determinant is also closely related to $n$-dimensional volume of the parallelotope formed by $\hat{\bm{W}}_n$.

\vspace{-2mm}
\subsection{Theoretical Results on Optimality}\label{optimality}
\vspace{-1.5mm}

Computing the optimal solutions to these optimizations is highly challenging and often infeasible~\cite{borodachov2019discrete}. Fortunately, under conditions on the dimensionality and the number of vectors, we can characterize the optimum.

\vspace{1.4mm}
\begin{theorem}[Simplex optimum for $2\leq n\leq d+2$]\label{optimum_1}
Let $f:(0,4]\rightarrow \mathbb{R}$ be a convex and decreasing function defined at $t=0$ by $\lim_{v\rightarrow0^+}f(v)$. If $2\leq n\leq d+2$, then we have that the vertices of regular $(n-1)$-simplices inscribed in $\mathbb{S}^d$ with centers at the origin minimize the MHE objective on the hypersphere $\mathbb{S}^d$ ($d\geq2$) with the kernel as $K_s(\hat{\bm{w}}_i,\hat{\bm{w}}_j)=f(\|\hat{\bm{w}}_i-\hat{\bm{w}}_j\|^2)$. Moreover, $f$ is strictly convex and strictly decreasing, then these are the only energy minimizing $n$-point configurations.
\end{theorem}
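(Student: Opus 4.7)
The plan is to reduce the problem to a one-dimensional convex inequality via two successive averaging steps. Set $t_{ij} := \|\hat{\bm{w}}_i - \hat{\bm{w}}_j\|^2 = 2 - 2\langle \hat{\bm{w}}_i, \hat{\bm{w}}_j\rangle \in [0, 4]$, so the MHE objective reads $E(\hat{\bm{W}}_n) = \sum_{i\neq j} f(t_{ij})$. The strategy is to (a) use only the unit-norm constraint to upper bound the average of the $t_{ij}$, (b) push this bound through to the energy using convexity and monotonicity of $f$, and (c) verify that the regular simplex saturates every inequality.

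For step (a), expanding and using $\|\hat{\bm{w}}_i\|=1$ gives
$$\sum_{i\neq j} t_{ij} \;=\; 2n(n-1) - 2\!\left(\big\|\textstyle\sum_i \hat{\bm{w}}_i\big\|^2 - n\right) \;=\; 2n^2 - 2\big\|\textstyle\sum_i \hat{\bm{w}}_i\big\|^2 \;\le\; 2n^2,$$
so the average $\bar t := \tfrac{1}{n(n-1)}\sum_{i\neq j} t_{ij}$ obeys $\bar t \le \tfrac{2n}{n-1} \in (0,4]$, with equality iff $\sum_i \hat{\bm{w}}_i = \bm{0}$. Step (b) is a one-line combination: Jensen's inequality for the convex $f$ yields $\tfrac{1}{n(n-1)}\sum_{i\neq j} f(t_{ij}) \ge f(\bar t)$, and since $f$ is decreasing with $\bar t \le \tfrac{2n}{n-1}$, we conclude $f(\bar t) \ge f\!\left(\tfrac{2n}{n-1}\right)$, hence $E(\hat{\bm{W}}_n) \ge n(n-1)\, f\!\left(\tfrac{2n}{n-1}\right)$. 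For step (c), I would verify directly that the vertices of a regular $(n-1)$-simplex inscribed in $\mathbb{S}^d$ with centroid at the origin satisfy $\langle \hat{\bm{w}}_i, \hat{\bm{w}}_j\rangle = -\tfrac{1}{n-1}$, giving $t_{ij} \equiv \tfrac{2n}{n-1}$ and $\sum_i \hat{\bm{w}}_i = \bm{0}$, which makes both inequalities tight. The hypothesis $n \le d+2$ is precisely what guarantees that such a regular simplex is realizable as a vector configuration in $\mathbb{R}^{d+1}$.

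For the uniqueness claim under strict convexity and strict decrease, I would trace the equality cases: strict convexity forces all $t_{ij}$ equal in the Jensen step, and strict monotonicity forces $\bar t = \tfrac{2n}{n-1}$, i.e., $\sum_i \hat{\bm{w}}_i = \bm{0}$. Any $n$ unit vectors in $\mathbb{R}^{d+1}$ that are pairwise equidistant and sum to zero form the vertex set of a regular $(n-1)$-simplex, up to an orthogonal transformation of $\mathbb{R}^{d+1}$, which gives the stated characterization. The step that will require the most care — and what I see as the only real stumbling block — is this equality analysis combined with the boundary value at $t=0$: if $f(0)=\lim_{v\to 0^+} f(v)$ is $+\infty$ one must argue separately that a minimizer cannot have coincident points, and if it is finite one must check that Jensen remains applicable even when some $t_{ij}$ vanish. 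Beyond this bookkeeping, no deeper machinery than the pairwise-distance identity and Jensen's inequality is needed.
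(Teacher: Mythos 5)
Your proposal is correct and follows essentially the same route as the paper's proof: the identity $\sum_{i\neq j}\|\hat{\bm{w}}_i-\hat{\bm{w}}_j\|^2 = 2n^2 - 2\|\sum_i\hat{\bm{w}}_i\|^2 \leq 2n^2$, followed by Jensen's inequality for the convex $f$ and then its monotonicity, with the regular simplex saturating both inequalities and strictness giving uniqueness. Your additional attention to the boundary behaviour at $t=0$ is a point the paper's proof passes over silently, but it does not change the argument.
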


Theorem~\ref{optimum_1} indicates that the vertices of a regular $(d+1)$-simplex (\ie, $(d+1)$-dimensional convex hull of $d+2$ distinct vectors with equal pairwise distances) are universally optimal. Universal optimality~\cite{cohn2007universally} refers to a finite subset $\mathcal{C}\subset\mathbb{S}^d$ that minimizes potential energy $\sum_{\{\bm{x}\neq\bm{y}\}\in \mathbb{S}^d} f(\|\bm{x}-\bm{y}\|^2)$ among all configurations of $|\mathcal{C}|$ points on $\mathbb{S}^d$ for every completely monotonic potential function $f$. Particularly, the vertices of a regular $(d+1)$-simplex are the minimizer of MHE with the Riesz $s$-kernel (for $s\geq-2$), MHS, and MHC.

\vspace{1.4mm}
\begin{theorem}[Cross-polytope optimum for $n= 2d+2$]\label{optimum_2}
The vertices of the $\frac{n}{2}$-cross-polytope form a universally optimal $n$-point configuration on $\mathbb{S}^d$.
\end{theorem}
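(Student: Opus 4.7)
The plan is to derive Theorem~\ref{optimum_2} from the Cohn--Kumar linear-programming framework for universally optimal spherical configurations, after identifying the cross-polytope as a \emph{sharp configuration} in the sense of \cite{cohn2007universally}. A sharp configuration has exactly $m$ distinct inner-product values and is a spherical $(2m-1)$-design; such configurations are universally optimal, and it suffices to verify these two combinatorial properties for the cross-polytope.

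First, I would fix coordinates so the $(d{+}1)$-cross-polytope is $\mathcal{C}=\{\pm\bm{e}_1,\dots,\pm\bm{e}_{d+1}\}\subset\mathbb{S}^d$ with $|\mathcal{C}|=2d+2=n$. Distinct pairs realize exactly two inner-product values, $\{-1,0\}$, so $m=2$. I would then verify that $\mathcal{C}$ is a spherical $3$-design: by invariance under sign flips and coordinate permutations, every monomial of degree $\leq 3$ with an odd exponent averages to $0$ both on $\mathcal{C}$ and on $\mathbb{S}^d$, while for the surviving degree-$0$ and degree-$2$ monomials a direct computation ($\sum_{\bm{x}\in\mathcal{C}}x_j^2=2=|\mathcal{C}|\int_{\mathbb{S}^d}x_j^2\,d\mu$) matches the two averages. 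Since $\mathcal{C}$ has $m=2$ active inner products and is a $(2m{-}1)=3$-design, it is sharp.

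Second, I would invoke the LP bound. By Schoenberg's theorem the Gegenbauer polynomials $G_k^{(d)}$ are positive definite on $\mathbb{S}^d$, so for any polynomial $h(t)=\sum_{k\geq 0}\alpha_k G_k^{(d)}(t)$ with $\alpha_k\geq 0$ for $k\geq 1$ and any $N$-point configuration $\mathcal{C}'\subset\mathbb{S}^d$,
\begin{equation*}
\sum_{\bm{x}\neq\bm{y}\in\mathcal{C}'}h(\inp{\bm{x}}{\bm{y}})\geq \alpha_0 N^2-N h(1).
\end{equation*}
If additionally $h(t)\leq f(2-2t)$ on $[-1,1]$, the same lower bound transfers to the energy $\sum_{\bm{x}\neq\bm{y}}f(\|\bm{x}-\bm{y}\|^2)$ since $\|\bm{x}-\bm{y}\|^2=2-2\inp{\bm{x}}{\bm{y}}$. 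I would then take $h$ to be the Hermite interpolant of $f(2-2t)$ of degree $2m-1=3$, matching the value and first derivative of $f(2-2t)$ at the two active inner products $t\in\{-1,0\}$.

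Third, I would verify equality on $\mathcal{C}$: Hermite interpolation forces $h(\inp{\bm{x}}{\bm{y}})=f(\|\bm{x}-\bm{y}\|^2)$ for every ordered pair of distinct vertices, and the $3$-design property gives $\sum_{\bm{x},\bm{y}\in\mathcal{C}}G_k^{(d)}(\inp{\bm{x}}{\bm{y}})=0$ for $k=1,2,3$, so the LP bound is saturated on $\mathcal{C}$ with $N=2d+2$. Comparing with any competing $(2d+2)$-point set $\mathcal{C}'$ yields universal optimality. The main obstacle is establishing the two sign conditions on $h$: the pointwise inequality $h(t)\leq f(2-2t)$ on $[-1,1]$ and the non-negativity of the Gegenbauer coefficients $\alpha_1,\alpha_2,\alpha_3$. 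The first reduces to the sign pattern of iterated divided differences of $f$, which is the defining feature of complete monotonicity, while the second is the delicate ingredient of the Cohn--Kumar argument and relies crucially on sharpness, i.e.\ on the exact match between the number of active inner products and the design strength via the relation $t=2m-1$.
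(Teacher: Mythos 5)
The paper does not give its own proof of Theorem~\ref{optimum_2}; it obtains it by citation to \cite{cohn2007universally,kolushov1997extremal,yudin1992minimum,borodachov2019discrete}, and your proposal reconstructs exactly that route: you correctly identify the $(d{+}1)$-cross-polytope $\{\pm\bm{e}_1,\dots,\pm\bm{e}_{d+1}\}\subset\mathbb{S}^d$ as a sharp configuration (two inner products $\{-1,0\}$, and your moment check that it is a spherical $3$-design is right), and you then run the Cohn--Kumar linear-programming argument with a low-degree interpolant of $g(t):=f(2-2t)$ at the active inner products, with tightness on the cross-polytope coming from the $3$-design property. So in substance you are following the same path the paper leans on, just spelled out.

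Two remarks on the details you leave open. First, a fine point of the construction: for antipodal sharp configurations, \cite{cohn2007universally} interpolate only to first order at $t=-1$ (value, not derivative), giving a degree-$(2m-2)$ polynomial; your full Hermite interpolant of degree $3$ is not what their positivity theorem directly addresses, although it does still work here, since it equals their interpolant plus $g[-1,-1,0,0]\,(t+1)t^2$ with a nonnegative divided difference and $(t+1)t^2=t^3+t^2$ positive definite. Second, you defer the Gegenbauer-coefficient nonnegativity to the ``delicate ingredient'' of Cohn--Kumar, but for this particular configuration it is elementary and you could close that gap yourself: in Newton form at the nodes $-1,0,0$ (or $-1,-1,0,0$) the interpolant has nonnegative coefficients (divided differences of $g$, all of which are nonnegative because complete monotonicity of $f$ gives $g^{(k)}=(-2)^k f^{(k)}(2-2t)\geq 0$) with respect to the basis $1,\ (t+1),\ (t+1)t,\ (t+1)t^2$, each of which has nonnegative monomial coefficients; and each monomial $t^k=\inp{\bm{x}}{\bm{y}}^k$ is positive definite on $\mathbb{S}^d$ (Schur power of the Gram kernel), hence has a nonnegative Gegenbauer expansion by Schoenberg's theorem. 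The same divided-difference signs give the pointwise bound $h\leq g$ on $[-1,1)$ (which is all that is needed, since $g(1)=f(0)$ may be infinite). With these observations your sketch becomes a complete, self-contained proof of the asserted universal optimality, which is stronger than the paper's citation-only treatment; as written, it is a correct sketch whose hardest step is quoted from the cited source rather than proved.
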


Theorem~\ref{optimum_2} can be obtained from \cite{borodachov2019discrete,cohn2007universally,kolushov1997extremal,yudin1992minimum} and it indicates the cross-polytope point configurations are the minimizer of MHE and MHS. A cross-polytope can be constructed by the convex hull of unit vectors pointing along each Cartesian coordinate axis. For example, the convex hull of 6 points: $(0,0,\pm1)$, $(0,\pm1,0)$, and $(\pm1,0,0)$ is a $3$-cross-polytope. Most importantly, this result builds a bridge that intrinsically connects hyperspherical uniformity and orthogonality. Orthogonality works in a relatively restricted setting and can be viewed as a special case of hyperspherical uniformity.

It is easy to verify that both simplex and cross-polytope form optimal point configurations in $\mathbb{S}^1$ and $\mathbb{S}^2$. Theorem~\ref{optimum_1} and Theorem~\ref{optimum_2} essentially show that such a conclusion generalizes intuitively to high dimensional space. Moreover, \cite{cohn2007universally} shows that there is no universally optimal point set on $\mathbb{S}^d$ that contains more points than the regular simplex and fewer points than the cross-polytope (\ie, $d+2<n<2d+2$). For $n>2d+2$, characterizing optimum becomes much more involved~\cite{borodachov2019discrete}.

\vspace{-2.2mm}
\section{A Statistical Perspective from Uniformity Testing on Hypersphere}
\vspace{-1.7mm}

This section aims to gain more interesting insights by casting an alternative view on hyperspherical uniformity from a statistical uniformity testing perspective. Since we aim to promote hyperspherical uniformity, we can draw inspirations from the formulation of the test statistic in uniformity testing. Given \emph{i.i.d.} samples $\bm{u}_1,\cdots,\bm{u}_n\in\mathbb{S}^{d-1}$ of a unit random vector $\bm{u}$, the assessment of the presence of uniformity on the hypersphere is formalized as the testing of the null hypothesis $\mathcal{H}_0:\bm{P}=\textnormal{Uniform}(\mathbb{S}^{d-1})$ against $\mathcal{H}_1:\bm{P}\neq\textnormal{Uniform}(\mathbb{S}^{d-1})$, where $\bm{P}$ denotes the probability distribution of $\bm{u}$ and $\textnormal{Uniform}(\mathbb{S}^{d-1})$ denotes the uniform distribution on $\mathbb{S}^{d-1}$. 
\par
The intuitive idea of the Sobolev test is to map the hypersphere $\mathbb{S}^{d-1}$ into the Hilbert space $L^2(\mathbb{S}^{d-1},\mu)$ of square-integrable functions on $\mathbb{S}^{d-1}$ by a function $t:\mathbb{S}^{d-1}\rightarrow L^2(\mathbb{S}^{d-1},\mu)$ such that, if $\bm{u}$ is uniformly distributed on the hypersphere, then the mean of $t(\bm{u})$ will be zero. Specifically, we denote $\bm{\epsilon}_k$ ($p_{d,k}=\textnormal{dim}\bm{\epsilon}_k$) as the space of eigenfunctions $\mathbb{S}^{d-1}\rightarrow\mathbb{R}$ corresponding to the $k$-th non-zero eigenvalue of the Laplacian, there exists a well-defined mapping $t_k:\mathbb{S}^{d-1}\rightarrow\bm{\epsilon}_k$ which can be written as $t_k(\bm{u}):=\sum_{i=1}^{p_{d,k}}g_{i,k}(\bm{u})g_{i,k}$ ($\{g_{i,k}\}_{i=1}^{p_{d,k}}$ constructs an orthonormal basis of $\bm{\epsilon}_k$). We let $\{v_k\}_{k=1}^\infty$ be a sequence such that $\sum_{k=1}^\infty v_k^2 p_{d,k}<\infty$, and then the function $\bm{u}\rightarrow t(\bm{u}):=\sum_{k=1}^\infty\bm{v}_k t_k(\bm{u})$ is a mapping from $\mathbb{S}^{d-1}$ to the Hilbert space $L^2(\mathbb{S}^{d-1},\mu)$ of square-integrable real functions on $\mathbb{S}^{d-1}$ \emph{w.r.t.} $\mu$ which is the uniform measure on $\mathbb{S}^{d-1}$. The Sobolev test~\cite{beran1968testing,gine1975invariant} rejects $\mathcal{H}_0$ for large values of the following test statistic:

\vspace{-3.2mm}
\begin{equation}
\footnotesize
    S_n:=\frac{1}{n}\sum_{i,j}^n\sum_{k=1}^\infty v_k^2\langle t_k(\bm{u}_i),t_k(\bm{u}_j)\rangle
\end{equation}
\vspace{-3.2mm}

where $\langle f,g \rangle:=\int_{\mathbb{S}^{d-1}}f(\bm{u})g(\bm{u})\textnormal{d}\mu(\bm{u})$ denotes the inner product on $L^2(\mathbb{S}^{d-1},\mu)$. \cite{prentice1978invariant} gives an explicit form for $\langle t_k(\bm{u}_i),t_k(\bm{u}_j)\rangle$ in $\mathbb{S}^{d-1}$ (assuming $d>2$):

\vspace{-5.5mm}
\begin{equation}
\footnotesize
\langle t_k(\bm{u}_i),t_k(\bm{u}_j)\rangle=(1+\frac{2k}{d-2})C_k^{(d-2)/2}(\bm{u}_i^\top\bm{u}_j)
\end{equation}
\vspace{-5mm}

where $C_k^\alpha(\cdot)$ denotes the Gegenbauer polynomial of index $\alpha$ and order $k$. The  asymptotic distribution of $S_n$ under $\mathcal{H}_0$ is the infinite linear combination of independent chi-squared distribution $\sum_{k=1}^\infty v_k^2\mathcal{X}_{p_{d,k}}^2$.

\textbf{Connection to MHE}. Let $v_k=0$ when $k$ is even and $v_k=(\pi k)^{-1}$ when $k$ is odd. As one of the Sobolev test, Ajne test~\cite{prentice1978invariant,ajne1968simple} uses the following test statistic: 

\vspace{-3.5mm}
\begin{equation}\label{ajne}
\footnotesize
    A_n=\frac{n}{4}-\frac{1}{n\pi}\sum_{1\leq i <j\leq n}\arccos(\bm{u}_i^\top\bm{u}_j)
\end{equation}
\vspace{-3.5mm}

which rejects $\mathcal{H}_0$ for large values. It is well connected to MHE in the sense that both MHE and Eq.~\eqref{ajne} are formulated based on pairwise relations. More specifically, Minimizing Eq.~\eqref{ajne} \emph{w.r.t.} $\{\bm{u}_i\}_{i=1}^n$ is in fact equivalent to MHE with Riesz $s$-kernel where $s=-1$.
\par

\textbf{Connection to R-MHP}. Rayleigh test~\cite{rayleigh1919xxxi}, which is a special case of the Sobolev test (with $v_1=1$ and $v_k=0$ for $k\geq2$), gives the test statistic in $\mathbb{S}^{d-1}$: $R_n=nd\|\bar{\bm{u}}\|^2$ where $\bar{\bm{u}}:=\frac{1}{n}\sum_{i=1}^n\bm{u}_i$. $R_n$ is asymptotically distributed as $\mathcal{X}_d^2$ under $\mathcal{H}_0$. Therefore, $\|\bar{\bm{u}}\|$ is approaching to zero, which exactly matches the objective function of the R-MHP.
\par

\textbf{Connection to MHS and MHC}. Spacing tests on $\mathbb{S}^1$ are constructed from the gaps between the ordered samples (\ie, $\thickmuskip=2mu \medmuskip=2mu \theta_{i+1}>\theta_{i},\forall i$): $d_i:=\theta_{i+1}-\theta_{i}$ where $i=1,\cdots,n-1$ and $d_n:=2\pi-(\theta_n-\theta_{1})$. A special spacing test, called Range test~\cite{rao1969some}, has the following statistic: $T_n:=2\pi-\max_{i} d_i$ which rejects $\mathcal{H}_0$ with low values. Then we have the following proposition that builds connection among range test, MHS and MHC:

\vspace{1.4mm}
\begin{proposition}\label{range_test}
Maximizing the statistic of range test on $\mathbb{S}^1$ is equivalent to MHS and MHC on $\mathbb{S}^1$.
\end{proposition}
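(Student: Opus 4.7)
I would parameterize $n$ points on $\mathbb{S}^1$ by their sorted angular coordinates $\theta_1<\cdots<\theta_n\in[0,2\pi)$, with consecutive arc gaps $d_i=\theta_{i+1}-\theta_i$ ($i<n$) and $d_n=2\pi-(\theta_n-\theta_1)$, so that $\sum_{i=1}^n d_i=2\pi$ is a fixed constraint. All three objectives should be rewritten purely in terms of the gap vector $(d_1,\ldots,d_n)$, and then compared.

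The first step is to reduce each of the three problems to a one-line statement about gaps. For the range test, maximizing $T_n=2\pi-\max_i d_i$ is by definition equivalent to minimizing $\max_i d_i$. For MHC on $\mathbb{S}^1$, I would argue that the point $\bm{v}\in\mathbb{S}^1$ farthest from $\hat{\bm{W}}_n$ must lie at the midpoint of some gap (using that $\rho(\bm{v},\cdot)$ is piecewise linear in the arc-length parameter between two consecutive $\hat{\bm{w}}_i$'s and attains its maximum at the midpoint), yielding $\alpha(\hat{\bm{W}}_n)=\tfrac{1}{2}\max_i d_i$; hence MHC is equivalent to minimizing $\max_i d_i$. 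For MHS on $\mathbb{S}^1$, I would show the minimum geodesic distance over all pairs is attained by an adjacent pair (since if $\hat{\bm{w}}_i,\hat{\bm{w}}_j$ are non-adjacent, there is an intermediate point $\hat{\bm{w}}_k$ strictly closer to one of them along the short arc), giving $\vartheta(\hat{\bm{W}}_n)=\min_i d_i$; hence MHS is equivalent to maximizing $\min_i d_i$.

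The second step is to observe that under the constraint $\sum_i d_i=2\pi$ with $d_i\ge0$, both $\min_i d_i$ and $-\max_i d_i$ are maximized \emph{simultaneously and uniquely up to cyclic relabeling} by the equispaced configuration $d_1=\cdots=d_n=2\pi/n$. Indeed, $\min_i d_i\le\tfrac{1}{n}\sum_i d_i=2\pi/n\le\max_i d_i$, with equality on either side iff all gaps coincide. Therefore the optimal gap vectors for the three problems coincide, which translates to the same optimal point configurations on $\mathbb{S}^1$ (the equispaced vertices of a regular $n$-gon, unique up to global rotation).

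The only subtlety is that the three objectives are genuinely different functions of the gap vector, so the equivalence in the proposition is to be read at the level of optimizers rather than at the level of objective values. Making this point precise — in particular justifying the reductions $\alpha=\tfrac{1}{2}\max_i d_i$ and $\vartheta=\min_i d_i$ on the circle — is the only real content of the argument; everything else follows from the elementary pigeonhole-style bound on $\min$ and $\max$ of a nonnegative sequence with fixed sum. I would therefore present the proof as three short gap-based reformulations followed by the one-line common-optimizer argument.
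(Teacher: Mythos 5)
Your proposal is correct and follows essentially the same route as the paper's proof: rewrite all three objectives in terms of the consecutive angular gaps $d_i$ (range test and MHC minimize $\max_i d_i$, MHS maximizes $\min_i d_i$) and observe that, under the constraint $\sum_i d_i = 2\pi$, all three are optimized exactly by the equispaced configuration. Your version is in fact slightly more careful than the paper's, since you explicitly justify the reductions $\vartheta = \min_i d_i$ and $\alpha = \tfrac{1}{2}\max_i d_i$ and spell out the pigeonhole inequality $\min_i d_i \le 2\pi/n \le \max_i d_i$, as well as the sense in which the ``equivalence'' holds (common optimizers rather than equal objective values).
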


In summary, there is a close connection between statistical uniformity testing on the hypersphere and our proposed objectives towards hyperspherical uniformity. Revisiting classic uniformity tests can not only help us to gain more insights, but also inspire more useful ways to promote hyperspherical uniformity.

\vspace{-2.3mm}
\section{Discussions}
\vspace{-2.05mm}

\textbf{Removing collinearity}. Promoting hyperspherical uniformity may lead to the neuron collinearity problem, which causes redundancy and is therefore undesirable in neural networks~\cite{roychowdhury2018reducing,shang2016understanding}. In order to address this, we use the virtual neuron trick in \cite{liu2018learning,lyu2017spherical}. We construct a set of virtual neurons that always have the opposite directions to the original neurons and then regularize both original and virtual neurons together. Specifically, we assume a set of original neurons $\{\bm{w}_i\}_{i=1}^n$ and the virtual opposite neurons are $\{\bm{v}_i=-\bm{w}_i\}_{i=1}^n$. Finally we will apply hyperspherical uniformity to simultaneously regularize all the neurons $\{\bm{w}_1,\cdots,\bm{w}_n,\bm{v}_1,\cdots,\bm{v}_n\}$. The virtual neuron trick is not applicable to R-MHP.

\vspace{-0.4mm}

\textbf{Why hyperspherical uniformity}. A popular choice out of many existing relational regularizations is the orthogonality regularization~\cite{brock2016neural,liu2017hyper,bansal2018can,huang2018orthogonal,jia2019orthogonal,qi2020deep}. Despite its popularity, orthogonality yields a few drawbacks. When the number of neurons exceeds the neuron dimension, promoting orthogonality among neurons will become problematic~\cite{liu2018learning}. In contrast, hyperspherical uniformity avoids such a problem while also being more general. It works well in all circumstances.

\vspace{-0.4mm}

\textbf{Increasing effective width of neural networks}. Empirical evidences in \cite{roychowdhury2018reducing,shang2016understanding,han2015deep,han2015learning} show that naively training a neural network typically leads to severe neuron redundancy and \cite{zagoruyko2016wide} shows that increasing the width of neural networks can significantly improve generalization. Hyperspherical uniformity can increase the effective width and improve the representation efficiency by penalizing hyperspherically similar neurons.

\vspace{-0.4mm}

\textbf{Decoupling neuron norm and angle}. The norm of neurons is typically regularized by individual regularizations, while the relational regularizations are designed for neuron directions. Orthonormality does not decouple the two components, since it also regularizes neuron norm to be close to $1$, which serves a redundant role to weight decay. In contrast, hyperspherical uniformity fully decouples the two regularization components.

\vspace{-0.38mm}

\textbf{Optimization difficulty}. Although all the proposed regularization objectives share the same goal, there is a substantial difference when actually performing gradient descent with them. MHP and MHC define max-min (or min-max) problems which require alternative update or unrolled formulation to solve, so optimizing them may lead to much more bad local minima and therefore is not as stable as MHE, MHS and MGD.

\begin{figure}[t]
\vspace{0.8mm}
  \centering
  \renewcommand{\captionlabelfont}{\scriptsize}
  \setlength{\abovecaptionskip}{1.6pt}
  \setlength{\belowcaptionskip}{-11pt}
\includegraphics[width=3.25in]{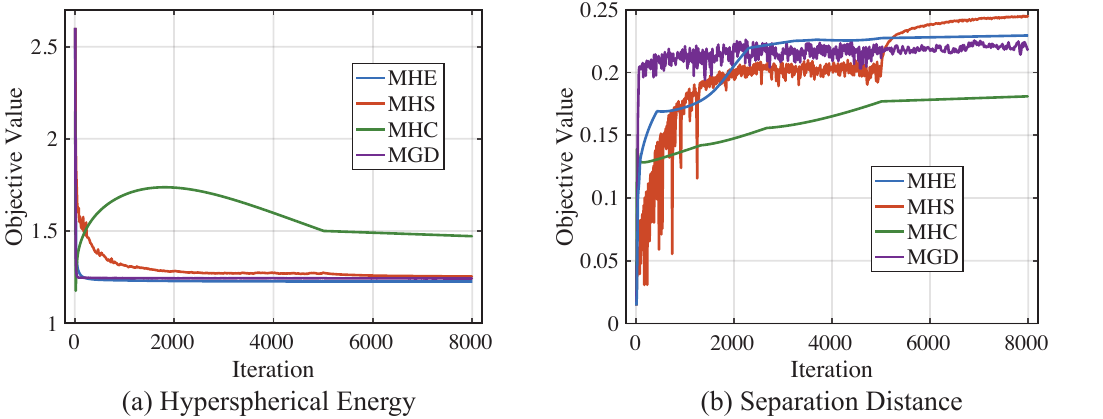}
  \caption{\scriptsize Comparison of local separatedness and global uniformity among MHE, MHS, MHC and MGD. (a) Hyperspherical energy vs. Iteration. (b) Separation distance vs. Iteration. Experimental details are in Appendix~\ref{exp_detail}. The figure is best viewed in color.}\label{separation_exp}
\end{figure}

\vspace{-0.3mm}
\section{Properties and Insights}\label{theoretical_insights}

\vspace{-2.4mm}
\subsection{Geometric Properties}
\vspace{-2mm}
We discuss the geometric properties of the proposed regularizations by first connecting them with the hyperspherical uniformity. \cite{liu2018learning} has established that minimizing MHE ultimately leads to hyperspherical uniformity. This is achieved by showing that unit point mass at each vector in $\hat{\bm{W}}_n$ asymptotically approaches to the spherical measure on $\mathbb{S}^{d-1}$. \cite{kolmogorov1959varepsilon,borodachov2007asymptotics} have shown that asymptotically best-packing points on a rectifiable set are uniformly distributed. MHS is essentially a special case of this result, indicating that MHS also asymptotically leads to hyperspherical uniformity. \cite{borodachov2014asymptotics,borodachov2018optimal,erdelyi2013riesz} and \cite{kolmogorov1959varepsilon} prove the same argument holds for MHP and MHC, respectively. \cite{marzo2010equidistribution} proves the asymptotic equidistribution of Fekete points on the hypersphere, implying that MGD also leads to hyperspherical uniformity.
\par
Although all the regularization objectives asymptotically leads to hyperspherical uniformity, there are still some delicate differences in terms of the geometric properties. For example, MHS focuses on more on \emph{local separatedness} by ensuring any nearest two vectors to be far away, while MHE characterizes the \emph{global uniformity} with the sum of pairwise energies. Therefore, we can use the separation distance (\ie, $\vartheta(\hat{\bm{W}}_n)$ in Eq.~\eqref{sep_dis}) as a measure of local separatedness and use the hyperspherical energy (\ie, $E_s(\hat{\bm{W}}_n)$ in Eq.~\eqref{energy}) as a measure of global uniformity. We connect MHE and MHC with the following geometric properties.

\begin{figure*}[t]
  \centering
  \setlength{\abovecaptionskip}{4pt}
  \setlength{\belowcaptionskip}{-6pt}
  \renewcommand{\captionlabelfont}{\scriptsize}
    \includegraphics[width=1\textwidth]{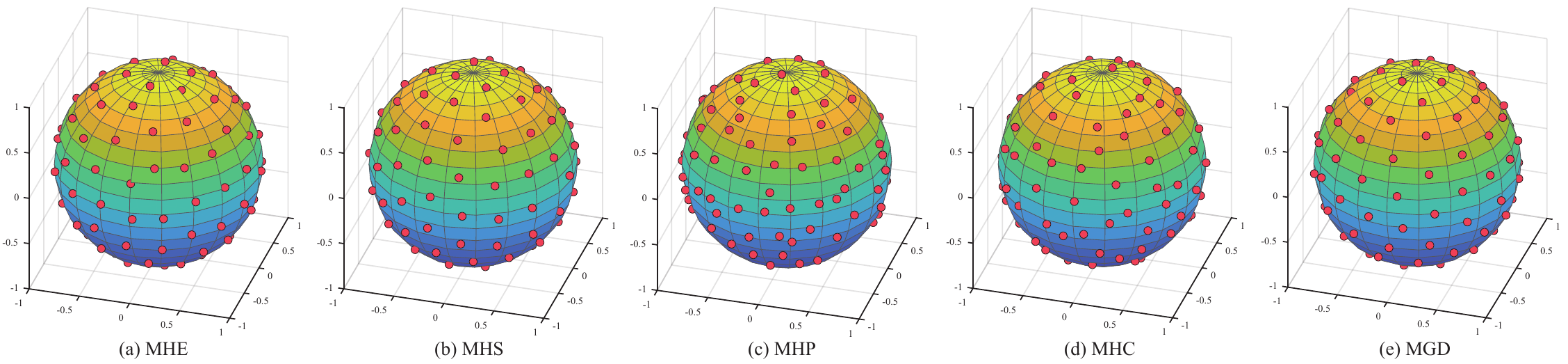}
    \caption{\scriptsize Visualization of regularization effects for MHE, MHS, MHP, MHC and MGD. We naively optimize all the proposed uniformity regularizations with gradient descent on 3D spherical data. Experimental details are given in Appendix~\ref{exp_detail}. The figure is best viewed in color.}\label{3d_sphere}
\end{figure*}

\begin{theorem}\label{lower_MHE}
For $d-2\leq s< d-1$, there is a constant $\lambda_{s,d}>0$ such that for $n\geq 2$ and any solution $\hat{\bm{W}}^*_n$ that attains the optima of MHE, we have $\vartheta(\hat{\bm{W}}^*_n)\geq\frac{\lambda_{s,d}}{n^{1/(d-1)}}$.
\end{theorem}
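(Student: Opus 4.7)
The plan is to combine the first-order optimality enjoyed by each point of the minimizer with a packing-based lower bound on the sum of inverse distances. Write $\vartheta := \vartheta(\hat{\bm{W}}_n^*)$ for brevity and let $\sigma$ denote the normalized surface measure on $\mathbb{S}^{d-1}$.

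First I would extract pointwise optimality. Because $\hat{\bm{W}}_n^*$ minimizes $E_s$, the total energy reads $E_s = \sum_i U_i(\hat{\bm{w}}_i)$ where
\begin{equation*}
U_i(\bm{x}) \;:=\; \sum_{k\neq i}\|\bm{x}-\hat{\bm{w}}_k\|^{-s},\qquad \bm{x}\in\mathbb{S}^{d-1},
\end{equation*}
and swapping $\hat{\bm{w}}_i\mapsto\bm{x}$ changes $E_s$ by $2(U_i(\bm{x})-U_i(\hat{\bm{w}}_i))$. Optimality therefore forces $U_i(\hat{\bm{w}}_i)=\min_{\bm{x}\in\mathbb{S}^{d-1}}U_i(\bm{x})$, so in particular
\begin{equation*}
U_i(\hat{\bm{w}}_i) \;\leq\; \int_{\mathbb{S}^{d-1}} U_i(\bm{x})\,d\sigma(\bm{x}) \;=\; (n-1)\,I_s,
\end{equation*}
where $I_s:=\int_{\mathbb{S}^{d-1}}\|\bm{y}_0-\bm{y}\|^{-s}\,d\sigma(\bm{y})$ is rotationally invariant and finite precisely because $s<d-1$ (near the pole the integrand is $O(\theta^{d-2-s})$).

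Next I would develop a matching lower bound on $U_i(\hat{\bm{w}}_i)$ that pulls in $\vartheta$. Order the distances from $\hat{\bm{w}}_i$ to the other points as $\vartheta\leq \vartheta_1^{(i)}\leq\cdots\leq \vartheta_{n-1}^{(i)}$. Every pair in $\hat{\bm{W}}_n^*$ is separated by at least $\vartheta$, so the open caps of radius $\vartheta/2$ around the points are pairwise disjoint; comparing surface areas inside the cap of radius $R+\vartheta/2$ centered at $\hat{\bm{w}}_i$ yields the packing estimate $\vartheta_l^{(i)}\geq c_d\,l^{1/(d-1)}\,\vartheta$ for a dimension-only constant $c_d>0$ (up to a harmless truncation near the sphere's diameter). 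Feeding this in and using $s/(d-1)<1$ to sum the resulting $p$-series,
\begin{equation*}
U_i(\hat{\bm{w}}_i) \;=\; \sum_{l=1}^{n-1}\bigl(\vartheta_l^{(i)}\bigr)^{-s} \;\geq\; c_d^{-s}\,\vartheta^{-s}\sum_{l=1}^{n-1} l^{-s/(d-1)} \;\geq\; c_{s,d}'\,\vartheta^{-s}\,n^{1-s/(d-1)}.
\end{equation*}

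Chaining the two bounds gives $c_{s,d}'\,\vartheta^{-s}\,n^{1-s/(d-1)}\leq (n-1)\,I_s$, which rearranges to $\vartheta^{-s}\leq C_{s,d}\,n^{s/(d-1)}$ and hence $\vartheta\geq \lambda_{s,d}\,n^{-1/(d-1)}$, as claimed. The main obstacle is the packing step: verifying the ordered-distance inequality with a constant that depends only on $d$, and controlling the truncation at the sphere's diameter so that the partial $p$-sum retains its growth $n^{1-s/(d-1)}$ for every $n\geq 2$ and not only asymptotically. The upper hypothesis $s<d-1$ is essential for the integrability of $I_s$; the lower hypothesis $s\geq d-2$ conveniently forces $s\geq 0$ (since $d\geq 2$), so that $\|\cdot\|^{-s}$ is monotone decreasing in distance and the packing-to-potential comparison is valid without sign flips.
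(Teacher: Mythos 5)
There is a genuine gap in your second step, and it is the step that carries all the difficulty. The cap-packing argument gives a \emph{lower} bound on the ordered distances, $\vartheta_l^{(i)}\geq c_d\,l^{1/(d-1)}\vartheta$, and since $s\geq d-2\geq 0$ the map $t\mapsto t^{-s}$ is decreasing, so what follows is $(\vartheta_l^{(i)})^{-s}\leq c_d^{-s}\,l^{-s/(d-1)}\vartheta^{-s}$, i.e.\ an \emph{upper} bound $U_i(\hat{\bm{w}}_i)\lesssim \vartheta^{-s}n^{1-s/(d-1)}$ --- the reverse of the inequality you wrote. You then have two upper bounds on $U_i(\hat{\bm{w}}_i)$ (this one and $(n-1)I_s$), which cannot be chained to bound $\vartheta$ from below. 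Worse, the lower bound you actually need, $U_i(\hat{\bm{w}}_i)\gtrsim \vartheta^{-s}n^{1-s/(d-1)}$, is false precisely in the regime you are trying to exclude: if the minimizer were roughly equidistributed except for one anomalously close pair at distance $\vartheta\ll n^{-1/(d-1)}$, then for the point $k$ of that pair one only has $U_k(\hat{\bm{w}}_k)\approx \vartheta^{-s}+O(n)$, far below $\vartheta^{-s}n^{1-s/(d-1)}$; the single nearest-neighbor term alone only yields $\vartheta\gtrsim n^{-1/s}$, which is weaker than the claim since $s<d-1$. Packing uses only separation, so it cannot detect that the \emph{energy} penalizes a single close pair strongly enough; some genuinely potential-theoretic input is required.

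Your first step (each $\hat{\bm{w}}_i$ minimizes its own potential $U_i$, hence $U_i(\hat{\bm{w}}_i)\leq (n-1)I_s[\sigma_{d-1}]$, finite because $s<d-1$) is correct and coincides with the first lemma used in the paper. What the paper does instead for the second half (following Kuijlaars--Saff--Sun) is to leave the sphere: it takes the closest pair $(\hat{\bm{w}}_k^*,\hat{\bm{w}}_l^*)$, considers the exterior point $\bm{z}=(1+n^{-1/(d-1)})\hat{\bm{w}}_k^*$, uses convexity of the unit ball to get $\|\hat{\bm{w}}_k^*-\hat{\bm{w}}_j^*\|\leq\|\bm{z}-\hat{\bm{w}}_j^*\|$ for all $j$, and then invokes a lower bound on the discrete potential at such exterior points, $U_s(\bm{z};\hat{\bm{W}}_n^*)\geq I_s[\sigma_{d-1}]-\theta_{s,d}\,n^{-1+s/(d-1)}$. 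Comparing this with the averaged upper bound isolates the term $\tfrac{1}{n}\vartheta^{-s}$ and gives $\vartheta\geq (\theta_{s,d}+2)^{-1/s}n^{-1/(d-1)}$. Note also that the hypothesis $s\geq d-2$ is not, as you suggest, merely to guarantee $s\geq 0$: it is exactly the condition under which that exterior potential lower bound holds, so it is doing essential work that your packing argument cannot replace.
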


\vspace{-1.05mm}

\begin{theorem}\label{energy_MHS}
When $0<s<d-1$ and $n$ approaches infinity, every optimal MHS solution on $\mathbb{S}^{d-1}$ is asymptotically an optimal solution for MHE.
\end{theorem}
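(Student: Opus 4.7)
The plan is to reduce the claim to proving
\begin{equation*}
\lim_{n\to\infty}\frac{E_s(\hat{\bm{W}}_n^*)}{n^2}=I_s(\mu_d):=\iint_{(\mathbb{S}^{d-1})^2}\|x-y\|^{-s}\,d\mu_d(x)\,d\mu_d(y),
\end{equation*}
for any MHS-optimal configuration $\hat{\bm{W}}_n^*=\{\hat{\bm{w}}_1^*,\ldots,\hat{\bm{w}}_n^*\}$, where $\mu_d$ is the uniform probability measure on $\mathbb{S}^{d-1}$. In the potential-theoretic regime $0<s<d-1$, the continuous Riesz energy $I_s(\mu_d)$ is finite and classical potential theory gives $\varepsilon_s(\mathbb{S}^{d-1},n)/n^2\to I_s(\mu_d)$. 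Since $\hat{\bm{W}}_n^*$ is a feasible MHE configuration, $E_s(\hat{\bm{W}}_n^*)\geq \varepsilon_s(\mathbb{S}^{d-1},n)$ gives the lower half of the limit for free, so the burden is the matching upper bound $\limsup_n E_s(\hat{\bm{W}}_n^*)/n^2\leq I_s(\mu_d)$.

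For the upper bound I would first invoke the asymptotic equidistribution of best-packing configurations on rectifiable sets \cite{borodachov2007asymptotics,kolmogorov1959varepsilon} already cited in this paper: the normalized counting measures $\nu_n:=\frac{1}{n}\sum_i\delta_{\hat{\bm{w}}_i^*}$ converge weakly to $\mu_d$, hence $\nu_n\otimes\nu_n\rightharpoonup\mu_d\otimes\mu_d$. The obstruction to concluding is the non-integrable diagonal singularity of $K_s(x,y)=\|x-y\|^{-s}$. I would bypass this by truncation: set $K_s^{(r)}(x,y):=\min(\|x-y\|^{-s},r^{-s})$, which is bounded and continuous on $\mathbb{S}^{d-1}\times\mathbb{S}^{d-1}$, so weak convergence yields
\begin{equation*}
\frac{1}{n^2}\sum_{i\neq j}K_s^{(r)}(\hat{\bm{w}}_i^*,\hat{\bm{w}}_j^*)\longrightarrow\iint K_s^{(r)}\,d\mu_d\otimes d\mu_d\xrightarrow{r\to 0^+}I_s(\mu_d),
\end{equation*}
the last limit following from monotone convergence together with $I_s(\mu_d)<\infty$.

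What remains is to bound the close-pair error $E_s(\hat{\bm{W}}_n^*)/n^2$ minus its truncated counterpart uniformly in $n$. This is where the MHS structure enters: because $\hat{\bm{W}}_n^*$ maximizes the separation distance, the optimum $\delta_n^\rho(\mathbb{S}^{d-1})$ satisfies $\delta_n^\rho\geq c_d\,n^{-1/(d-1)}$ via an explicit spherical code construction, so $\min_{i\neq j}\|\hat{\bm{w}}_i^*-\hat{\bm{w}}_j^*\|\geq c_d\,n^{-1/(d-1)}$. Disjoint caps of this radius around each $\hat{\bm{w}}_i^*$ force at most $O((rn^{1/(d-1)})^{d-1})=O(r^{d-1}n)$ points to lie within distance $r$ of any fixed $\hat{\bm{w}}_i^*$; splitting these neighbors into dyadic shells and summing $k^{d-2-s}$ over $k\lesssim rn^{1/(d-1)}$ bounds the total close-pair contribution to $E_s(\hat{\bm{W}}_n^*)/n^2$ by $O(r^{d-1-s})$, uniformly in $n$. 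Since $s<d-1$, this vanishes as $r\to 0^+$; taking $\limsup_n$ first and then $r\to 0^+$ delivers the upper bound and finishes the proof.

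The main obstacle is precisely the Riesz singularity: weak convergence of $\nu_n\otimes\nu_n$ alone is inadequate, and a purely combinatorial neighbor count without the global equidistribution input would not recover the exact limit $I_s(\mu_d)$. What makes the argument work is the coupling of two ingredients, namely the large-scale asymptotic uniformity of MHS optima together with the sharp $n^{-1/(d-1)}$ lower separation; the hypothesis $s<d-1$ enters at exactly one point, to ensure $r^{d-1-s}\to 0$, which explains why the condition is tight.
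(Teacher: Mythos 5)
Your proof is correct, but it takes a genuinely different route from the paper, whose entire argument for Theorem~\ref{energy_MHS} is a one-line deferral to the result of \cite{leopardi2013discrepancy} that well-separated, asymptotically equidistributed (in particular, best-packing) sequences are asymptotically energy-minimizing for $0<s<d-1$. You instead reconstruct the underlying mechanism from scratch: the lower bound $\liminf_n E_s(\hat{\bm{W}}_n^*)/n^2\geq I_s(\mu_d)$ is free from $E_s\geq\varepsilon_s$ and the classical limit $\varepsilon_s(\mathbb{S}^{d-1},n)/n^2\to I_s(\mu_d)$; the upper bound couples the weak-$*$ equidistribution of best-packing configurations \cite{borodachov2007asymptotics} (applied to the truncated kernel $K_s^{(r)}$) with a near-diagonal error estimate driven by the sharp separation $\delta_n^\rho\geq c_d n^{-1/(d-1)}$, and your shell count $O(k^{d-2})$ at radius $k\delta_n$ does yield the per-point bound $O(r^{d-1-s}n)$ and hence a total close-pair contribution $O(r^{d-1-s})$ uniformly in $n$, which vanishes as $r\to0^+$ precisely because $s<d-1$. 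This is essentially the same separation-plus-equidistribution idea that powers the cited theorem, so conceptually the two proofs are cousins; what the paper's citation buys is brevity (and, in \cite{leopardi2013discrepancy}, quantitative rates via spherical-cap discrepancy rather than mere weak convergence), while your version buys self-containedness and makes explicit exactly where the MHS optimality and the hypothesis $s<d-1$ enter. One small presentational caveat: the count of neighbors within distance $r$ must be organized by shells (cumulative counts alone give the weaker exponent $r^{d-s}n^{1/(d-1)}$, which diverges), and you do say this, but it is the one step a careful write-up must not elide.
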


\vspace{-1.65mm}

Theorem~\ref{lower_MHE} gives a lower bound of separation distance for any MHE solution. It further indicates that optimal MHE configurations are well-separated and minimizing MHE objective can partially maximize the MHS objective. Theorem~\ref{energy_MHS} shows that optimal MHS configurations will asymptotically have the same global uniformity (\ie, hyperspherical energy) as optimal MHE configurations. These two results well connect the geometric properties between the MHE and MHS.

\vspace{-0.5mm}

In fact, it requires involved analyses to fully study the geometric properties of these objectives~\cite{borodachov2007asymptotics,borodachov2019discrete}, which is out of our scope. In order to gain more insights, we conduct a simulation in Fig.~\ref{separation_exp} to empirically compare MHE, MHS, MHC and MGD. Specifically, we initialize 200 neurons and separately optimize MHE, MHS, MHC or MGD using gradient descent with these neurons. We compute the separation distance for all regularizations during training to compare their local separatedness, and also compute the hyperspherical energy to compare the global uniformity. We see that MHS and MHE achieve the best local separatedness and global uniformity, respectively. This is because MHS and MHE are directly optimizing the measures. MHE, MHS and MGD have similar global uniformity since they achieve similar hyperspherical energy. MHE, MGD and MHS reach a good balance between global uniformity and local separatedness, justifying their effectiveness. In contrast, MHC has worse geometric properties due to the complex objective. The results verify that although these hyperspherical uniformity regularizations share the same goal, the encoded regularization effects and inductive biases can be vastly different.

\vspace{-1.65mm}
\subsection{Spectral Properties}
\vspace{-1.45mm}

While orthogonality guarantees that all singular values are $1$, the spectral properties of hyperspherical uniformity is discussed in the following theorem.
\vspace{1.25mm}

\begin{theorem}\label{spectral}
Let $\thickmuskip=2mu \medmuskip=2mu \tilde{\bm{v}}_1,\cdots,\tilde{\bm{v}}_n$ be i.i.d. random vectors where each element follows the Gaussian distribution with mean $0$ and variance $1$. Then $\thickmuskip=2mu \medmuskip=2mu \bm{v}_1=\frac{\tilde{\bm{v}}_1}{\|\tilde{\bm{v}}_1\|},\cdots,\bm{v}_n=\frac{\tilde{\bm{v}}_1}{\|\tilde{\bm{v}}_1\|}$ are uniformly distributed on the unit hypersphere $\mathbb{S}^{d-1}$. If the ratio $\frac{n}{d}$ converges to a constant $\thickmuskip=2mu \medmuskip=2mu \lambda\in(0,1)$, asymptotically we have for $\thickmuskip=2mu \medmuskip=2mu \bm{W}=\{\bm{v}_1,\cdots,\bm{v}_n\}\in\mathbb{R}^{d\times n}$:
\begin{equation}\label{spectral_gap}
\footnotesize
\begin{aligned}
    \lim_{n\rightarrow\infty}\sigma_{\max}(\bm{W})&\leq(\sqrt{d}+\sqrt{\lambda d})\cdot(\max_i\frac{1}{\|\tilde{\bm{v}}_i\|_2})\\
    \lim_{n\rightarrow\infty}\sigma_{\min}(\bm{W})&\geq(\sqrt{d}-\sqrt{\lambda d})\cdot(\min_i\frac{1}{\|\tilde{\bm{v}}_i\|_2})
\end{aligned}
\end{equation}
where $\sigma_{\max}(\cdot)$ and $\sigma_{\min}(\cdot)$ denote the largest and the smallest singular value of a matrix, respectively.
\end{theorem}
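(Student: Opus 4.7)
The plan is to factor $\bm W$ as a product of a standard Gaussian matrix and a diagonal normalizing matrix, then use multiplicative singular value inequalities to separate the two sources of variability, and finally import the classical edge asymptotics for Gaussian matrices from random matrix theory. Concretely, I stack the i.i.d. standard normal vectors as columns $\tilde{\bm W}=[\tilde{\bm v}_1,\ldots,\tilde{\bm v}_n]\in\mathbb R^{d\times n}$ and write $\bm D=\mathrm{diag}(1/\|\tilde{\bm v}_1\|,\ldots,1/\|\tilde{\bm v}_n\|)$, so that $\bm W=\tilde{\bm W}\bm D$. The (extreme) singular values of $\bm D$ are simply $\sigma_{\max}(\bm D)=\max_i 1/\|\tilde{\bm v}_i\|$ and $\sigma_{\min}(\bm D)=\min_i 1/\|\tilde{\bm v}_i\|$, which is exactly the factor appearing in the claimed bounds.

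Next I would apply the submultiplicative inequalities for singular values. The upper bound $\sigma_{\max}(AB)\le \sigma_{\max}(A)\sigma_{\max}(B)$ is the operator-norm submultiplicativity. For the lower bound, I use that $\bm D$ is square and invertible: for any unit $\bm x\in\mathbb R^n$, $\|\tilde{\bm W}\bm D\bm x\|\ge \sigma_{\min}(\tilde{\bm W})\,\|\bm D\bm x\|\ge \sigma_{\min}(\tilde{\bm W})\,\sigma_{\min}(\bm D)$, which yields $\sigma_{\min}(\bm W)\ge \sigma_{\min}(\tilde{\bm W})\cdot \min_i 1/\|\tilde{\bm v}_i\|$. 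These two inequalities reduce the theorem to controlling the spectrum of the raw Gaussian matrix $\tilde{\bm W}$.

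I would then invoke the Bai--Yin edge law for the singular values of a $d\times n$ matrix with i.i.d.\ $\mathcal N(0,1)$ entries: when $n/d\to\lambda\in(0,1)$, we have $\sigma_{\max}(\tilde{\bm W})/\sqrt d\to 1+\sqrt{\lambda}$ and $\sigma_{\min}(\tilde{\bm W})/\sqrt d\to 1-\sqrt{\lambda}$ almost surely. Equivalently, $\sigma_{\max}(\tilde{\bm W})\to \sqrt d+\sqrt{\lambda d}$ and $\sigma_{\min}(\tilde{\bm W})\to \sqrt d-\sqrt{\lambda d}$ asymptotically, which are exactly the scalar prefactors in Eq.~\eqref{spectral_gap}. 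Composing this with the multiplicative inequalities of the previous step produces the claimed asymptotic upper and lower bounds.

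The main obstacle I expect is the lower bound: $\sigma_{\min}(AB)\ge \sigma_{\min}(A)\sigma_{\min}(B)$ fails in general for rectangular $B$, so I have to carefully exploit that $\bm D$ is square and invertible (which it almost surely is, since $\|\tilde{\bm v}_i\|>0$ a.s.). A secondary subtlety is that the right-hand sides of the stated bounds are themselves random through $\max_i 1/\|\tilde{\bm v}_i\|$ and $\min_i 1/\|\tilde{\bm v}_i\|$; the cleanest interpretation is to read the $\lim_{n\to\infty}$ as $\limsup$ (resp.\ $\liminf$) and note that the inequality passes to the limit because the diagonal factors are nonnegative, so no joint-convergence argument for the two random sequences is needed. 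Everything else is a direct quote of Bai--Yin, and no extra concentration argument for the column norms is required because they appear on both sides of the inequality.
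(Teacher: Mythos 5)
Your proposal is correct and follows essentially the same route as the paper: the same factorization $\bm{W}=\tilde{\bm{W}}\bm{D}$ into a standard Gaussian matrix times the diagonal matrix of inverse column norms, reduction to extreme-singular-value inequalities for the product, and the classical edge asymptotics $\sigma_{\max}(\tilde{\bm{W}})\to\sqrt{d}+\sqrt{\lambda d}$, $\sigma_{\min}(\tilde{\bm{W}})\to\sqrt{d}-\sqrt{\lambda d}$ (the paper cites Silverstein's result, you cite Bai--Yin; same content here). The only difference is that you derive the two product inequalities $\sigma_{\max}(\tilde{\bm{W}}\bm{D})\le\sigma_{\max}(\tilde{\bm{W}})\sigma_{\max}(\bm{D})$ and $\sigma_{\min}(\tilde{\bm{W}}\bm{D})\ge\sigma_{\min}(\tilde{\bm{W}})\sigma_{\min}(\bm{D})$ by a direct variational argument exploiting that $\bm{D}$ is square and invertible, whereas the paper obtains exactly the same two inequalities as the $k=1$ case of the Marshall--Olkin eigenvalue product inequalities applied to $\tilde{\bm{W}}^\top\tilde{\bm{W}}$ and $\bm{Q}\bm{Q}^\top$; your handling is a touch more elementary, and your remark about reading the limits as $\limsup$/$\liminf$ of random quantities is a fair (and slightly more careful) reading of the statement.
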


\vspace{-0.5mm}

Theorem~\ref{spectral} is the direct application of a classic result on the limits of the largest and smallest singular values of Gaussian matrix~\cite{silverstein1985smallest}. Theorem~\ref{spectral} guarantees that hyperspherical uniformity will constrain the largest singular value of a matrix from being too large and its smallest singular value from being too small.

\vspace{-2mm}
\subsection{Connections to Neuron Initialization}\label{neuron_init}
\vspace{-2mm}

We also discuss how hyperspherical uniformity is connected to the neuron initialization and its potential connection to Occam's razor. First, almost all initialization schemes~\cite{lecun2012efficient,he2015delving,glorot2010understanding} for neural networks initializes the directions of neurons in the same layer to be uniformly distributed on the hypersphere, which is proved in Appendix~\ref{uniform_sphere}. Other than the zero-mean Gaussian distribution used in \cite{lecun2012efficient,he2015delving,glorot2010understanding}, we may alternatively characterize the hyperspherical uniformity with 
Cauchy-like distribution~\cite{szablowski1998uniform}, which may inspire alternative initializations. 
Promoting hyperspherical uniformity in training aims to regularize the parameters to be similar to the initialization in terms of the directional support (since the pairwise relationship among neurons on the hypersphere is encouraged to be close to the initialization), which is essentially making the neural network to be as ``simple'' as possible. In contrast to the weight decay that implements Occam's razor in terms of magnitude (\ie, discrepancy to $0$), promoting hyperspherical uniformity can essentially be viewed as implementing Occam's razor in the angular space (\ie, discrepancy to hyperspherical uniformity).

\vspace{-2mm}
\subsection{Regularization Effects}
\vspace{-2mm}

In order to intuitively understand these different objectives towards hyperspherical uniformity, we visualize their regularization effects by minimizing them with gradient descent on a 3D sphere. The results in Fig.~\ref{3d_sphere} show that MHE, MHS and MGD yield visually superior regularization effects for hyperspherical uniformity, while MHP and MHC give suboptimal and weaker regularization effects partially due to their complex optimization objectives. Because both MHP and MHC involve an inner optimization problem, their objective landscapes are highly non-convex and also more difficult to optimize. In contrast, MHE, MHS and MGD have much strong regularization effects on the 3-sphere.

\vspace{-2.3mm}
\section{Applications}
\vspace{-2mm}

Our goal is to show the performance gain by applying hyperspherical uniformity to different applications instead of achieving state-of-the-art performance. For fairness, we always stick to clean baselines without bells and whistles and ensure the experimental setup is the same for all compared methods. All the detailed experimental settings are specified in Appendix~\ref{exp_detail}.

\vspace{-2.3mm}
\subsection{Discriminative Learning}
\vspace{-2mm}

\setlength{\columnsep}{6pt}
\begin{wraptable}{r}[0cm]{0pt}
    \centering
    \scriptsize
    \newcommand{\tabincell}[2]{\begin{tabular}{@{}#1@{}}#2\end{tabular}}
    \setlength{\tabcolsep}{4pt}
\renewcommand{\captionlabelfont}{\scriptsize}
\begin{tabular}{c c c} 
\specialrule{0em}{-11pt}{0pt}
  \hline
Method  & Error\\\hline
Baseline  & 2.14\\
Orthogonal  & 1.95\\\hline
MHE & 1.85\\
MHS & 1.72\\
MHP  & 1.92\\
R-MHP & 1.99\\
MHC  & 1.88\\
MGD & \textbf{1.64}\\
 \hline
  \specialrule{0em}{0pt}{-10pt}
\end{tabular}
\caption{\scriptsize MLP (\%).}
\label{mlp}
\vspace{-4mm}
\end{wraptable}

\textbf{Multi-layer perceptrons}. We first compare all the hyperspherical uniformity regularizations on MNIST with a 3-layer MLP. We use Xavier initialization~\cite{glorot2010understanding} to initialize neuron weights in MLP. Results (error rates) in Table~\ref{mlp} show that all the proposed methods outperform the baseline by a considerable margin, indicating that hyperspherical uniformity is generally useful to improve the generalization for MLP. Among all the proposed regularizations, MGD works the best and reduces the error of the baseline by more than 23\%.

\textbf{Convolutional neural networks}. We also compare all the hyperspherical uniformity regularizations on both plain VGG-like CNN~\cite{simonyan2014very} and ResNet~\cite{he2016deep}. Specific architecture configurations are in Appendix~\ref{exp_detail}.

\setlength{\columnsep}{4pt}
\begin{wraptable}{r}[0cm]{0pt}
    \centering
    \scriptsize
    \newcommand{\tabincell}[2]{\begin{tabular}{@{}#1@{}}#2\end{tabular}}
    \setlength{\tabcolsep}{4pt}
\renewcommand{\captionlabelfont}{\scriptsize}
\begin{tabular}{c c c} 
\specialrule{0em}{-10pt}{0pt}
  \hline
Method  & CNN-9 & ResNet-32\\\hline
Baseline  & 28.13 & 22.87\\
Orthogonal  & 26.94 & 22.36\\
SRIP~\cite{bansal2018can} & 25.92 & 22.02\\\hline
MHE & 25.94 & 21.82\\
MHS & 25.43 & \textbf{20.97}\\
MHP  & 25.92 & 21.24\\
R-MHP & 26.02 & 22.19\\
MHC  & 25.62 & 21.88\\
MGD & \textbf{25.32} & 21.06\\
 \hline
  \specialrule{0em}{0pt}{-10pt}
\end{tabular}
\caption{\scriptsize CNN on CIFAR-100 (\%).}
\label{cnn_cifar}
\vspace{-1.8mm}
\end{wraptable}

\emph{CIFAR-100}. We first perform experiments on CIFAR-100 with the plain CNN-9 and ResNet-32. The experimental settings mostly follows \cite{liu2018learning} for fair comparison. Table~\ref{cnn_cifar} gives the error rates. The results show that most of the hyperspherical uniformity regularizations perform significantly and consistently better than the baseline, the orthogonal regularization and a state-of-the-art orthogonality-based regularization called SRIP~\cite{bansal2018can}. Surprisingly, R-MHP, as a very simple regularization method, also outperforms both the baseline and orthogonal regularization, implying the effectiveness of hyperspherical uniformity in general. Although MHP and MHC are less stable to optimize with gradient descent, they still perform reasonably well compared to the baseline, implying that promoting hyperspherical uniformity is generally beneficial to generalization. Mostly notably, MGD performs the best on CNN-9, while MHS performs the best on ResNet-18. It partially suggests that encouraging local separation may be more important to generalization than promoting global uniformity. One of the possible reasons could be that MHE updates all the neurons simultaneously by accumulating all the pairwise interactions while MHS only focus on updating the two most similar neurons.

\begin{figure}[t]
\vspace{0.1mm}
  \centering
  \renewcommand{\captionlabelfont}{\scriptsize}
  \setlength{\abovecaptionskip}{0.9pt}
  \setlength{\belowcaptionskip}{-12.5pt}
\includegraphics[width=3.35in]{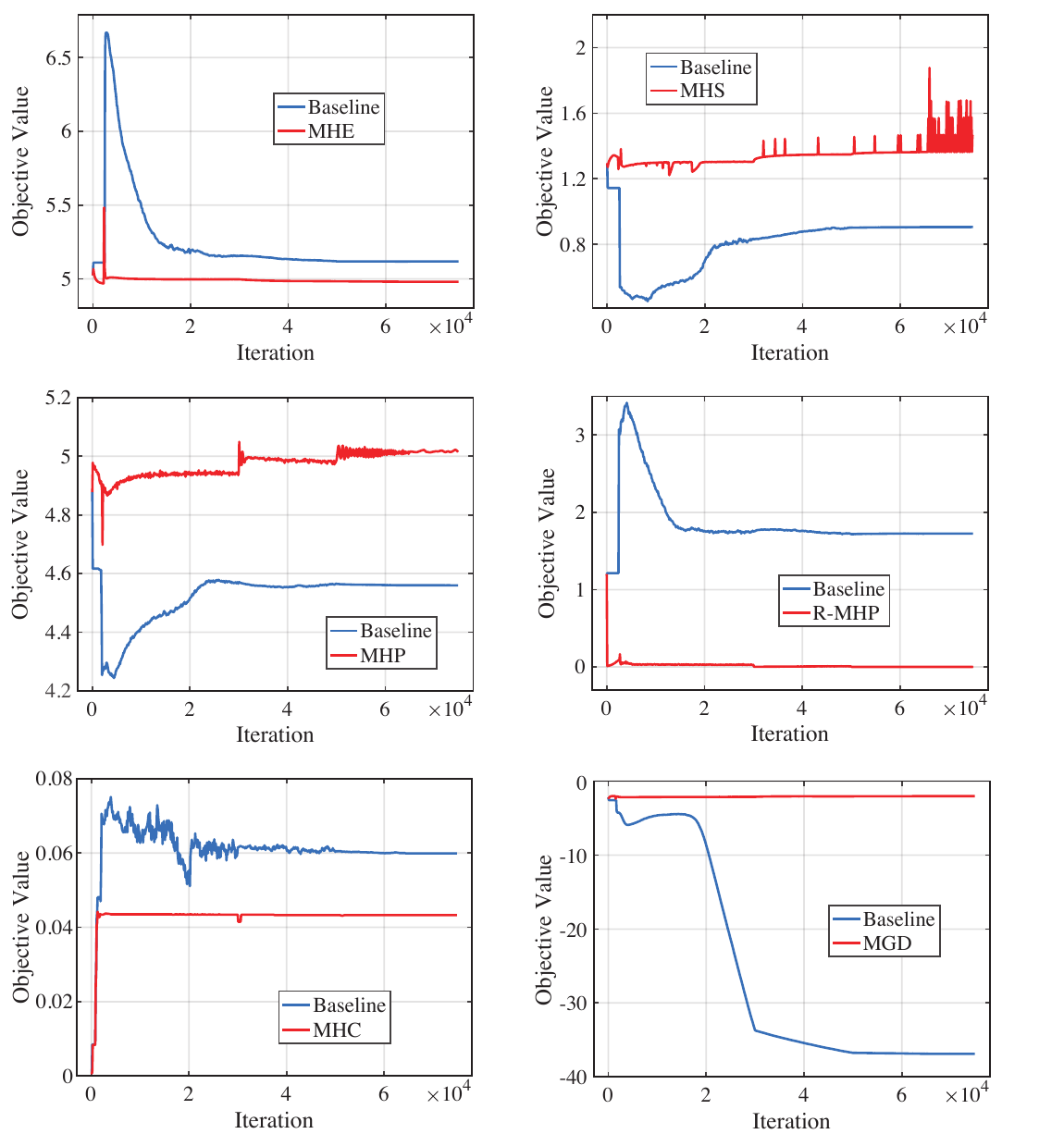}
  \caption{\scriptsize Regularization dynamics (objective value vs. iteration) of MHE, MHS, MHP, R-MHP, MHC and MGD.}\label{obj_value}
\end{figure}

\emph{Regularization dynamics}. Then we look into the effectiveness of the hyperspherical uniformity regularizations for minimizing (or maximizing) the objective values during training. We perform classification on CIFAR-100 with plain CNN-9, and plot the objective value curve of each hyperspherical uniformity regularization during training. The results are given in Fig.~\ref{obj_value}. Note that, MHS, MHP and MGD maximize the regularization objective, while MHE, R-MHP and MHC minimize the objective. From Fig.~\ref{obj_value}, we can see that all the hyperspherical uniformity achieve better objective values than the baseline, which indicates that all the proposed regularizations have well served the purpose.

\setlength{\columnsep}{6pt}
\begin{wraptable}{r}[0cm]{0pt}
    \centering
    \scriptsize
    \newcommand{\tabincell}[2]{\begin{tabular}{@{}#1@{}}#2\end{tabular}}
    \setlength{\tabcolsep}{5pt}
\renewcommand{\captionlabelfont}{\scriptsize}
\begin{tabular}{c c c} 
\specialrule{0em}{-10pt}{0pt}
  \hline
Method  & Error\\\hline
Baseline  & 32.95\\
Orthogonal  & 32.65\\
SRIP~\cite{bansal2018can} & 32.53\\\hline
MHE & 32.45\\
MHS & \textbf{32.06}\\
MHP  & 32.32\\
R-MHP & 32.71\\
MHC  & 32.28\\
MGD & 32.16\\
 \hline
  \specialrule{0em}{0pt}{-10pt}
\end{tabular}
\caption{\scriptsize ImageNet (\%).}
\label{imagenet}
\vspace{-2mm}
\end{wraptable}

\emph{ImageNet}. Finally, we conduct the experiments on ImageNet-2012~\cite{russakovsky2014imagenet} with ResNet-18 to further evaluate the performance of hyperspherical uniformity on large-scale datasets. Since our purpose is to compare all the regularizations, we use the same simple data augmentation scheme as in AlexNet~\cite{krizhevsky2012imagenet}. Detailed settings are given in Appendix~\ref{exp_detail}. Table~\ref{imagenet} shows the Top-1 error on ImageNet test set. One can see from the results that most of the hyperspherical uniformity regularizations can effectively improve the generalization of ResNet-18 on ImageNet. Among all, MHS achieves the best performance and outperforms the baseline by nearly 1\%. Considering the fact that we are merely adding a regularization without changing the network architecture, the improvement is actually very significant.

\setlength{\columnsep}{5pt}
\begin{wraptable}{r}[0cm]{0pt}
    \centering
    \scriptsize
    \newcommand{\tabincell}[2]{\begin{tabular}{@{}#1@{}}#2\end{tabular}}
    \setlength{\tabcolsep}{3pt}
\renewcommand{\captionlabelfont}{\scriptsize}
\begin{tabular}{c c c c} 
\specialrule{0em}{-12.5pt}{0pt}
\hline
Method &  Citeseer & Cora & Pubmed \\\hline
Baseline & 70.3 & 81.3 & 79.0\\
Orthogonal & 70.4 & 81.5 & 78.8\\\hline
MHE &  71.5 & 82.0 & 79.0\\
MHS & 71.7 & \textbf{82.3} & \textbf{79.2}\\
MHP  & 71.3 & 81.5 & 79.0\\
MHC  & 71.2 & 81.6 & 79.0\\
MGD &  \textbf{71.8} & \textbf{82.3} & \textbf{79.2}\\
 \hline
  \specialrule{0em}{0pt}{-10pt}
\end{tabular}
\caption{\scriptsize Graph networks (\%).}
\label{graph_net}
\vspace{-2mm}
\end{wraptable}

\textbf{Graph networks}. In order to show that hyperspherical uniformity is diversely useful, we also perform experiments on graph convolution networks (GCN) \cite{kipf2016semi}. We use the same 2-layer GCN as \cite{kipf2016semi} and evaluate on Citeseer, Cora and Pubmed data \cite{sen2008collective}.  Regularizing GCN is conceptually similar to MLP. Specifically, the forward model of GCN is $\thickmuskip=2mu \medmuskip=2mu\bm{Z}=\textnormal{Softmax}\big(\hat{\bm{A}}\cdot\textnormal{ReLU}(\hat{\bm{A}}\cdot\bm{X}\cdot\bm{W}_0)\cdot\bm{W}_1\big)$ where $\thickmuskip=2mu \medmuskip=2mu \hat{\bm{A}}=\tilde{\bm{D}}^{\frac{1}{2}}\tilde{\bm{A}}\tilde{\bm{D}}^{\frac{1}{2}}$. $\bm{A}$ is the adjacency matrix of the graph,  $\thickmuskip=2mu \medmuskip=2mu \tilde{\bm{A}}=\bm{A}+\bm{I}$ ($\bm{I}$ is an identity matrix), and $\thickmuskip=2mu \medmuskip=2mu \tilde{\bm{D}}=\sum_j\tilde{\bm{A}}_{ij}$. $\bm{X}\in\mathbb{R}^{n\times d}$ is the feature matrix of $n$ nodes in the graph (feature dimension is $d$). $\bm{W}_1$ is the weights of the classifiers. $\bm{W}_0$ is the weight matrix of size $d\times h$ where $h$ is the dimension of the hidden space. We treat each column vector of $\bm{W}_0$ as a neuron, so there are $h$ neurons in total. We apply the hyperspherical uniformity regularizations on $\bm{W}_0$ and report the testing accuracy in Table~\ref{graph_net}. From the results, we observe all the hyperspherical uniformity regularizations outperform the baseline by a considerable margin, indicating that promoting hyperspherical uniformity is very helpful to the generalization of GCN. On Citeseer, MGD achieves the best accuracy and outperforms the baseline by 1.5\%. On Cora and Pubmed, both MHS and MGD perform the best. The results validate the universality and superiority of hyperspherical uniformity.

\setlength{\columnsep}{6pt}
\begin{wraptable}{r}[0cm]{0pt}
    \centering
    \scriptsize
    \newcommand{\tabincell}[2]{\begin{tabular}{@{}#1@{}}#2\end{tabular}}
    \setlength{\tabcolsep}{4pt}
\renewcommand{\captionlabelfont}{\scriptsize}
\begin{tabular}{c c c} 
\specialrule{0em}{-12.5pt}{0pt}
  \hline
Method  & Accuracy\\\hline
Baseline  & 87.10\\
MHE & 87.44\\
MHS & 87.60\\
MHP  & 87.41\\
R-MHP & 87.10\\
MHC  & 87.33\\
MGD & \textbf{87.61}\\
 \hline
  \specialrule{0em}{0pt}{-10pt}
\end{tabular}
\caption{\scriptsize PointNet (\%).}
\label{pointnet}
\vspace{-4mm}
\end{wraptable}

\textbf{Point cloud networks}. We also evaluate the hyperspherical uniformity on the 3D point cloud classification task where each 3D object is represented by a unordered set of points (\ie, 3D coordinates). PointNet~\cite{qi2017pointnet} is a neural network designed for processing point clouds. PointNet consists of a group of weight-sharing MLPs and we regularize these MLPs using our hyperspherical uniformity regularizations. For simplicity, we use a vanilla PointNet without T-Net and experiment on the ModelNet-40 dataset~\cite{wu20153d}. The classification accuracy is given in Table~\ref{pointnet}. All the hyperspherical uniformity regularizations consistently outperform the baseline PointNet by a significant margin, showing the effectiveness of hyperspherical uniformity in point cloud classification. Among all the compared regularizations, MGD achieves the best accuracy by outperforming the baseline by 0.51\%, validating its universality in improving generalization for different types of neural networks.

\vspace{-2.3mm}
\subsection{Generative Modeling}
\vspace{-2.2mm}

Beside the applications in the discriminative learning for classification tasks, we further apply hyperspherical uniformity to improve the generative adversarial network (GAN)~\cite{goodfellow2014generative} in unconditional image generation.

\setlength{\columnsep}{6pt}
\begin{wraptable}{r}[0cm]{0pt}
    \centering
    \scriptsize
    \newcommand{\tabincell}[2]{\begin{tabular}{@{}#1@{}}#2\end{tabular}}
    \setlength{\tabcolsep}{3.5pt}
\renewcommand{\captionlabelfont}{\scriptsize}
\begin{tabular}{c c c} 
\specialrule{0em}{-10pt}{0pt}
  \hline
Method  & Inception Score \\\hline
Baseline  & 7.14 \\
SN~\cite{miyato2018spectral} & 7.40\\\hline
MHE & 7.40 \\
MHS & \textbf{7.61} \\
R-MHP & 7.31 \\
MGD & 7.49 \\
 \hline
  \specialrule{0em}{0pt}{-10pt}
\end{tabular}
\caption{\scriptsize GAN on CIFAR-10.}\label{GAN_tab}
\label{gan}
\vspace{-4mm}
\end{wraptable}

Specifically, we perform image generation on CIFAR-10 with a vanilla GAN. We regularize the discriminator in the vanilla GAN using MHE, MHS, MGD and R-MHP. We adopt the following vanilla GAN as the clean baseline: $V(G, D) =  \mathbb{E}_{\bm{x}\sim q_{\rm data}(\bm{x})} [ \log D(\bm{x})] +  \mathbb{E}_{\bm{z}\sim p(\bm{z})} [\log(1-D(G(\bm{z})))]$ where $D(\cdot)$ denotes the discriminator and $G(\cdot)$ denotes the generator. We do not use spectral normalization for the discriminator and generator. We train the models for 200k iterations on CIFAR-10 using Adam optimizer ($\beta_1 = 0.5, \beta_2 = 0.999$) with learning rate set to 1e-3 and batch size set to 64. Detailed experimental setup is given in Appendix~\ref{exp_detail}. Results in Table~\ref{GAN_tab} show that hyperspherical uniformity can generally improve the generation quality of GANs. Notably, MHS achieves the best inception score 7.61, which outperforms the vanilla GAN and the state-of-the-art spectral normalization (SN)~\cite{miyato2018spectral} by a considerable margin. MGD also performs better than SN, well verifying that promoting hyperspherical uniformity is beneficial to GANs.

\begin{figure}[h]
\vspace{-1mm}
  \centering
  \renewcommand{\captionlabelfont}{\scriptsize}
  \setlength{\abovecaptionskip}{-0.1pt}
  \setlength{\belowcaptionskip}{-6pt}
\includegraphics[width=3in]{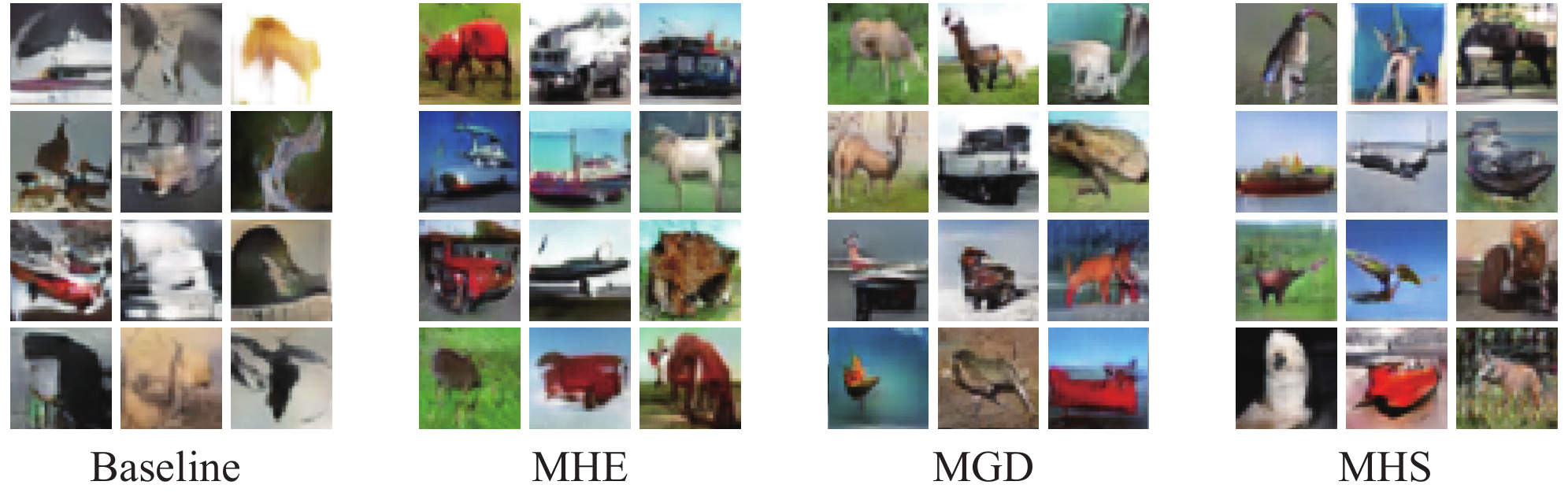}
\vspace{-0.2mm}
  \caption{\scriptsize Comparison of randomly generated images from GAN.}\label{GAN_images}
\end{figure}

We also give some qualitative examples in Fig.~\ref{GAN_images} to compare the generation quality between baseline and hyperspherical uniformity regularizations. The results show that hyperspherical uniformity can greatly improve GAN by regularizing it to generate visually plausible images with more diverse color and semantic meaning.

%\subsection{Reinforcement Learning}
%Will include experiment on DQN on Atari Games
\vspace{-2.6mm}
\section{Concluding Remarks}
\vspace{-2.9mm}

This paper considers a novel family of regularizations -- hyperspherical uniformity for training neural networks. Specifically, we propose several conceptually appealing instances and provide some statistical discussions and theoretical insights. Comprehensive experiments well validate the effectiveness of hyperspherical uniformity.

\newpage

\subsection*{Acknowledgements}

\vspace{-1mm}

Weiyang Liu and Adrian Weller acknowledge support from the Leverhulme Trust via CFI. Adrian Weller acknowledges support from the David MacKay Newton research fellowship at Darwin College, and The Alan Turing Institute under EPSRC grant EP/N510129/1 and U/B/000074. Rongmei Lin and Li Xiong are supported by NSF under CNS-1952192, IIS-1838200.

\bibliographystyle{plain}
\bibliography{ref}

\newpage
\onecolumn
\begin{appendix}
\begin{center}
{\Large \textbf{Appendix}}
\end{center}
\section{Proof of Proposition~\ref{mhs}}

We first define that $\hat{\bm{W}}_n^s$ is a $s$-energy minimizing $n$-point configuration on $\mathbb{S}^{d-1}$ if $0<s<\infty$ (\ie, MHE configuration) and $\hat{\bm{W}}_n^\infty$ denotes a best-packing configuration on $\mathbb{S}^{d-1}$ if $s=\infty$ (\ie, MHS configuration). Since we are considering $s>0$, we only need to discuss the case of $K_s(\hat{\bm{w}}_i,\hat{\bm{w}}_j)=\rho(\hat{\bm{w}}_i,\hat{\bm{w}}_j)^{-s}$. Then we will have the following equation:
\begin{equation}\label{p1_eq1}
    \varepsilon_s(\mathbb{S}^{d-1},n)^{\frac{1}{s}}=E_s(\hat{\bm{W}}_n^s)^{\frac{1}{s}}\geq\frac{1}{\delta^\rho_n(\hat{\bm{W}}_n^s)}\geq\frac{1}{\delta^\rho_n(\mathbb{S}^{d-1})}.
\end{equation}
Moreover, we have that
\begin{equation}
\begin{aligned}
\varepsilon_s(\mathbb{S}^{d-1},n)^{\frac{1}{s}}&\leq E_s(\hat{\bm{W}}^\infty_n)^{\frac{1}{s}}\\
&=\frac{1}{\delta^\rho(\hat{\bm{W}}^\infty_n)}\bigg( \sum_{1\leq i\neq j \leq N} \big( \frac{\delta^\rho(\hat{\bm{W}}^\infty_n)}{\rho(\hat{\bm{w}}_i^\infty,\hat{\bm{w}}_j^\infty)} \big)^s \bigg)^{\frac{1}{s}}\\
&\leq \frac{1}{\delta^\rho(\hat{\bm{W}}^\infty_n)}\big(n(n-1)\big)^{\frac{1}{s}}
\end{aligned}
\end{equation}
Therefore, we will end up with
\begin{equation}\label{p1_eq3}
    \lim_{s\rightarrow \infty}\sup\varepsilon_s(\mathbb{S}^{d-1},n)^{\frac{1}{s}}\leq\frac{1}{\delta^\rho(\hat{\bm{W}}_n^\infty)}=\frac{1}{\delta^\rho_n(\mathbb{S}^{d-1})}.
\end{equation}
Then we take both Eq.~\eqref{p1_eq1} and Eq.~\eqref{p1_eq3} into consideration and have that
\begin{equation}
    \lim_{s\rightarrow\infty}\varepsilon_s(\mathbb{S}^{d-1},n)^{\frac{1}{s}}=\frac{1}{\delta_n^{\rho}(\mathbb{S}^{d-1})}
\end{equation}
which concludes the proof.\hfill\qedsymbol

\section{Proof of Proposition~\ref{mhp}}
We first choose $\epsilon>0$ and let $\hat{\bm{W}}_{n+1}=\{\hat{\bm{w}}_1,\hat{\bm{w}}_2,\cdots,\hat{\bm{w}}_{n+1}\}\subset\mathbb{S}^{d-1}$ be a configuration such that
\begin{equation}
    \varepsilon_s(\mathbb{S}^{d-1},n+1)+\epsilon>E_s(\hat{\bm{W}}_{n+1}).
\end{equation}
Then we have for every $i\in[1,n+1]$ and $\bm{v}\in\mathbb{S}^{d-1}$ that
\begin{equation}
    \begin{aligned}
    E_s\big((\hat{\bm{W}}_{n+1}\backslash\{\hat{\bm{w}}_i\} )\cup\{\bm{v}\} \big)&=E_s(\hat{\bm{W}}_{n+1}\backslash\{\hat{\bm{w}}_i\})+2\sum_{j:j\neq i}K_s(\bm{v},\hat{\bm{w}}_j)\\
    &\geq\varepsilon_s(\mathbb{S}^{d-1},n+1)\\
    &>E_s(\hat{\bm{W}}_{n+1})-\epsilon\\
    &=E_s(\hat{\bm{W}}_{n+1}\backslash\{\bm{v}\})+2\sum_{j:j\neq i}K_s(\hat{\bm{w}}_i,\hat{\bm{w}}_j)-\epsilon
    \end{aligned}
\end{equation}
which leads to
\begin{equation}
    \min_{\bm{v}\in\mathbb{S}^{d-1}}2\sum_{j:j\neq i}K_s(\bm{v},\hat{\bm{w}}_j)\geq2\sum_{j:j\neq i}K_s(\hat{\bm{w}}_i,\hat{\bm{w}}_j)-\epsilon
\end{equation}
Therefore, for a fixed $i$, we have that
\begin{equation}
\begin{aligned}
    \mathcal{P}_s(\mathbb{S}^{d-1},n)&\geq P_s(\hat{\bm{W}}_{n+1}\backslash\{\hat{\bm{w}}_i\})\\
    &=\min_{\bm{v}\in\mathbb{S}^{d-1}}\sum_{j:j\neq i}K_s(\bm{v},\hat{\bm{w}}_j)\\
    &\geq\sum_{j:j\neq i}K_s(\hat{\bm{w}}_i,\hat{\bm{w}}_j)-\frac{\epsilon}{2}
\end{aligned}
\end{equation}
Then we average the above inequalities for $i=1,\cdots,n+1$ and obtain
\begin{equation}
\begin{aligned}
    \mathcal{P}_s(\mathbb{S}^{d-1},n)&\geq\frac{1}{n+1}\sum_{i=1}^{n+1}\sum_{j:j\neq i} K_s(\hat{\bm{w}}_i,\hat{\bm{w}}_j)-\frac{\epsilon}{2}\\
    &\geq \frac{\varepsilon_s(\mathbb{S}^{d-1},n+1)}{n+1}-\frac{\epsilon}{2}
\end{aligned}
\end{equation}
By letting $\epsilon$ approach to zero, we have that
\begin{equation}
    \mathcal{P}_s(\mathbb{S}^{d-1},n)\geq\frac{\varepsilon_s(\mathbb{S}^{d-1},n+1)}{n+1}
\end{equation}

Moreover, it is also easy to verify another inequality: 
\begin{equation}
    \frac{\varepsilon_s(\mathbb{S}^{d-1},n+1)}{n+1}\geq\frac{\varepsilon_s(\mathbb{S}^{d-1},n)}{n-1}
\end{equation}
Therefore, we conclude the proof.\hfill\qedsymbol

\section{Proof of Proposition~\ref{mhp_relax}}

Given that $s=-2$, we first have that
\begin{equation}
\begin{aligned}
P_{-2}(\hat{\bm{W}}_n)&=\min_{\bm{v}\in\mathbb{S}^{d-1}}\bigg( -\sum_{i=1}^n\norm{\bm{v}-\hat{\bm{w}}_i}^2 \bigg)\\
&=\min_{\bm{v}\in\mathbb{S}^{d-1}}\sum_{i=1}^n(2\bm{v}\cdot\hat{\bm{w}}_i-2)\\
&=\min_{\bm{v}\in\mathbb{S}^{d-1}}\bigg( 2\bm{v}\cdot\sum_{i=1}^n\hat{\bm{w}}_i -2n \bigg).
\end{aligned}
\end{equation}
If $\sum_{i=1}^n\hat{\bm{w}}_i=\bm{0}$, we will have that $P_{-2}(\hat{\bm{W}}_n)=-2n$. If $\sum_{i=1}^n\hat{\bm{w}}_i\neq \bm{0}$, then we have that
\begin{equation}
    \begin{aligned}
    P_{-2}(\hat{\bm{W}}_n)&\leq -2\frac{\sum_{i=1}^n\hat{\bm{w}}_i}{\norm{\sum_{i=1}^n\hat{\bm{w}}_i}}\cdot\sum_{i=1}^n\hat{\bm{w}}_i-2n\\
    &=-2\norm{\sum_{i=1}^n\hat{\bm{w}}_i}-2n\\
    &<-2n
    \end{aligned}
\end{equation}
Therefore, $\hat{\bm{W}}_n$ is optimal if and only if $\sum_{i=1}^n\hat{\bm{w}}_i=\bm{0}$\hfill\qedsymbol

\section{Proof of Proposition~\ref{mhc}}
For any $n$-point configuration $\hat{\bm{W}}_n\subset\mathbb{S}^{d-1}$, we have that
\begin{equation}
\begin{aligned}
    P_s(\hat{\bm{W}}_n)&=\min_{\bm{v}\in\mathbb{S}^{d-1}}\sum_{\bm{u}\in\hat{\bm{W}}_n}\frac{1}{\rho(\bm{v},\bm{u})^s}\\
    &\geq\frac{1}{\alpha(\hat{\bm{W}}_n)^s}
\end{aligned}
\end{equation}
which leads to
\begin{equation}
    \begin{aligned}
    \big(\mathcal{P}_s(\mathbb{S}^{d-1},n)\big)^{\frac{1}{s}}&=\max_{\hat{\bm{W}}_n\subset\mathbb{S}^{d-1}}P_s(\hat{\bm{W}}_n)^{\frac{1}{s}}\\
    &\geq\max_{\hat{\bm{W}}_n\subset\mathbb{S}^{d-1}}\frac{1}{\alpha(\hat{\bm{W}}_n)}\\
    &=\frac{1}{\eta_n^\rho(\mathbb{S}^{d-1})}.
    \end{aligned}
\end{equation}
Therefore, we have that
\begin{equation}\label{mhc_eq1}
    \lim_{s\rightarrow\infty}\inf\big(\mathcal{P}_s(\mathbb{S}^{d-1},n)\big)^{\frac{1}{s}}\geq\frac{1}{\eta_n^\rho(\mathbb{S}^{d-1})}
\end{equation}
On the other hand, we have that
\begin{equation}
    \begin{aligned}
    P_s(\hat{\bm{W}}_n)&=\min_{\bm{v}\in\mathbb{S}^{d-1}}\sum_{\bm{u}\in\hat{\bm{W}}_n}\frac{1}{\rho(\bm{v},\bm{u})^s}\\
    &\leq \frac{n}{\alpha(\hat{\bm{W}}_n)}\\
    &\leq\frac{n}{\big(\eta_n^\rho(\mathbb{S}^{d-1})\big)^s}
    \end{aligned}
\end{equation}
Therefore, we end up with
\begin{equation}
    \lim_{s\rightarrow\infty}\sup\big(\mathcal{P}_s(\mathbb{S}^{d-1},n)\big)^{\frac{1}{s}}\leq\lim_{s\rightarrow\infty}\frac{n^\frac{1}{s}}{\eta_n^\rho(\mathbb{S}^{d-1})} =\frac{1}{\eta_n^\rho(\mathbb{S}^{d-1})}
\end{equation}
Combining with Eq.~\eqref{mhc_eq1}, we have that
\begin{equation}
    \lim_{s\rightarrow\infty}\big(\mathcal{P}_s(\mathbb{S}^{d-1},n)\big)^{\frac{1}{s}}\geq\frac{1}{\eta_n^\rho(\mathbb{S}^{d-1})}
\end{equation}
which concludes the proof.\hfill\qedsymbol

\section{Proof of Proposition~\ref{range_test}}
We first define the order samples on $\mathbb{S}^1$. We denote the samples on $\mathbb{S}^1$ as $\theta_i$. The angles are ordered such that $\theta_{i+1}<\theta_i,\forall i$. Then we define the angle gap as follows:
\begin{equation}
\begin{aligned}
d_i&:=\theta_{i+1}-\theta_i,\ \ i=1,2,\cdots,n-1\\
d_n&:=2\pi-(\theta_n-\theta_1)
\end{aligned}
\end{equation}
The test statistic of range test is written as
\begin{equation}
    T_n:=2\pi-\max_i d_i
\end{equation}
which rejects $\mathcal{H}_0$ for small values. Maximizing $T_n$ with respect to the samples on $\mathbb{S}^1$ is equivalent to the following objective:
\begin{equation}
    \max_{\{\theta_1,\cdots,\theta_n\}} T_n \Leftrightarrow  \min_{\{\theta_1,\cdots,\theta_n\}}\max_{i}d_i
\end{equation}
which is to minimize the largest neighbor angle gap. It is easy to verify that the optimum happens when the $n$ angle gaps are equally divided the unit circle $\mathbb{S}^1$.
\par
For MHS on $\mathbb{S}^1$, the optimization is as follows:
\begin{equation}
    \max_{\{\theta_1,\cdots,\theta_n\}}\min_{i\neq j}\rho(\theta_i,\theta_j)
\end{equation}
which is to maximize the smallest pairwise angles (\ie, the smallest neighbor angle gap on $\mathbb{S}^1$). The optimum is attained when $\{\theta_1,\cdots,\theta_n\}$ are equally divided the unit circle $\mathbb{S}^1$, which is equivalent to maximizing $T_n$ with respect to the samples on $\mathbb{S}^1$.
\par
For MHC on $\mathbb{S}^1$, the optimization is as follows:
\begin{equation}
    \min_{\{\theta_1,\cdots,\theta_n\}}\max_{v\in[0,2\pi)}\min_i\rho(v,\theta_i).
\end{equation}
The optimum of $\max_{v\in[0,2\pi)}\min_i\rho(v,\theta_i)$ is attained when $v$ lies on the middle point of the largest angle gap. Therefore, the optimum of MHC on $\mathbb{S}^1$ is achieved when $\{\theta_1,\cdots,\theta_n\}$ are equally divided the unit circle $\mathbb{S}^1$, which is also equivalent to maximizing $T_n$ with respect to the samples on $\mathbb{S}^1$.\hfill\qedsymbol

\newpage
\section{Proof of Theorem~\ref{optimum_1}}
We first let $\hat{\bm{W}}_n=\{\hat{\bm{w}}_1,\cdots,\hat{\bm{w}}_n\}$ be an arbitrary vector configuration in $\mathbb{S}^d$. We then have that
\begin{equation}
\begin{aligned}
    \Lambda(\hat{\bm{W}}_n):=&\sum_{i=1}^n\sum_{j=1}^n \norm{\hat{\bm{w}}_i-\hat{\bm{w}}_j}^2\\
    =& \sum_{i=1}^n\sum_{j=1}^n (2-2\hat{\bm{w}}_i\cdot\hat{\bm{w}}_j)\\
    =& 2n^2-2\norm{\sum_{i=1}^n\hat{\bm{w}}_i}^2\\
    \leq & 2n^2
\end{aligned}
\end{equation}
which holds if and only if $\sum_{i=1}^n\hat{\bm{w}}_i=0$. The vertices of a regular $(n-1)$-simplex at the origin well satisfy this condition. With the properties of the potential function $f$, we have that
\begin{equation}
\begin{aligned}
    E_f(\hat{\bm{W}}_n):=&\sum_{i=1}^n\sum_{j:j\neq i} f\big( \norm{\hat{\bm{w}}_i-\hat{\bm{w}}_j}^2\big)\\
    \geq & n(n-1) f\bigg( \frac{\Lambda(\hat{\bm{W}}_n)}{n(n-1)} \bigg)\\
    \geq & n(n-1)f\bigg( \frac{2n}{n-1} \bigg)
\end{aligned}
\end{equation}
which holds true if all pairwise distance $\norm{\hat{\bm{w}}_i-\hat{\bm{w}}_j}$ are equal for $i\neq j$ and the center of mass is at the origin (\ie, $\sum_{i=1}^n\hat{\bm{w}}_i=\bm{0}$). Therefore, for the vector configuration $\hat{\bm{W}}_n^*$ which contains the vertices of a regular $(n-1)$-simplex inscribed in $\mathbb{S}^d$ and centered at the origin, we have that for $2\leq n \leq d+2$
\begin{equation}
\begin{aligned}
    E_f(\hat{\bm{W}}_n^*)&=n(n-1)f\bigg( \frac{2n}{n-1} \bigg)\\
    &\leq E_f(\hat{\bm{W}}_n).
\end{aligned}
\end{equation}
If $f$ is strictly convex and strictly decreasing, then $E_f(\hat{\bm{W}}_n)\geq n(n-1)f( \frac{2n}{n-1} )$ holds only when $\hat{\bm{W}}_n^*$ is a regular $(n-1)$-simplex inscribed in $\mathbb{S}^d$ and centered at the origin.\hfill\qedsymbol

\newpage
\section{Proof of Theorem~\ref{lower_MHE}}

Let $\hat{\bm{w}}_1^*,\hat{\bm{w}}_2^*,\cdots,\hat{\bm{w}}_n^*$ be the points in the MHE solution $\hat{\bm{W}}_n^*$. Without loss of generality, we denote the indices $k$ and $l$ such that $\vartheta(\hat{\bm{W}}_n^*)=\|\hat{\bm{w}}_k^*-\hat{\bm{w}}_l^*\|_2$. We also define $\bm{z}:=(1+n^{-\frac{1}{d-1}})\hat{\bm{w}}_k^*$. We first introduce the following fact about closed convex sets:
\begin{proposition}
Let $\bm{K}\subset\mathbb{R}^p$ be a closed convex set. Then for every $\bm{x}\in\mathbb{R}^p$, there is a unique point $\bm{y}_x$ in $\bm{K}$ closest to $\bm{x}$. Furthermore, for any $\bm{z}\in\bm{K}$, we have $\|\bm{y}_x\bm{z}\|_2\leq\|\bm{x}-\bm{z}\|_2$, where the equality holds if and only if $\bm{x}\in\bm{K}$.
\end{proposition}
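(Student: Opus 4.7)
The plan is to establish the proposition in three stages: existence of a minimizer, its uniqueness, and the projection contraction inequality together with its equality case. For existence, I would fix $\bm{x}\in\mathbb{R}^p$, set $d:=\inf_{\bm{y}\in\bm{K}}\|\bm{x}-\bm{y}\|_2$, and take a minimizing sequence $\{\bm{y}_n\}\subset\bm{K}$. Because $\|\bm{y}_n\|_2\leq\|\bm{x}\|_2+\|\bm{x}-\bm{y}_n\|_2$ and $\|\bm{x}-\bm{y}_n\|_2\to d$, the sequence is bounded, so by Bolzano--Weierstrass it has a subsequence converging to some $\bm{y}_x\in\mathbb{R}^p$. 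Closedness of $\bm{K}$ places $\bm{y}_x\in\bm{K}$, and continuity of the norm gives $\|\bm{x}-\bm{y}_x\|_2=d$.

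For uniqueness I would assume $\bm{y}_x$ and $\bm{y}_x'$ both realize $d$. Convexity of $\bm{K}$ puts the midpoint $(\bm{y}_x+\bm{y}_x')/2$ in $\bm{K}$, and the parallelogram identity yields
\[
\bigl\|\bm{x}-\tfrac{\bm{y}_x+\bm{y}_x'}{2}\bigr\|_2^2 \;=\; \tfrac{1}{2}\|\bm{x}-\bm{y}_x\|_2^2+\tfrac{1}{2}\|\bm{x}-\bm{y}_x'\|_2^2-\tfrac{1}{4}\|\bm{y}_x-\bm{y}_x'\|_2^2,
\]
so unless $\bm{y}_x=\bm{y}_x'$ the midpoint is strictly closer to $\bm{x}$, contradicting minimality.

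The contraction inequality rests on the variational characterization $\langle\bm{x}-\bm{y}_x,\bm{z}-\bm{y}_x\rangle\leq0$ for all $\bm{z}\in\bm{K}$, which I would prove as follows: by convexity, $\bm{y}_x+t(\bm{z}-\bm{y}_x)\in\bm{K}$ for $t\in[0,1]$, so the quadratic $\phi(t):=\|\bm{x}-\bm{y}_x-t(\bm{z}-\bm{y}_x)\|_2^2$ is minimized over $[0,1]$ at $t=0$; the one-sided derivative $\phi'(0^+)=-2\langle\bm{x}-\bm{y}_x,\bm{z}-\bm{y}_x\rangle\geq0$ then gives the characterization. Expanding $\|\bm{x}-\bm{z}\|_2^2=\|(\bm{x}-\bm{y}_x)+(\bm{y}_x-\bm{z})\|_2^2$ produces
\[
\|\bm{x}-\bm{z}\|_2^2 \;=\; \|\bm{x}-\bm{y}_x\|_2^2+\|\bm{y}_x-\bm{z}\|_2^2+2\langle\bm{x}-\bm{y}_x,\bm{y}_x-\bm{z}\rangle,
\]
and the variational inequality makes the last inner product nonnegative, so $\|\bm{x}-\bm{z}\|_2^2\geq\|\bm{x}-\bm{y}_x\|_2^2+\|\bm{y}_x-\bm{z}\|_2^2\geq\|\bm{y}_x-\bm{z}\|_2^2$.

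For the equality case, the sandwich above forces $\|\bm{x}-\bm{y}_x\|_2=0$, i.e.\ $\bm{x}=\bm{y}_x\in\bm{K}$; conversely, if $\bm{x}\in\bm{K}$ then $\bm{y}_x=\bm{x}$ and equality is immediate. The main conceptual obstacle is deriving the first-order variational inequality cleanly, since everything downstream follows from it by a short expansion; existence and uniqueness are otherwise standard compactness plus parallelogram-law arguments specific to the Euclidean inner product.
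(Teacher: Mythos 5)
Your proof is correct and complete: existence via a bounded minimizing sequence and closedness, uniqueness via the parallelogram law, and the contraction inequality plus its equality case via the first-order variational inequality $\langle\bm{x}-\bm{y}_x,\bm{z}-\bm{y}_x\rangle\leq 0$ is exactly the standard projection-onto-a-closed-convex-set argument. Note that the paper does not actually prove this proposition --- it is asserted as a known fact inside the proof of Theorem~\ref{lower_MHE} and used immediately --- so your write-up supplies the argument the paper leaves implicit; the only cosmetic points are that $\bm{K}$ should be assumed nonempty and that the statement's $\|\bm{y}_x\bm{z}\|_2$ is evidently a typo for $\|\bm{y}_x-\bm{z}\|_2$, which your proof correctly interprets.
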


Because the unit hyperball $B(\bm{0},1)$ is convex and $\hat{\bm{w}}_k^*$ is the point in $B(\bm{0},1)$ closest to $\bm{z}$, for $1\leq j\leq n$ we have the following inequality based on this proposition above:
\begin{equation}
    \|\hat{\bm{w}}_k^*-\hat{\bm{w}}_j^*\|_2\leq\|\bm{z}-\hat{\bm{w}}_j^*\|_2,
\end{equation}
where $1\leq j\leq n$. Before we proceed, we need to introduce the following lemmas:

\begin{lemma}[\cite{kuijlaars2007separation}]\label{mhe_lemma1}
If $0<s<d-1$ and $\hat{\bm{W}}_n^*=\{\hat{\bm{w}}_1^*,\cdots,\hat{\bm{w}}_n^*\}$ is a MHE solution on $\mathbb{S}^{d-1}$, then for $i=1,2,\cdots,n$, we have that
\begin{equation}
    \frac{1}{n-1}\sum_{j:j\neq i} \frac{1}{\|\hat{\bm{w}}_i^*-\hat{\bm{w}}_j^*\|_2^s}\leq I_s[\sigma_{d-1}]
\end{equation}
where $I_s[\mu]=\int\int\frac{1}{\|\bm{x}-\bm{y}\|_2^s}d\mu(\bm{x})d\mu(\bm{y})$ and $\sigma_{d-1}$ is the normalized probability surface area measure on $\mathbb{S}^{d-1}$.
\end{lemma}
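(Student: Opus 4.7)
The plan is to exploit the global optimality of the MHE configuration point by point, and then pass from a pointwise minimum to an average against $\sigma_{d-1}$.

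First, I would fix an index $i$ and use the optimality of $\hat{\bm{W}}_n^*$ under single-point perturbations. Replacing $\hat{\bm{w}}_i^*$ by any $\bm{x}\in\mathbb{S}^{d-1}$ leaves all interactions among $\{\hat{\bm{w}}_j^*\}_{j\neq i}$ unchanged, so the optimality condition reduces to
\begin{equation*}
\sum_{j:j\neq i}\frac{1}{\|\hat{\bm{w}}_i^*-\hat{\bm{w}}_j^*\|_2^{s}}\;\leq\;\sum_{j:j\neq i}\frac{1}{\|\bm{x}-\hat{\bm{w}}_j^*\|_2^{s}}\qquad\text{for every }\bm{x}\in\mathbb{S}^{d-1}.
\end{equation*}
In other words, $\hat{\bm{w}}_i^*$ itself is a global minimizer on $\mathbb{S}^{d-1}$ of the potential generated by the remaining $n-1$ points (a standard ``equilibrium'' property of optimal Riesz configurations).

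Second, I would use the trivial inequality that the minimum of any function over $\mathbb{S}^{d-1}$ is bounded above by its average against the normalized surface measure. Integrating the right-hand side against $\sigma_{d-1}$ gives
\begin{equation*}
\sum_{j:j\neq i}\frac{1}{\|\hat{\bm{w}}_i^*-\hat{\bm{w}}_j^*\|_2^{s}}\;\leq\;\int_{\mathbb{S}^{d-1}}\sum_{j:j\neq i}\frac{1}{\|\bm{x}-\hat{\bm{w}}_j^*\|_2^{s}}\,d\sigma_{d-1}(\bm{x}).
\end{equation*}
By Fubini and the rotational invariance of $\sigma_{d-1}$, the inner integral $U_s:=\int_{\mathbb{S}^{d-1}}\|\bm{x}-\bm{p}\|_2^{-s}\,d\sigma_{d-1}(\bm{x})$ is the same constant for every $\bm{p}\in\mathbb{S}^{d-1}$, so the right-hand side equals $(n-1)U_s$. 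A second application of Fubini to $I_s[\sigma_{d-1}]=\int\!\!\int\|\bm{x}-\bm{y}\|_2^{-s}\,d\sigma_{d-1}(\bm{x})\,d\sigma_{d-1}(\bm{y})$ shows $I_s[\sigma_{d-1}]=U_s$, and dividing by $n-1$ yields the desired bound.

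The only delicate point is that all quantities must be finite and the integrals swappable. This is where the hypothesis $0<s<d-1$ is essential: the Riesz kernel $\|\bm{x}-\bm{y}\|_2^{-s}$ is locally integrable on $\mathbb{S}^{d-1}$ precisely in this range (standard surface-measure estimate near the diagonal), so $U_s<\infty$ and Fubini applies. The singularity at $\bm{x}=\hat{\bm{w}}_j^*$ in the outer integral is harmless since $\{\hat{\bm{w}}_j^*\}$ is $\sigma_{d-1}$-null. I do not anticipate any other obstacle; the argument is essentially a one-line minimum-versus-average comparison once the single-point optimality is made explicit.
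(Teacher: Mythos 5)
Your argument is correct. Note that the paper itself gives no proof of this lemma: it is imported verbatim from \cite{kuijlaars2007separation} inside the proof of Theorem~\ref{lower_MHE}, so there is nothing in-paper to compare against. Your two-step derivation --- (i) single-point variational optimality, which shows that $\hat{\bm{w}}_i^*$ is a global minimizer on $\mathbb{S}^{d-1}$ of the Riesz potential generated by the other $n-1$ points (the interaction terms not involving index $i$ cancel when comparing the perturbed and unperturbed energies), and (ii) bounding that minimum by the spherical average, which by rotational invariance and Fubini equals $(n-1)\,I_s[\sigma_{d-1}]$ --- is precisely the standard argument in the cited reference. The integrability caveat you flag is the right one and the only delicate point: $0<s<d-1$ is exactly the range in which $\int_{\mathbb{S}^{d-1}}\|\bm{x}-\bm{p}\|_2^{-s}\,d\sigma_{d-1}(\bm{x})$ is finite, and the finiteness of the minimal energy forces the optimal points to be distinct, so the left-hand side is finite as well. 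No gaps.
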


\begin{lemma}[\cite{kuijlaars2007separation}]\label{mhe_lemma2}
We assume $d-2\leq s<d-1$, and then there is a constant $\theta_{s,d}$ and a positive integer $m$ such that for every $\bm{x}\in\mathbb{R}^{d}$ with $\|\bm{x}\|_2=1+n^{-\frac{1}{d-1}}$ and any optimal MHE solution $\hat{\bm{W}}_n^*$ on $\mathbb{S}^{d-1}$, we have
\begin{equation}
    U_s(\bm{x};\hat{\bm{W}}_n^*)\geq I_s[\sigma_{d-1}]-\theta_{s,d}\cdot n^{-1+\frac{s}{d-1}}
\end{equation}
where $n>m$ and $U_s(\bm{x};\hat{\bm{W}}_n^*) :=\frac{1}{n}\sum_{\bm{y}\in\hat{\bm{W}}_n^*}\frac{1}{\|\bm{x}-\bm{y}\|_2^s}$ for $s>0$.
\end{lemma}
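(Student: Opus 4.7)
The plan is to combine a variational inequality at on-sphere test points with a balayage identity that transfers the bound to the off-sphere point $\bm{x}$. Throughout, write $r:=\|\bm{x}\|=1+n^{-1/(d-1)}$ and $I:=I_s[\sigma_{d-1}]$; note $I<\infty$ precisely because $s<d-1$.

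For the variational ingredient, I pick any $\bm{y}_0\in\mathbb{S}^{d-1}\setminus\hat{\bm{W}}_n^*$. Augmenting $\hat{\bm{W}}_n^*$ by $\bm{y}_0$ yields an $(n{+}1)$-point configuration on $\mathbb{S}^{d-1}$ whose energy is at least $\varepsilon_s(\mathbb{S}^{d-1},n+1)$, so
\[
2\sum_{j=1}^{n}\|\bm{y}_0-\hat{\bm{w}}_j^*\|^{-s}\;\geq\;\varepsilon_s(\mathbb{S}^{d-1},n+1)-\varepsilon_s(\mathbb{S}^{d-1},n).
\]
Invoking the classical asymptotic $\varepsilon_s(\mathbb{S}^{d-1},n)=I\cdot n^2+O(n^{1+s/(d-1)})$ of Kuijlaars--Saff, valid precisely in the range $d-2\leq s<d-1$, this yields the on-sphere bound $U_s(\bm{y}_0;\hat{\bm{W}}_n^*)\geq I-C_1\,n^{-1+s/(d-1)}$ for every $\bm{y}_0\in\mathbb{S}^{d-1}\setminus\hat{\bm{W}}_n^*$.

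To transfer this bound to the off-sphere point $\bm{x}$, I would use the $s$-balayage $\delta_{\bm{x}}^*$ of the Dirac mass $\delta_{\bm{x}}$ onto $\mathbb{S}^{d-1}$, a positive (and absolutely continuous) measure that exists because the Riesz $s$-kernel satisfies the continuous maximum principle for $0<s<d$. Its defining property $\int\|\bm{y}-\bm{z}\|^{-s}d\delta_{\bm{x}}^*(\bm{z})=\|\bm{y}-\bm{x}\|^{-s}$ for $\bm{y}\in\mathbb{S}^{d-1}$, evaluated at each $\hat{\bm{w}}_j^*$ and averaged in $j$, produces the representation
\[
U_s(\bm{x};\hat{\bm{W}}_n^*)\;=\;\int_{\mathbb{S}^{d-1}}U_s(\bm{z};\hat{\bm{W}}_n^*)\,d\delta_{\bm{x}}^*(\bm{z}).
\]
Plugging the on-sphere bound into the integrand (the $\delta_{\bm{x}}^*$-null exceptional set $\hat{\bm{W}}_n^*$ being harmless) gives $U_s(\bm{x};\hat{\bm{W}}_n^*)\geq[I-C_1 n^{-1+s/(d-1)}]\cdot\|\delta_{\bm{x}}^*\|$.

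It remains to show $\|\delta_{\bm{x}}^*\|$ is essentially $1$. Pairing the defining identity with $\sigma_{d-1}$ and using that $U_s^{\sigma_{d-1}}\equiv I$ on the sphere (by its rotational symmetry) gives $V_s(r):=\int\|\bm{x}-\bm{y}\|^{-s}d\sigma_{d-1}(\bm{y})=I\cdot\|\delta_{\bm{x}}^*\|$, hence $\|\delta_{\bm{x}}^*\|=V_s(r)/I$. Continuity of the continuous potential $V_s$ across the sphere yields $V_s(r)=I-O(r-1)=I-O(n^{-1/(d-1)})$, and assembling the pieces,
\[
U_s(\bm{x};\hat{\bm{W}}_n^*)\;\geq\; V_s(r)-O(n^{-1+s/(d-1)})\;\geq\; I-\theta_{s,d}\,n^{-1+s/(d-1)},
\]
where the smaller $O(n^{-1/(d-1)})$ correction from $I-V_s(r)$ is absorbed into the target error since $-1/(d-1)\leq -1+s/(d-1)$ whenever $s\geq d-2$. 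The main obstacle is establishing the sharp asymptotic expansion for $\varepsilon_s(\mathbb{S}^{d-1},n)$ in the range $d-2\leq s<d-1$, a substantial result from discrete minimal Riesz energy theory (Kuijlaars--Saff and Hardin--Saff; see the Borodachov--Hardin--Saff monograph for a systematic treatment); the existence and regularity of the balayage measure, while classical, is the other ingredient requiring some potential-theoretic setup.
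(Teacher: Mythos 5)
The paper offers no proof of this lemma --- it is imported verbatim from the cited separation paper --- so your attempt stands on its own. Your architecture (an on-sphere lower bound for the discrete potential, transferred to the exterior point $\bm{x}$ via the $s$-balayage $\delta_{\bm{x}}^*$ of $\delta_{\bm{x}}$ onto $\mathbb{S}^{d-1}$, with $\|\delta_{\bm{x}}^*\|=V_s(r)/I_s[\sigma_{d-1}]$) is a legitimate potential-theoretic route, and it correctly exploits that balayage is available precisely because $s\geq d-2$. But your first ingredient fails. Adding a point gives $2\sum_j\|\bm{y}_0-\hat{\bm{w}}_j^*\|^{-s}\geq\varepsilon_s(\mathbb{S}^{d-1},n+1)-\varepsilon_s(\mathbb{S}^{d-1},n)$, and you then difference the expansion $\varepsilon_s(\mathbb{S}^{d-1},n)=I n^2+O(n^{1+s/(d-1)})$. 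The two $O(n^{1+s/(d-1)})$ error terms do not cancel under subtraction --- their difference is still only $O(n^{1+s/(d-1)})$ --- so after dividing by $2n$ you get $U_s(\bm{y}_0;\hat{\bm{W}}_n^*)\geq I-O(n^{s/(d-1)})$, and since $n^{s/(d-1)}\rightarrow\infty$ this is vacuous. The repair is the other variational move: replace $\hat{\bm{w}}_i^*$ by $\bm{y}_0$ to get $\sum_{j:j\neq i}\|\bm{y}_0-\hat{\bm{w}}_j^*\|^{-s}\geq\sum_{j:j\neq i}\|\hat{\bm{w}}_i^*-\hat{\bm{w}}_j^*\|^{-s}$, then average over $i$ to obtain $U_s(\bm{y}_0;\hat{\bm{W}}_n^*)\geq\varepsilon_s(\mathbb{S}^{d-1},n)/(n(n-1))$; a single application of the one-sided lower bound $\varepsilon_s(\mathbb{S}^{d-1},n)\geq In^2-Cn^{1+s/(d-1)}$ then yields the on-sphere estimate with no differencing.

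A second, smaller defect: the claim $V_s(r)=I-O(r-1)$ is false in exactly the range $d-2\leq s<d-1$ you need. The radial derivative of $V_s$ is controlled by $\int\|\bm{x}-\bm{y}\|^{-s-1}\mathrm{d}\sigma_{d-1}(\bm{y})$, which diverges as $r\downarrow 1$ once $s\geq d-2$, so $V_s$ is not Lipschitz across the sphere; the correct modulus is $I-V_s(r)=O((r-1)^{d-1-s})$ (with a logarithmic factor at $s=d-2$). Fortuitously $(r-1)^{d-1-s}=n^{-(d-1-s)/(d-1)}=n^{-1+s/(d-1)}$, which is exactly the target error, so your absorption step survives once the estimate is corrected --- but as written both the justification (``continuity'') and the exponent are wrong. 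With these two repairs the balayage argument closes; as it stands, the on-sphere step is a genuine gap.
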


Using Lemma~\ref{mhe_lemma1} above, we obtain that

\begin{equation}
\begin{aligned}
    I_s[\sigma_{d-1}]-\frac{1}{n\vartheta(\hat{\bm{W}}_n^*)^s}&\geq \frac{1}{n}\big( \sum_{j:j\neq k} \frac{1}{\| \hat{\bm{w}}_k^* - \hat{\bm{w}}_j^* \|_2^s} -\frac{1}{\|\hat{\bm{w}}_k^*-\hat{\bm{w}}_l^*\|^s_2} \big) \\
    &=\frac{1}{n}\sum_{j:j\neq k,l} \frac{1}{\|\hat{\bm{w}}_k^*-\hat{\bm{w}}_j^*\|_2^s}\\
    &\geq\frac{1}{n}\sum_{j:j\neq k,l}\frac{1}{\|\bm{z}-\hat{\bm{w}}_j^*\|_2^s}\\
    &=U_s(\bm{z};\hat{\bm{W}}_n^*)-\frac{1}{n}\big( \frac{1}{\|\bm{z}-\hat{\bm{w}}_k^*\|_2^s}+\frac{1}{\|\bm{z}-\hat{\bm{w}}_l^*\|_2^s} \big).
\end{aligned}
\end{equation}
Because of $n^{-\frac{1}{d-1}}=\|\bm{z}-\hat{\bm{w}}_k^*\|_2^s\leq\|\bm{z}-\hat{\bm{w}}_l^*\|_2^s$, we have that
\begin{equation}
    I_s[\sigma_{d-1}]-\frac{1}{n\vartheta(\hat{\bm{W}}_n^*)^s}\geq U_s(\bm{z};\hat{\bm{W}}_n^*)-2n^{-1+\frac{s}{d-1}}
\end{equation}
Then according to Lemma~\ref{mhe_lemma2}, we have that
\begin{equation}
    I_s[\sigma_{d-1}]-\frac{1}{n\vartheta(\hat{\bm{W}}_n^*)^s}\geq I_s[\sigma_{d-1}]-(\theta_{s,d}+2)\cdot n^{-1+\frac{s}{d-1}}
\end{equation}
which concludes that $\vartheta(\hat{\bm{W}}_n^*)\geq\lambda_{s,d}\cdot n^{-\frac{1}{d-1}}$
where we define that $\lambda_{s,d}=(\theta_{s,d}+2)^{-\frac{1}{s}}$. Note that, the extended and generalized version of this result can be found in \cite{dragnev2007riesz,brauchart2014riesz,kuijlaars2007separation}. \hfill\qedsymbol

\section{Proof of Theorem~\ref{energy_MHS}}
The theorem comes directly from the result in \cite{leopardi2013discrepancy} that every asymptotically optimal MHS sequence $\{\hat{\bm{W}}_n^*\}_{n=2}^\infty$ of $n$-point configurations on $\mathbb{S}^{d-1}$ is asymptotically optimal MHE solution for any $0<s<d-1$.

\section{Proof of Theorem~\ref{spectral}}
We first introduce the following lemma as the characterization of a unit vector that is uniformly distributed on the unit hypersphere $\mathbb{S}^{d-1}$.
\begin{lemma}[\cite{o2016eigenvectors}]
Let $\bm{v}$ be a random vector that is uniformly distributed on the unit hypersphere $\mathbb{S}^{d-1}$. Then $\bm{v}$ has the same distribution as the following:
\begin{equation}
    \bigg{\{}\frac{u_1}{\sqrt{\sum_{i=1}^du_i^2}},\frac{u_2}{\sqrt{\sum_{i=1}^du_i^2}},\cdots,\frac{u_d}{\sqrt{\sum_{i=1}^du_i^2}}\bigg{\}}
\end{equation}
where $u_1,u_2,\cdots,u_d$ are \emph{i.i.d.} standard normal random variables.
\end{lemma}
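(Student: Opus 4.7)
The plan is to exploit the rotational invariance of the standard Gaussian on $\mathbb{R}^d$ together with the uniqueness of the uniform measure on $\mathbb{S}^{d-1}$ as the only Borel probability measure invariant under the orthogonal group $O(d)$.

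First, I would observe that the joint density of $\bm{u}=(u_1,\ldots,u_d)$ is
\begin{equation*}
p(\bm{u})=(2\pi)^{-d/2}\exp\bigl(-\tfrac{1}{2}\|\bm{u}\|^2\bigr),
\end{equation*}
which depends on $\bm{u}$ only through $\|\bm{u}\|$. Consequently, for every $\bm{Q}\in O(d)$ we have $\bm{Q}\bm{u}\stackrel{d}{=}\bm{u}$. Next, define $\bm{v}:=\bm{u}/\|\bm{u}\|$, which is well-defined almost surely since $\mathbb{P}(\bm{u}=\bm{0})=0$. Because $\|\bm{Q}\bm{u}\|=\|\bm{u}\|$ for orthogonal $\bm{Q}$, the normalization map commutes with the $O(d)$ action, so $\bm{Q}\bm{v}=(\bm{Q}\bm{u})/\|\bm{Q}\bm{u}\|\stackrel{d}{=}\bm{u}/\|\bm{u}\|=\bm{v}$. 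Hence the law of $\bm{v}$ is a Borel probability measure on $\mathbb{S}^{d-1}$ invariant under the entire orthogonal group.

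I would then close the argument by invoking the standard uniqueness statement: the uniform (normalized surface) measure is the unique $O(d)$-invariant Borel probability measure on $\mathbb{S}^{d-1}$. This follows from the uniqueness of normalized Haar measure on compact groups applied to $O(d)$ together with the identification of $\mathbb{S}^{d-1}$ with the homogeneous space $O(d)/O(d-1)$. The transitivity of the $O(d)$-action on $\mathbb{S}^{d-1}$ ensures that any invariant Borel probability measure pushed forward from Haar measure on $O(d)$ coincides with surface measure. Thus $\bm{v}$ is uniformly distributed on $\mathbb{S}^{d-1}$.

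As a backup (or a more elementary alternative if invoking Haar measure is undesirable), I would instead do a direct change of variables to polar coordinates $\bm{u}=r\bm{\omega}$ with $r=\|\bm{u}\|$ and $\bm{\omega}\in\mathbb{S}^{d-1}$. Under this decomposition $d\bm{u}=r^{d-1}\,dr\,d\sigma(\bm{\omega})$, so the Gaussian density factorizes as $(2\pi)^{-d/2}e^{-r^2/2}r^{d-1}\,dr\,d\sigma(\bm{\omega})$; the $r$-marginal (a chi density) and the $\bm{\omega}$-marginal (proportional to $d\sigma$, hence uniform) are independent, which directly identifies the law of $\bm{v}=\bm{\omega}$ as uniform on $\mathbb{S}^{d-1}$. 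The main (small) obstacle either way is a clean justification of the relevant invariance/uniqueness statement: in the group-theoretic route, citing the uniqueness of normalized Haar measure on a compact group; in the polar-coordinate route, verifying the Jacobian decomposition carefully. Both are standard and self-contained.
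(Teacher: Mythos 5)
Your proposal is correct and takes essentially the same route as the paper, which likewise reduces the lemma to the rotational invariance of the standard Gaussian vector. You go further than the paper by explicitly supplying the step the paper leaves implicit---that the uniform measure is the unique $O(d)$-invariant Borel probability measure on $\mathbb{S}^{d-1}$ (or, alternatively, the polar-coordinate factorization)---which makes your version more complete but not materially different in approach.
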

\begin{proof}
The lemma follows naturally from the fact that the Gaussian vector $\{u_i\}_{i=1}^d$ is rotationally invariant.
\end{proof}

Then we consider a random matrix $\tilde{\bm{W}}=\{\tilde{\bm{v}}_1,\cdots,\tilde{\bm{v}}_n\}$ where $\tilde{\bm{v}}_i$ follows the same distribution of $\{u_1,\cdots,u_d\}$. Therefore, it is also equivalent to a random matrix with each element distributed normally. For such a matrix $\tilde{\bm{W}}$, we have from \cite{silverstein1985smallest} that
\begin{equation}\label{sinval_bound}
\begin{aligned}
    \lim_{n\rightarrow\infty}\sigma_{\max}(\tilde{\bm{W}})&=\sqrt{d}+\sqrt{\lambda d}\\
    \lim_{n\rightarrow\infty}\sigma_{\min}(\tilde{\bm{W}})&=\sqrt{d}-\sqrt{\lambda d}
\end{aligned}
\end{equation}
where $\sigma_{\max}(\cdot)$ and $\sigma_{\min}(\cdot)$ denote the largest and the smallest singular value, respectively. 
\par
Then we write the matrix $\bm{W}$ as follows:
\begin{equation}
\begin{aligned}
    \bm{W}&=\tilde{\bm{W}}\cdot\bm{Q}\\
    &=\tilde{\bm{W}}\cdot\begin{bmatrix}
    \frac{1}{\|\tilde{\bm{v}}_1\|_2}& 0 &\cdots& 0\\
    0 & \frac{1}{\|\tilde{\bm{v}}_2\|_2} & \ddots & 0\\
    \vdots & \ddots & \ddots  & \vdots  \\
    0 & \cdots & 0 & \frac{1}{\|\tilde{\bm{v}}_n\|_2}
    \end{bmatrix}
\end{aligned}
\end{equation}
which leads to 
\begin{equation}\label{lim_ineq}
\begin{aligned}
    \lim_{n\rightarrow\infty}\sigma_{\max}(\bm{W})&=\lim_{n\rightarrow\infty}\sigma_{\max}(\tilde{\bm{W}}\cdot\bm{Q})\\
    \lim_{n\rightarrow\infty}\sigma_{\min}(\bm{W})&=\lim_{n\rightarrow\infty}\sigma_{\min}(\tilde{\bm{W}}\cdot\bm{Q})
\end{aligned}.
\end{equation}
\par
We fist assume that for a symmetric matrix $\bm{A}\in\mathbb{R}^{n\times n}$ $\lambda_1(\bm{A})\geq\cdots\geq\lambda_n(\bm{A})$. Then we introduce the following inequalities for eigenvalues:
\begin{lemma}[\cite{marshall1979inequalities}]\label{eigenvalue_ineq}
Let $\bm{G},\bm{H}\in\mathbb{R}^{n\times n}$ be positive semi-definite symmetric, and let $1\leq i_1<\cdots<i_k\leq n$. Then we have that
\begin{equation}
    \prod_{t=1}^k\lambda_{i_t}(\bm{G}\bm{H})\leq\prod_{t=1}^k\lambda_{i_t}(\bm{G})\lambda_t(\bm{H})
\end{equation}
and
\begin{equation}
    \prod_{t=1}^k \lambda_{i_t}(\bm{G}\bm{H})\geq \prod_{t=1}^k\lambda_{i_t}(\bm{G})\lambda_{n-t+1}(\bm{H})
\end{equation}
where $\lambda_i$ denotes the $i$-th largest eigenvalue.
\end{lemma}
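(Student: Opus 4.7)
The plan is to reduce the statement to a single-eigenvalue inequality on the $k$-th exterior power, then invoke the classical Horn multiplicative singular-value inequality, and finally derive the lower bound from the upper bound by an inversion duality. First, I would reduce to the strictly positive-definite case by replacing $\bm{G}$ and $\bm{H}$ with $\bm{G}+\epsilon\bm{I}$ and $\bm{H}+\epsilon\bm{I}$ and letting $\epsilon\downarrow 0$ at the end by continuity of eigenvalues. Under strict positivity, $\bm{G}\bm{H}$ is similar to the symmetric positive-definite matrix $\bm{G}^{1/2}\bm{H}\bm{G}^{1/2}$, so all $\lambda_i(\bm{G}\bm{H})\ge 0$, and writing $\bm{G}^{1/2}\bm{H}\bm{G}^{1/2}=(\bm{G}^{1/2}\bm{H}^{1/2})(\bm{G}^{1/2}\bm{H}^{1/2})^{\top}$ gives the identification $\lambda_i(\bm{G}\bm{H})=\sigma_i(\bm{G}^{1/2}\bm{H}^{1/2})^2$.

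For the upper bound I would invoke the general Horn inequality
\begin{equation*}
\footnotesize
\prod_{t=1}^{k}\sigma_{i_t}(\bm{A}\bm{B})\;\le\;\prod_{t=1}^{k}\sigma_{i_t}(\bm{A})\,\sigma_t(\bm{B})
\end{equation*}
with $\bm{A}=\bm{G}^{1/2}$ and $\bm{B}=\bm{H}^{1/2}$. Squaring both sides and using $\lambda_i(\bm{G}\bm{H})=\sigma_i(\bm{G}^{1/2}\bm{H}^{1/2})^2$, $\lambda_i(\bm{G})=\sigma_i(\bm{G}^{1/2})^2$, $\lambda_i(\bm{H})=\sigma_i(\bm{H}^{1/2})^2$ delivers the first inequality directly. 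Conceptually, Horn's inequality arises because $\prod_{t}\sigma_{i_t}(\cdot)$ is an eigenvalue of the $k$-th compound matrix, the compound respects products via Cauchy--Binet ($C_k(\bm{A}\bm{B})=C_k(\bm{A})\,C_k(\bm{B})$), and so the claim reduces to comparing a single eigenvalue of a product of PSD operators acting on $\Lambda^{k}\mathbb{R}^n$.

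For the lower bound I would use inversion duality. Under strict positive-definiteness, apply the upper bound to $(\bm{G}\bm{H})^{-1}=\bm{H}^{-1}\bm{G}^{-1}$ with the reversed multi-index $j_t:=n-i_{k-t+1}+1$. Converting via $\lambda_j(\bm{M}^{-1})=1/\lambda_{n-j+1}(\bm{M})$, taking reciprocals, and reindexing the product (and using that the product $\prod_{t}\lambda_{n-k+t}(\bm{H})$ equals $\prod_{t}\lambda_{n-t+1}(\bm{H})$ since both range over the $k$ smallest eigenvalues of $\bm{H}$) rearranges exactly to $\prod_{t}\lambda_{i_t}(\bm{G}\bm{H})\ge\prod_{t}\lambda_{i_t}(\bm{G})\,\lambda_{n-t+1}(\bm{H})$. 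The general PSD case is recovered by continuity as $\epsilon\downarrow 0$.

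The main obstacle is establishing the general Horn singular-value inequality with arbitrary indices $i_1<\cdots<i_k$, rather than just the top-$k$ case $(i_t)=(t)$, which would follow immediately from submultiplicativity of the operator norm at the compound level: $\|C_k(\bm{A}\bm{B})\|\le\|C_k(\bm{A})\|\,\|C_k(\bm{B})\|$. For general indices one must construct test $k$-frames inside $\Lambda^{k}\mathbb{R}^n$ that simultaneously witness $\sigma_{i_t}(\bm{A}\bm{B})$ using the top-$t$ right singular directions of $\bm{B}$ while bounding each factor by the $i_t$-th largest singular value of $\bm{A}$; the asymmetric pairing $(i_t,t)$ on the right-hand side is precisely a shadow of this construction. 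I would carry this step out by induction on $k$ combined with the Ky~Fan-type min--max characterization of $\prod_{t=1}^{k}\lambda_t$ on invariant subspaces, which is the only genuinely nontrivial ingredient of the proof.
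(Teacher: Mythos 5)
The paper does not actually prove this lemma; it is quoted verbatim from Marshall--Olkin \cite{marshall1979inequalities} as a known eigenvalue inequality, so there is no in-paper argument to compare against. Your reductions are all correct: the perturbation $\bm{G}\mapsto\bm{G}+\epsilon\bm{I}$, $\bm{H}\mapsto\bm{H}+\epsilon\bm{I}$ with a continuity passage; the identification $\lambda_i(\bm{G}\bm{H})=\lambda_i(\bm{G}^{1/2}\bm{H}\bm{G}^{1/2})=\sigma_i(\bm{G}^{1/2}\bm{H}^{1/2})^2$; the derivation of the upper bound by squaring Horn's inequality applied to $\bm{A}=\bm{G}^{1/2}$, $\bm{B}=\bm{H}^{1/2}$; and the inversion duality for the lower bound (the reindexing $j_t=n-i_{k-t+1}+1$ and the observation that $\prod_t\lambda_{n-k+t}(\bm{H})=\prod_t\lambda_{n-t+1}(\bm{H})$ both check out).

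The one substantive reservation is that your argument bottoms out in the Horn multiplicative singular-value inequality $\prod_{t=1}^k\sigma_{i_t}(\bm{A}\bm{B})\leq\prod_{t=1}^k\sigma_{i_t}(\bm{A})\sigma_t(\bm{B})$ for \emph{arbitrary} index sets, which after squaring is precisely the first inequality of the lemma in disguise; the two statements are essentially equivalent under your symmetrization. So what you have is a correct and clean reduction of the lemma to an equivalent classical theorem, not a self-contained proof. You correctly note that the top-$k$ case $(i_t)=(t)$ follows from $\|C_k(\bm{A}\bm{B})\|\leq\|C_k(\bm{A})\|\,\|C_k(\bm{B})\|$ via Cauchy--Binet on the compound matrix, but the extension to general indices --- the only place where the asymmetric pairing $(i_t,t)$ actually has to be earned --- is left as a sketch (``construct test $k$-frames \ldots induction on $k$ combined with the Ky Fan min--max characterization''), which is too vague to verify. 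If the intent is to cite Horn's inequality as known (it is Theorem 3.3.4 in Horn and Johnson's \emph{Topics in Matrix Analysis}), the proof is fine and arguably more informative than the paper's bare citation; if the intent is a from-scratch proof, the hard step is missing.
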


We first let $1\leq i_1<\cdots<i_k\leq n$. Because $\tilde{\bm{W}}\in\mathbb{R}^{d\times n}$ and $\bm{Q}\in\mathbb{R}^{n\times n}$, we have the following:
\begin{equation}
\begin{aligned}
    \prod_{t=1}^k\sigma_{i_t}(\tilde{\bm{W}}\bm{Q})&=\prod_{t=1}^k\sqrt{\lambda_{i_t}(\tilde{\bm{W}}\bm{Q}\bm{Q}^\top\tilde{\bm{W}}^\top)}\\
    &=\sqrt{\prod_{t=1}^k \lambda_{i_t}(\tilde{\bm{W}}^\top\tilde{\bm{W}}\bm{Q}\bm{Q}^\top)}
\end{aligned}
\end{equation}
by applying Lemma~\ref{eigenvalue_ineq} to the above equation, we have that
\begin{equation}
\begin{aligned}
    \sqrt{\prod_{t=1}^k \lambda_{i_t}(\tilde{\bm{W}}^\top\tilde{\bm{W}}\bm{Q}\bm{Q}^\top)}&\geq \sqrt{\prod_{t=1}^k \lambda_{i_t}(\tilde{\bm{W}}^\top\tilde{\bm{W}})\lambda_{n-t+1}(\bm{Q}\bm{Q}^\top)}\\
    &= \prod_{t=1}^k\sigma_{i_t}(\tilde{\bm{W}})\sigma_{n-t+1}(\bm{Q})
\end{aligned}
\end{equation}
\begin{equation}
\begin{aligned}
    \sqrt{\prod_{t=1}^k \lambda_{i_t}(\tilde{\bm{W}}^\top\tilde{\bm{W}}\bm{Q}\bm{Q}^\top)}&\leq \sqrt{\prod_{t=1}^k \lambda_{i_t}(\tilde{\bm{W}}^\top\tilde{\bm{W}})\lambda_{t}(\bm{Q}\bm{Q}^\top)}\\
    &= \prod_{t=1}^k\sigma_{i_t}(\tilde{\bm{W}})\sigma_{t}(\bm{Q})
\end{aligned}
\end{equation}
Therefore, we have that
\begin{equation}\label{sinval_geq}
    \prod_{t=1}^k\sigma_{i_t}(\tilde{\bm{W}}\bm{Q})\geq\prod_{t=1}^k\sigma_{i_t}(\tilde{\bm{W}})\sigma_{n-t+1}(\bm{Q})
\end{equation}
\begin{equation}\label{sinval_leq}
    \prod_{t=1}^k\sigma_{i_t}(\tilde{\bm{W}}\bm{Q})\leq\prod_{t=1}^k\sigma_{i_t}(\tilde{\bm{W}})\sigma_{t}(\bm{Q})
\end{equation}
Suppose we have $k=1$ and $i_1=n$, then Eq.~\eqref{sinval_geq} gives
\begin{equation}
    \sigma_n(\tilde{\bm{W}}\bm{Q})\geq\sigma_n(\tilde{\bm{W}})\sigma_n(\bm{Q})
\end{equation}
Then suppose we have $k=1$ and $i_1=1$, then Eq.~\eqref{sinval_leq} gives
\begin{equation}
    \sigma_1(\tilde{\bm{W}}\bm{Q})\leq\sigma_1(\tilde{\bm{W}})\sigma_1(\bm{Q})
\end{equation}
Combining the above results with Eq.~\eqref{sinval_bound} and Eq.~\eqref{lim_ineq}, we have that
\begin{equation}
\begin{aligned}
    \lim_{n\rightarrow\infty}\sigma_{\max}(\bm{W})&=\lim_{n\rightarrow\infty} \sigma_{\max}(\tilde{\bm{W}}\cdot\bm{Q})\leq\lim_{n\rightarrow\infty}\big(\sigma_{\max}(\tilde{\bm{W}})\cdot\sigma_{\max}(\bm{Q})\big)=(\sqrt{d}+\sqrt{\lambda d})\cdot\max_i\frac{1}{\|\tilde{\bm{v}}_i\|_1}\\
    \lim_{n\rightarrow\infty}\sigma_{\min}(\bm{W})&=\lim_{n\rightarrow\infty}\sigma_{\min}(\tilde{\bm{W}}\cdot\bm{Q})\geq\lim_{n\rightarrow\infty}\big(\sigma_{\min}(\tilde{\bm{W}})\cdot\sigma_{\min}(\bm{Q})\big)=(\sqrt{d}-\sqrt{\lambda d})\cdot\min_i\frac{1}{\|\tilde{\bm{v}}_i\|_1}
\end{aligned}
\end{equation}
which concludes the proof.\hfill\qedsymbol

\newpage
\section{Hyperspherical Uniformity from Zero-mean Gaussian Distributions}\label{uniform_sphere}

We show that zero-mean equal-variance Gaussian distributed vectors (after normalized to norm $1$) are uniformly distributed over the unit hypersphere with the following theorem.

\begin{theorem}\label{sphereuniform}
The normalized vector of Gaussian variables is uniformly distributed on the sphere. Formally, let $x_1,x_2,\cdots,x_n\sim \mathcal{N}(0,1)$ and be independent. Then the vector
\begin{equation}
    \bm{x}=\bigg{[} \frac{x_1}{z},\frac{x_2}{z},\cdots,\frac{x_n}{z} \bigg{]}
\end{equation}
follows the uniform distribution on $\mathbb{S}^{n-1}$, where $z=\sqrt{x_1^2+x_2^2+\cdots+x_n^2}$ is a normalization factor.
\end{theorem}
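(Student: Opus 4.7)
The plan is to exploit the rotational invariance of the standard multivariate Gaussian. The joint density of $(x_1,\ldots,x_n)$ is proportional to $\exp(-\tfrac{1}{2}\sum_{i=1}^n x_i^2)$, which is a function of $\|\bm{y}\|$ alone, where $\bm{y}=(x_1,\ldots,x_n)$. Consequently, for any orthogonal matrix $\bm{Q}\in O(n)$, the random vector $\bm{Q}\bm{y}$ has the same distribution as $\bm{y}$. This is the key structural fact I will build on.

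Next, I would verify that the radial part $z=\|\bm{y}\|$ and the angular part $\bm{x}=\bm{y}/\|\bm{y}\|$ are well-defined almost surely (the event $\bm{y}=\bm{0}$ has Lebesgue measure zero, hence probability zero). Since $\bm{Q}\bm{y}/\|\bm{Q}\bm{y}\| = \bm{Q}(\bm{y}/\|\bm{y}\|)=\bm{Q}\bm{x}$, the rotational invariance of $\bm{y}$ transfers directly to the angular part: for every $\bm{Q}\in O(n)$, $\bm{Q}\bm{x}\stackrel{d}{=}\bm{x}$. Thus the law of $\bm{x}$ is an $O(n)$-invariant Borel probability measure on $\mathbb{S}^{n-1}$.

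The proof then concludes by invoking the uniqueness of the Haar (i.e., rotation-invariant) probability measure on $\mathbb{S}^{n-1}$: the only such measure is the normalized uniform surface measure $u$. Hence $\bm{x}\sim u$.

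If I preferred a self-contained computational argument instead of citing uniqueness of Haar measure, I would switch to spherical coordinates $\bm{y}=r\bm{\omega}$ with $r\geq 0$ and $\bm{\omega}\in\mathbb{S}^{n-1}$, whose Jacobian yields $\mathrm{d}\bm{y}=r^{n-1}\,\mathrm{d}r\,\mathrm{d}\sigma(\bm{\omega})$ where $\sigma$ is the surface measure. Substituting into the Gaussian density shows that the joint density factors as a function of $r$ times $\mathrm{d}\sigma(\bm{\omega})$, so $r$ and $\bm{\omega}$ are independent and $\bm{\omega}=\bm{x}$ is distributed according to the normalized surface measure, i.e., uniformly. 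The main (minor) obstacle is handling the measure-zero exceptional set $\bm{y}=\bm{0}$ cleanly; otherwise both routes are routine once rotational invariance is noted.
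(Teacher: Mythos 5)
Your proof is correct and follows essentially the same route as the paper's: rotational invariance of the standard Gaussian transfers to the normalized vector, whose law is therefore the unique $O(n)$-invariant probability measure on $\mathbb{S}^{n-1}$. You are in fact slightly more careful than the paper, which leaves the uniqueness-of-the-invariant-measure step and the measure-zero event $\bm{y}=\bm{0}$ implicit.
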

\begin{proof}
A random variable has distribution $\mathcal{N}(0,1)$ if it has the density function
\begin{equation}
    f(x)=\frac{1}{\sqrt{2\pi}}e^{-\frac{1}{2}x^2}.
\end{equation}
A $n$-dimensional random vector $\bm{x}$ has distribution $\mathcal{N}(0,1)$ if the components are independent and have distribution $\mathcal{N}(0,1)$ each. Then the density of $\bm{x}$ is given by
\begin{equation}
    f(x)=\frac{1}{(\sqrt{2\pi})^n}e^{-\frac{1}{2}\langle x,x\rangle}.
\end{equation}
Then we introduce the following lemma (Lemma~\ref{lemma_sphereuniform}) about the orthogonal-invariance of the normal distribution.
\begin{lemma}\label{lemma_sphereuniform}
Let $\bm{x}$ be a $n$-dimensional random vector with distribution $\mathcal{N}(0,1)$ and $\bm{U}\in\mathbb{R}^{n\times n}$ be an orthogonal matrix ($\bm{U}\bm{U}^\top =\bm{U}^\top\bm{U}=\bm{I} $). Then $\bm{Y}=\bm{U}\bm{x}$ also has the distribution of $\mathcal{N}(0,1)$.
\end{lemma}
\begin{proof}
For any measurable set $A\subset\mathbb{R}^n$, we have that
\begin{equation}
\begin{aligned}
    P(Y\in A)&= P(X\in U^\top A)\\
    &=\int_{U^\top A}\frac{1}{(\sqrt{2\pi})^n} e^{-\frac{1}{2}\langle x,x \rangle}\\
    &=\int_A\frac{1}{(\sqrt{2\pi})^n}e^{-\frac{1}{2}\langle Ux, Ux \rangle}\\
    &=\int_A\frac{1}{(\sqrt{2\pi})^n}e^{-\frac{1}{2}\langle x, x \rangle}
\end{aligned}
\end{equation}
because of orthogonality of $U$. Therefore the lemma holds.
\end{proof}

Because any rotation is just a multiplication with some orthogonal matrix, we know that normally distributed random vectors are invariant to rotation. As a result, generating $\bm{x}\in\mathbb{R}^n$ with distribution $\mathbb{N}(0,1)$ and then projecting it onto the hypersphere $\mathbb{S}^{n-1}$ produces random vectors $U=\frac{\bm{x}}{\|\bm{x}\|}$ that are uniformly distributed on the hypersphere. Therefore the theorem holds.
\end{proof}

\section{Orthogonality vs. Orthonormality}

In the paper, we sometimes use the term ``orthogonality'' and ``orthonormality'' interchangeably, since we are mostly considering the points lying in $\mathbb{S}^d$. Hyperspherical uniformity only concerns with the angles among points (\eg, neurons), because all the magnitude are normalized to one before entering any hyperspherical uniformity objective. Therefore strictly speaking, orthogonality is a more appropriate comparison to hyperspherical uniformity.

\newpage
\section{Experimental Details}\label{exp_detail}

\begin{table}[h]
    \renewcommand{\captionlabelfont}{\scriptsize}
    \newcommand{\tabincell}[2]{\begin{tabular}{@{}#1@{}}#2\end{tabular}}
    \centering
    \setlength{\abovecaptionskip}{5pt}
    \setlength{\belowcaptionskip}{0pt}
    \scriptsize
    \begin{tabular}{|c|c|c|c|c|}
        \hline
        Layer & CNN-9 for CIFAR-100 & ResNet-32 for CIFAR-100 & ResNet-18 for ImageNet-2012 \\
        \hline\hline
        Conv0.x & N/A & [3$\times$3, 64]  & \tabincell{c}{[7$\times$7, 64], Stride 2\\3$\times$3, Max Pooling, Stride 2} \\\hline
        Conv1.x & \tabincell{c}{[3$\times$3, 64]$\times$3\\2$\times$2 Max Pooling, S2} & $\left[\begin{aligned} &3\times 3, 64\\&3\times3, 64\end{aligned}\right]\times 5$ & $\left[\begin{aligned} &3\times 3, 64\\&3\times3, 64\end{aligned}\right]\times 2$ \\ \hline
        Conv2.x  & \tabincell{c}{[3$\times$3, 128]$\times$3\\2$\times$2 Max Pooling, S2} & $\left[\begin{aligned} &3\times 3, 128\\&3\times3, 128\end{aligned}\right]\times 5$  & $\left[\begin{aligned} &3\times 3, 128\\&3\times3, 128\end{aligned}\right]\times 2$ \\\hline
        Conv3.x  & \tabincell{c}{[3$\times$3, 256]$\times$3\\2$\times$2 Max Pooling, S2} & $\left[\begin{aligned} &3\times 3, 256\\&3\times3, 256\end{aligned}\right]\times 5$  & $\left[\begin{aligned} &3\times 3, 256\\&3\times3, 256\end{aligned}\right]\times 2$  \\\hline
        Conv4.x & N/A & N/A & $\left[\begin{aligned} &3\times 3, 512\\&3\times3, 512\end{aligned}\right]\times 2$  \\\hline
        Final & 256-Dim Fully Connected & \multicolumn{2}{c|}{Average Pooling}  \\\hline
    \end{tabular}
    \caption{\scriptsize  Our CNN and ResNet architectures with different convolutional layers. Conv0.x, Conv1.x, Conv2.x, Conv3.x and Conv4.x denote convolution units that may contain multiple convolutional layers, and residual units are shown in double-column brackets. Conv1.x, Conv2.x and Conv3.x usually operate on different size feature maps. These networks are essentially similar to \cite{simonyan2014very} and \cite{he2016deep}, but with different number of filters in each layer. The downsampling is performed by convolutions with a stride of 2. E.g., [3$\times$3, 64]$\times$4 denotes 4 cascaded convolution layers with 64 filters of size 3$\times$3, and S2 denotes stride 2.}\label{cnn_netarch}
\end{table}

\begin{table}[h]
\setlength{\abovecaptionskip}{2pt}
\setlength{\belowcaptionskip}{5pt}
\renewcommand{\captionlabelfont}{\scriptsize}
    \centering
    \begin{subtable}{.49\linewidth}
        \centering
        \scriptsize
        {\begin{tabular}{|c|}
            \hline
            $z\in \mathbb{R}^{128} \sim \mathcal{N}(0, I)$ \\
            \hline
            dense $\rightarrow$ $M_g$ $\times$ $M_g$ $\times$ 512 \\
            \hline
            4$\times$4, stride=2 deconv. BN 256 ReLU\\
            \hline
            4$\times$4, stride=2 deconv. BN 128 ReLU\\
            \hline
            4$\times$4, stride=2 deconv. BN 64 ReLU\\
            \hline
            3$\times$3, stride=1 conv. 3 Tanh\\
            \hline
        \end{tabular}}
        \caption{\scriptsize \label{tab:gen}Generator ($M_g=4$  for CIFAR10).}
    \end{subtable}
    \begin{subtable}{.49\linewidth}
        \centering
        \scriptsize
        {\begin{tabular}{|c|}
            \hline
            RGB image $x\in \mathbb{R}^{M\times M \times 3}$ \\
            \hline
            3$\times$3, stride=1 conv 64 lReLU\\
            4$\times$4, stride=2 conv 64 lReLU\\
            \hline
            3$\times$3, stride=1 conv 128 lReLU\\
            4$\times$4, stride=2 conv 128 lReLU\\
            \hline
            3$\times$3, stride=1 conv 256 lReLU\\
            4$\times$4, stride=2 conv 256 lReLU\\
            \hline
            3$\times$3, stride=1 conv. 512 lReLU\\
            \hline
            dense $\rightarrow$ 1 \\
            \hline
        \end{tabular}}
        \caption{\scriptsize \label{tab:dis_deep}Discriminator ($M=32$ CIFAR10).}
    \end{subtable}
    \caption{\scriptsize \label{gan_netarch} GAN architecture for image generation on CIFAR-10. The architectures mostly follow \cite{miyato2018spectral}.}
\end{table}

\textbf{General}. For MHE, we use half-space MHE with $s=2$. For the unrolling of MHP and MHC, we use one-step gradient descent to approximate the inner optimization. For MHC, we use the relaxed formulation with $\gamma=5$.  For MGD, we use Gaussian kernel with $\epsilon=1$. Typically, we search the best weighting hyperparameter for all the regularizations from $10^{-8}$ to $10^{7}$ (with 10 as the step size).

\textbf{Multilayer perceptron}. We conduct hand-written digit recognition task on MNIST with a three-layer multilayer perceptron following this repository\footnote{\url{https://github.com/hwalsuklee/tensorflow-mnist-MLP-batch_normalization-weight_initializers}} . The size of each digit image is $28\times 28$, which is 784 dimensions after flattened. Both hidden layers have 256 output dimensions, \ie, 256 neurons. The output layer will output 10 logits for classification. Finally, we use a cross-entropy loss with softmax function. We use the momentum SGD optimizer with learning rate 0.01, momentum 0.9 and batch size 100. The training stops at 100 epochs. 

\textbf{Convolutional neural networks}. The network architectures used in the main paper are specified in Table~\ref{cnn_netarch}. For all experiments, we use the momentum SGD optimizer with momentum 0.9. For CIFAR-100, we set the mini-batch size as 128. The learning rate starts at 0.1, and is divided by 10 when the performance is saturated. For ImageNet-2012, we use the mini-batch size 128 and the training starts with learning rate 0.1. The learning rate is divided by 10 when the performance is saturated, and the training is terminated at 700k iterations. The structure of ResNet-18 mostly follows \cite{he2016deep}. Note that, for all the methods in our experiments, we always use the best possible hyperparameters for the corresponding regularization (via cross-validation) to make sure that the comparison is fair. The baseline has exactly the same training settings as the others. Standard $\ell_2$ weight decay ($\thickmuskip=2mu \medmuskip=2mu 5e-4$) is applied by default to all the methods. 

\textbf{Graph networks}. We implement the all the hyperspherical uniformity regularizations for GCN in the official repository\footnote{\url{https://github.com/tkipf/gcn}}. All the hyperparameter settings exactly follow this official repository to ensure a fair comparison. 

\textbf{Point cloud networks}. To simplify the comparison and remove all the bells and whistles, we use a vanilla PointNet (without T-Net) as our backbone network. We apply OPT to train the MLPs in PointNet. We follow the same experimental settings as \cite{qi2017pointnet} and evaluate on the ModelNet-40 dataset~\cite{wu20153d}. We exactly follow the same setting in the original paper~\cite{qi2017pointnet} and the official repositories\footnote{\url{https://github.com/charlesq34/pointnet}}. Specifically, we use the hyperspherical uniformity regularizations to regularize all the $\thickmuskip=2mu \medmuskip=2mu 1\times1$ convolution layers and the fully connected layer (except the final classifier). For the experiments, we set the point number as 1024 and mini-batch size as 32. We use the Adam optimizer with initial learning rate 0.001. The learning rate will decay by 0.7 every 200k iterations, and the training is terminated at 250 epochs.

\textbf{Generative adversarial networks}. The architecture we use for the GAN experiments is shown in Table~\ref{gan_netarch}. For fair comparison, all the hyperparameter settings exactly follow \cite{miyato2018spectral}. We use leaky ReLU (LReLU) in the newtork and set the slopes of LReLU functions to $0.1$.

\subsection{Experimental details for Fig.~\ref{separation_exp}}\label{exp_detail_fig1}

For the experiment in Fig.~\ref{separation_exp}, we use 200 3-dimensional neurons. The momentum SGD optimizer (with momentum 0.9) is used to optimize these hyperspherical uniformity objectives. The learning rate starts at 0.01 and is divided by 10 at 5k iterations. The optimization stops at 8k iterations. In Fig.~\ref{separation_exp}(a), the y-axis denotes the value of hyperspherical energy. In Fig.~\ref{separation_exp}(b), the y-axis denotes the value of separation distance. We did not visualize MHP, since the true objective value of MHP is difficult (also time-consuming) to compute. For MHE, we use half-space MHE with $s=2$. For MHC, we use one-step gradient descent to approximate the inner optimization and also adopt the relaxed formulation with $\gamma=5$. For MGD, we use Gaussian kernel with $\epsilon=1$.

\subsection{Experimental details for Fig.~\ref{3d_sphere}}

For the visualization experiment in Fig.~\ref{3d_sphere}, we optimize 100 3-dimensional neurons on the unit sphere. The training hyperparameters are the same as Section~\ref{exp_detail_fig1}.

\end{appendix}

\end{document}